\definecolor{EFK}{RGB}{32, 178, 170}
\definecolor{EKF-B}{RGB}{255, 140, 0}
\definecolor{EKF-IW}{RGB}{220, 20, 60}
\definecolor{OGD}{RGB}{138, 43, 226}
\definecolor{WLF-IMQ}{RGB}{30, 144, 255}
\definecolor{WLF-MD}{RGB} {229, 193, 0} 
\definecolor{Hub-EnKF}{RGB}{0, 116, 0}
\definecolor{WS-EnKF}{RGB}{198, 122, 55}
\definecolor{tabblue}{RGB}{31, 119, 180}
\definecolor{taborange}{RGB}{255, 127, 14}
\newcommand{\mWlfMd}{{\color{WLF-MD} \texttt{\textbf{WoLF-TMD}}}\xspace} 
\newcommand{\mWlfImq}{{\color{WLF-IMQ} \texttt{\textbf{WoLF-IMQ}}}\xspace}
\newcommand{\mAgamenoni}{{\color{EKF-IW} \texttt{\textbf{KF-IW}}}\xspace}
\newcommand{\mWang}{{\color{EKF-B} \texttt{\textbf{KF-B}}}\xspace}
\newcommand{\mEnkf}{{\color{EFK} \texttt{\textbf{EnKF}}}\xspace}
\newcommand{\mWEnkfHard}{\texttt{\textbf{AP-EnKF}}\xspace}
\newcommand{\mWEnkfSoft}{{\color{WS-EnKF} \texttt{\textbf{PP-EnKF}}}\xspace}
\newcommand{\mHubEnkf}{{\color{Hub-EnKF} \texttt{\textbf{Hub-EnKF}}}\xspace}
\newcommand{\mkf}{{\color{EFK} \texttt{\textbf{KF}}}\xspace}
\newcommand{\mkfExtended}{{\color{EFK} \texttt{\textbf{EKF}}}\xspace}
\newcommand{\ogd}{{\color{OGD} \texttt{\textbf{OGD}}}\xspace}
\newcommand{\mAgamenoniExtended}{{\color{EKF-IW} \texttt{\textbf{EKF-IW}}}\xspace}
\newcommand{\mWangExtended}{{\color{EKF-B} \texttt{\textbf{EKF-B}}}\xspace}
\newcommand{\eat}[1]{} 
\newcommand{\removewhitespace}[1][-10mm]{\vspace*{#1}}
\theoremstyle{plain}
\newtheorem{theorem}{Theorem}[section]
\newtheorem{proposition}[theorem]{Proposition}
\newtheorem{lemma}[theorem]{Lemma}
\theoremstyle{definition}
\theoremstyle{remark}
\icmltitlerunning{Outlier-robust Kalman Filtering through Generalised Bayes}
\newcommand{\params}{{\boldsymbol{\theta}}}
\newcommand{\normdist}[3]{{\cal N}(#1\,\vert\, #2,\,#3)}
\newcommand{\expectation}[2]{\mathbb{E}_{#1}\left[#2\right]}
\newcommand{\argmax}{\operatornamewithlimits{argmax}}
\newcommand{\nparams}{m}
\newcommand{\nout}{d}
\newcommand{\Diag}{\mathrm{Diag}}
\newcommand{\real}{\sR}
\newcommand{\T}{\intercal}
\newcommand{\trans}{\T}
\newcommand{\myvec}[1]{\mathbf{#1}}
\newcommand{\myvecsym}[1]{\boldsymbol{#1}}
\newcommand{\algorithmfootnote}[2][\footnotesize]{%
  \let\old@algocf@finish\@algocf@finish
  \def\@algocf@finish{\old@algocf@finish
    \leavevmode\rlap{\begin{minipage}{\linewidth}
    #1#2
    \end{minipage}}%
  }%
}
\def\1{\bm{1}}
\newcommand{\gauss}{\mathcal{N}}
\newcommand{\expect}[1]{\mathbb{E}\left[{#1}\right]} 
\newcommand{\expectQ}[2]{\mathbb{E}_{{#2}}\left[ {#1} \right]} 
\newcommand{\varQ}[2]{\mathbb{V}_{{#2}}\left[ {#1}\right]}
\newcommand{\vepsilon}{\myvecsym{\epsilon}}
\newcommand{\veta}{\myvecsym{\eta}}
\newcommand{\vmu}{\myvecsym{\mu}}
\newcommand{\vlambda}{\myvecsym{\lambda}}
\newcommand{\vLambda}{\myvecsym{\Lambda}}
\newcommand{\vphi}{\myvecsym{\phi}}
\newcommand{\vvarphi}{\myvecsym{\varphi}}
\newcommand{\vPhi}{\myvecsym{\Phi}}
\newcommand{\vPsi}{\myvecsym{\Psi}}
\newcommand{\vtheta}{\myvecsym{\theta}}
\newcommand{\vSigma}{\myvecsym{\Sigma}}
\newcommand{\vb}{\myvec{b}}
\newcommand{\ve}{\myvec{e}}
\newcommand{\vm}{\myvec{m}}
\newcommand{\vu}{\myvec{u}}
\newcommand{\vv}{\myvec{v}}
\newcommand{\vw}{\myvec{w}}
\newcommand{\vx}{\myvec{x}}
\newcommand{\vy}{\myvec{y}}
\newcommand{\vz}{\myvec{z}}
\def\vtheta{{\bm{\theta}}}
\def\vb{{\bm{b}}}
\def\ve{{\bm{e}}}
\def\vm{{\bm{m}}}
\def\vu{{\bm{u}}}
\def\vv{{\bm{v}}}
\def\vw{{\bm{w}}}
\def\vx{{\bm{x}}}
\def\vy{{\bm{y}}}
\def\vz{{\bm{z}}}
\newcommand{\vA}{\myvec{A}}
\newcommand{\vB}{\myvec{B}}
\newcommand{\vF}{\myvec{F}}
\newcommand{\vH}{\myvec{H}}
\newcommand{\vI}{\myvec{I}}
\newcommand{\vK}{\myvec{K}}
\newcommand{\vQ}{\myvec{Q}}
\newcommand{\vR}{\myvec{R}}
\newcommand{\vS}{\myvec{S}}
\def\sR{{\mathbb{R}}}
\DeclareMathOperator{\Tr}{Tr}
\begin{document}

\twocolumn[
\icmltitle{
Outlier-robust Kalman Filtering through Generalised Bayes
}



\icmlsetsymbol{equal}{*}

\begin{icmlauthorlist}
\icmlauthor{Gerardo Duran-Martin}{qmul,omi}
\icmlauthor{Matias Altamirano}{ucl}
\icmlauthor{Alexander Y. Shestopaloff}{qmul,canada}
\icmlauthor{Leandro S\'anchez-Betancourt}{omi,oxmath}
\icmlauthor{Jeremias Knoblauch}{ucl}
\icmlauthor{Matt Jones}{uboulder}
\icmlauthor{François-Xavier Briol}{ucl}
\icmlauthor{Kevin Murphy}{google}
\end{icmlauthorlist}

\icmlaffiliation{qmul}{School of Mathematical Sciences, Queen Mary University, London, UK}
\icmlaffiliation{ucl}{Department of Statistical Science, University College London, London, United Kingdom}
\icmlaffiliation{omi}{Oxford-Man Institute of Quantitative Finance, University of Oxford, UK}
\icmlaffiliation{oxmath}{Mathematical Institute, University of Oxford, UK}
\icmlaffiliation{uboulder}{Institute for Cognitive Science, University of Colorado Boulder, US}
\icmlaffiliation{google}{Google DeepMind}
\icmlaffiliation{canada}{Department of Mathematics and Statistics, Memorial University of Newfoundland, St. John’s, NL, Canada}
\icmlcorrespondingauthor{Gerardo Duran-Martin}{g.duranmartin@qmul.ac.uk}

\icmlkeywords{Machine Learning, ICML, filtering, online learning, robust}

\vskip 0.3in
]



\printAffiliationsAndNotice{}  

\begin{abstract}
We derive a novel, provably robust, and closed-form Bayesian update rule 
for online filtering in state-space models
in the presence of outliers and misspecified measurement models.
Our method combines generalised Bayesian inference
with filtering methods such as the extended and ensemble Kalman filter.
We use the former to show robustness and the latter to ensure 
computational efficiency in the case of nonlinear models.
Our method matches or outperforms
other robust filtering methods (such as those based on variational Bayes) at a much lower computational cost.
We show this empirically on a range of filtering problems
with outlier measurements,
such as object tracking, state estimation in high-dimensional chaotic systems, and online learning of neural networks.
\end{abstract}

\section{Introduction}

Probabilistic state-space models (SSMs) are widely used to address problems in
time-series forecasting, online learning, tracking problems, and signal processing.
A key challenge in SSMs is to perform online (sequential) posterior inference, 
also known as Bayesian filtering. 
If the model is linear and the and the dynamics are known, 
then the Kalman filter (KF) algorithm is the optimal filter in terms of
minimal mean squared error  \citep{morris1976}.
If the model is not linear, 
approximate inference methods must be used.
Although there are a number of techniques available,
in this paper we focus
on the extended Kalman filter (EKF) 
\citep[see e.g.,][]{Sarkka23}
and the ensemble Kalman filter (EnKF) \citep[see e.g.,][]{roth2017enekf},
because they both admit closed-form Bayesian updates to Gaussian posterior approximations
and scale to high dimensions.

\eat{
a common approach to obtain closed-form updates is
to linearise the measurement and state functions around the prior predictive mean;
this is known as the extended Kalman filter (EKF).
This algorithm appears often in the machine learning literature.
For instance, the first use of the EKF for training neural networks dates back to \citet{singhal1989}.
Recently, the EKF was linked to a number of
gradient descent algorithms such as
Adam \cite{aitchison2020bayesian},
natural gradient descent,
and online Newton \cite{RVGA, Ollivier2018}.
The EKF has also been adapted to train modern neural network architectures \cite{titsias2023kalman, lofi}.
Of course, their applicability goes beyond optimisation techniques and applies
to various filtering problems.
For further reference, see the following papers from:
the machine learning literature \citep{boustati2020generalised, Chang2022, Duran-Martin2022, bencomo2023implicit},
the finance literature \citep{cartea2023toxic-flow, cartea2023bandits},
the statistics literature \citep{LRVGA}, and
the engineering literature \citep{pearson2002, huang2020enkf}.
}

\begin{figure}
    \centering
    \includegraphics[width=0.9\columnwidth]{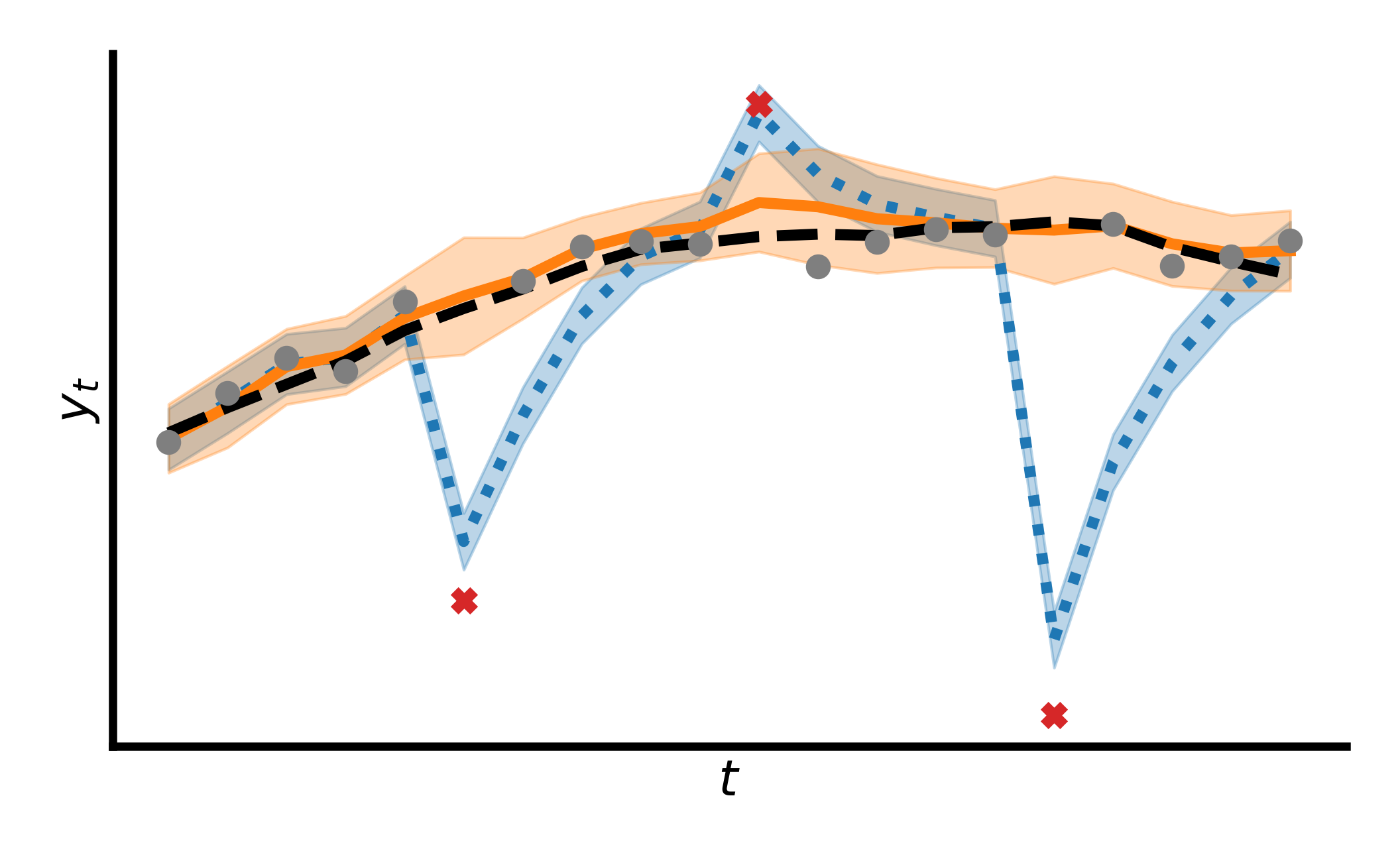}
    \removewhitespace[-6mm]
    \caption{
        First state component of the SSM \eqref{eq:noisy-2d-ssm}.
        The grey dots are measurements sampled from \eqref{eq:noisy-2d-ssm} and
        the red crosses are measurements sampled from an outlier measurement process.
        The \textcolor{tabblue}{dotted blue line} shows the KF posterior mean estimate and
        the \textcolor{taborange}{solid orange line} shows our proposed WoLF posterior mean estimate.
        The regions around the posterior mean cover two standard deviations.
        For comparison, the dashed black line shows the true sampled state process.
    }
    \label{fig:intro-image}
\end{figure}

In a Bayesian setting, the KF and KF-like methods typically use a Gaussian measurement model for computational convenience.
However, a weakness of this assumption is that extreme observations, such as outliers, are not well modelled by Gaussians, leading to model misspecification
\citep{grubbs1969, hampel2011robust}.
Moreover, in practice,
the true dynamics are often different from the one assumed by the filter.
This could be because the measurements are corrupted,
or because the user does not have access to the
measurement function.
For example, in tracking problems, sensor measurement errors record the wrong position of an object;
similarly, in economics and finance, there are instances of data providers sending
time series with erroneous values
\citep[see e.g.][for a survey]{liu2020data}.

There is a large literature that studies the filtering problem in the presence of
outliers and misspecified measurement models\footnote{Alternatively, a measurement model can be referred to as an observation model.}.
Some approaches extend the state space by introducing hierarchical
priors on the measurement model.
Since closed-form estimation of the state is not generally feasible,
a factorised variational Bayes (VB) approach can be employed to 
obtain a fixed-point solution
\citep[see e.g.,][]{ting2007, Agamennoni2012, Huang2016, nurminen2015, wang2018, piche2012}.
Alternatively, Huberised (E)KF-type methods are optimisation-based approaches
to filtering, which minimise the Huber loss;
see \citet{boncelet1983, karlgaard2015nonlinear, das2023robust} and the references therein.
Recently, \citet{boustati2020generalised} proposed a particle filter (PF)
approach that uses generalised Bayes (GB)
to handle model misspecification from outliers;
this builds on work showing that GB updates
are theoretically sound and robust to outliers and model 
misspecification
\citep[see, e.g.,][]{brissi2016, Jewson2018, knoblauch2019generalized,  fong2021martingale, Jewson2021,husain2022adversarial, matsubara2022generalised, matsubara2021robust}.
However, these approaches can be slow, due to the need
to perform iterative optimisation per step (for VB) \citep {knoblauch2018doubly},
or the need to use a large number of samples
(for PF) \citep{boustati2020generalised}.
See Appendix \ref{sec:related-overview} for an overview of a number of these methods.

In this paper, we propose a novel approach that
tackles the filtering problem
in the presence of outliers and measurement
model misspecification.
Our method is based on the GB approach 
where one replaces the log-likelihood of the measurement process
with a loss function.
We call our method the \emph{weighted observation likelihood filter (WoLF)}
because it uses a weighted log-likelihood as loss.
A key advantage of this choice is that we can have  closed-form (conjugate) update equations
that compute an approximate Gaussian posterior.
We derive WoLF variants for the KF, the EKF, and the ensemble Kalman Filter (EnKF).

Our approach has several key advantages
over prior work:
(i) it is fast and has similar computational cost to the KF thanks to closed-form updates,
(ii) it is flexible to the form of misspecification,
(iii) it is provably robust to outliers, and
(iv) it is easy to implement and straightforward to apply to other filtering methods.
See Figure \ref{fig:intro-image} for an illustrative example.

The remainder of the paper proceeds as follows:
in Section \ref{sec:filtering-intro} we introduce the filtering problem
and some common algorithms to approximate a Gaussian posterior.
In Section \ref{sec:method} we
present our method and
derive WoLF variants to the KF, the EKF, and the EnKF.
Next, in Section \ref{sec:theory} we
show that for certain choices of weighting functions,
our method is provably robust.
Finally, in Section \ref{sec:experiments}, we show empirically that our method
matches and in some cases outperforms previous robust filtering methods at a lower
running time.
We consider a variety of filtering problems, including
2d-tracking,
state estimation in high-dimensional chaotic systems,
and  online learning of neural networks.
Our code can be found at
\url{https://github.com/gerdm/weighted-likelihood-filter}.

\section{Background: Filtering in SSMs}
\label{sec:filtering-intro}

We briefly review Kalman filtering and the main extensions we consider in this work.
In what follows, $\nparams, \nout\in\mathbb{N}$ are the dimensions
of the state and measurement processes, and $p(\cdot)$
is used for densities.
Given an initial state $\vtheta_0 \in \real^\nparams$, a Markovian state-space model (SSM) is defined as
\begin{align}
    \vtheta_t &= f_t(\vtheta_{t-1}) + \vphi_t,\label{eq:ssm-latent}\\
    \vy_t &= h_t(\vtheta_t) + \vvarphi_t, \label{eq:ssm-measurement}
\end{align}
for $t \in \{1,\ldots,T\}$.
Here, $\vtheta_t \in \real^{\nparams}$
is the (latent) state vector,
$\vy_t \in \real^\nout$
is the (observed) measurement vector,
$f_t: \real^{\nparams}\to\real^\nparams$ is the dynamics function,
$h_t: \real^\nparams \to \real^\nout$ is the measurement function,\footnote{
In many applications, the function $h_t$ is modulated by an exogenous feature vector $\vx_t$. 
See e.g., Section \ref{subsec:uci-corrupted}.
}
$\vphi_t$ is a zero-mean Gaussian-distributed random vector with known covariance matrix $\vQ_t$,
and
$\vvarphi_t$ is any zero-mean random vector representing the measurement noise.
The meaning of the state depends on the application;
for example, it can be
the position of an object,
the state of the atmosphere,
or the weights of a neural network,
as we will see in the results section.

The filtering of an SSM consists of two steps: the predict step and the update step.
Given the result of a previous update step (i.e., the iterative posterior)
$p(\vtheta_{t-1} \vert \vy_{1:t-1})$, the predict step estimates the prior predictive distribution
\begin{equation}\label{eq:filtering-predict}
    p(\vtheta_t \vert \vy_{1:t-1}) = \int p(\vtheta_t \vert \vtheta_{t-1}) p(\vtheta_{t-1} \vert \vy_{1:t-1}) {\rm d}\vtheta_{t-1},
\end{equation}
and the update step estimates the new posterior distribution
\begin{equation}\label{eq:filtering-update}
    p(\vtheta_t \vert \vy_{1:t}) \propto p(\vy_t \vert \vtheta_t) p(\vtheta_t \vert \vy_{1:t-1}).
\end{equation}
where $\propto$ indicates equality up to a multiplicative normalisation constant.

\subsection{The Kalman filter}
Suppose the SSM is linear and Gaussian, that is,
\begin{equation}\label{eq:linear-ssm}
\begin{aligned}
    \vtheta_t &= \vF_t\, \vtheta_{t-1} + \vphi_t,\\
    \vy_t &=  \vH_t\, \vtheta_t + \vvarphi_t,
\end{aligned}
\end{equation}
with $\vvarphi_t$ a zero-mean Gaussian with known covariance matrix $\vR_t$,
$\vF_t \in \real^{\nparams\times\nparams}$, and $\vH_t \in\real^{\nout\times \nparams}$.
Then, given the initial condition $p(\vtheta_0) = \normdist{\vtheta_0}{\vmu_0}{\vSigma_0}$ ---
a Gaussian density with known mean $\vmu_0 \in \mathbb{R}^m$ and covariance $\vSigma_0 \in \mathbb{R}^{m \times m}$ ---
the exact Bayesian predict and update steps are given by
\begin{align}
    p(\vtheta_t \vert \vy_{1:t-1}) &= \normdist{\vtheta_t}{\vmu_{t|t-1}}{\vSigma_{t|t-1}},\\
    p(\vtheta_t \vert \vy_{1:t}) &= \normdist{\vtheta_t}{\vmu_{t}}{\vSigma_{t}},
\end{align}
with prior predictive mean and covariance given by
\begin{equation}\label{eq:kf-predict}
\begin{aligned}
    \vmu_{t\vert t-1} &= \vF_t\,\vmu_{t-1},\\
    \vSigma_{t | t-1} &= \vF_t\,\vSigma_{t-1}\,\vF_{t}^\intercal + \vQ_{t},\\
\end{aligned}
\end{equation}
and posterior mean and covariance given by
\begin{equation}\label{eq:kf-update}
\begin{aligned}
    \vSigma_t^{-1} &= \vSigma_{t \vert t-1}^{-1} + \vH_t^\intercal\,\vR_{t}^{-1}\,\vH_t,\\
    \vK_t &= \vSigma_t\,\vH_t^\intercal\,\vR_t^{-1},\\
    \vmu_t &= \vmu_{t | t -1} +  \vK_t\,(\vy_t - \hat\vy_t),\\
\end{aligned}
\end{equation}
where $\hat{\vy}_t =  \vH_t\,\vmu_{t|t-1}$ is the predicted observation and
$\vK_t$ is the Kalman gain matrix used to map
the error (residual) vector in observation space to an update in latent state space.

\subsection{The Extended Kalman filter}
\label{subsec:ekf}
When the state and measurement functions are non-linear,
a common approach is to introduce a Gaussian posterior density
$q(\vtheta_t \vert \vy_{1:t}) = \normdist{\vtheta_t}{\vmu_t}{\vSigma_t}$
that approximates
the posterior density $p(\vtheta_t \vert \vy_{1:t})$ through a Kullback-Leibler projection.
This is done using Gaussian approximations of the joint densities
$q(\vtheta_t, \vtheta_{t-1} \vert \vy_{1:t-1})$ and
$q(\vtheta_t, \vy_t \vert \vy_{1:t-1})$ 
and then performing a closed-form Bayesian update;
see Section 8.4 in \citet{Sarkka23} for details.

The EKF is a special case of the above in which one
linearises $f_t$ in \eqref{eq:ssm-latent} and $h_t$ in \eqref{eq:ssm-measurement},
and assumes a Gaussian measurement noise $\vvarphi_t$.
As a consequence, the predict and update equations resemble those of the standard KF.
More precisely, the EKF
replaces the mean of the state transition density $p(\vtheta_t \vert \vtheta_{t-1})$
in \eqref{eq:ssm-latent} with
\begin{equation}\label{eq:linearised-state-mean}
    \mathbb{E}[\vtheta_t \vert \vtheta_{t-1}] \approx
    \vF_t(\vtheta_{t-1} - \vmu_{t-1}) + f_t(\vmu_{t - 1}) =: 
    \bar\vmu_{t | t-1},
\end{equation}
and the measurement mean of  $p(\vy_t \vert \vtheta_t)$
in \eqref{eq:ssm-measurement} with
\begin{equation}\label{eq:linearised-measurement-mean}
    \mathbb{E}[\vy_t \vert \vtheta_t]
    \approx \vH_t( \vtheta_t - \vmu_{t | t-1}) + h_t(\vmu_{t | t-1}) =: \bar{\vy}_t,
\end{equation} 
where
$\vmu_{t|t-1} = \mathbb{E}[\bar{\vmu}_{t|t-1} | \vmu_{t-1}] = f_t(\vmu_{t-1})$,
$\vF_t$ is the Jacobian of $f_t$ evaluated at $\vmu_{t-1}$,
$\vH_t$ is the Jacobian of $h_t$ evaluated at $\vmu_{t|t-1}$, and
$\hat{\vy}_t = \mathbb{E}[\bar{\vy}_t] =  h_t(\vmu_{t|t-1})$.
The linearisation of the transition and measurement around the respective previous means allows the resulting
predict and update equations to closely resemble \eqref{eq:kf-predict} and \eqref{eq:kf-update}.
Therefore, the algorithm remains relatively scalable
and more efficient low-rank extensions can also be derived \citep[see e.g.][]{lofi, LRVGA, cartea2023toxic-flow}.

\subsection{The ensemble Kalman filter}
\label{subsec:enkf}
The ensemble Kalman filter (EnKF)
was developed as an alternative
to the extended Kalman filter for high-dimensional and highly non-linear state dynamics
\citep[see e.g.,][]{evensen1994enkf,burgers1998analysis, roth2017enekf}.
This method avoids
the need to compute Jacobians and the storage of an explicit $\nparams \times \nparams$ posterior covariance matrix,
by instead representing the belief state 
with  an ensemble of $N \in \mathbb{N}$ particles $\hat{\vtheta}_{t}^{\left(i\right)}\in\real^{\nparams}$ for $i = 1, \ldots, N$
evolved through time.
This enables the method to scale to high-dimensional
state spaces with complex nonlinear dynamics, such as those frequently arising in  data assimilation for
weather forecasting \citep{Evensen2009}.
In the EnKF,
the predict step  samples
\eqref{eq:ssm-latent} to obtain $\hat{\vtheta}_{t | t - 1}^{\left(i\right)}$.
Then, the update step samples predictions $\hat{\vy}_{t\vert t-1}^{\left(i\right)} \in \real^\nout$, for each particle, 
according to 
\begin{equation}\label{eq:enkf-sample-predict}
   \hat{\vy}_{t\vert t-1}^{\left(i\right)} \sim \gauss\left(h_t\left(\hat{\vtheta}_{t|t-1}^{(i)}\right),\vR_t\right).
\end{equation}
The particles are then updated according to
\begin{equation}\label{eq:enkf-update}
\hat{\vtheta}_{t}^{\left(i\right)}=
\hat{\vtheta}_{t\vert t-1}^{\left(i\right)}+\bar{\vK}_{t}\big(\vy_{t}-\hat{\vy}_{t\vert t-1}^{(i)}\big),
\end{equation}
with gain matrix $\bar{\vK}_{t}$ calculated from the ensemble.
See Appendix \ref{sec:weighted-ensemble-kalman-filter} for details.

\section{The weighted observation likelihood filter}
\label{sec:method}

Our method is based on the GB 
approach
where one modifies the update step in  \eqref{eq:filtering-update} to use a loss function
$\ell_t: \real^\nparams \to \real$ in place of the negative log-likelihood of the measurement process.
This gives the generalised posterior
\begin{equation}
    q(\vtheta_t \vert \vy_{1:t}) \propto \exp(-\ell_t(\vtheta_t)) q(\vtheta_t \vert \vy_{1:t-1}).
\end{equation}
We propose to gain robustness to outliers 
in observation space by taking the loss function to be the model's  negative log-likelihood scaled by a data-dependent
weighting term
\begin{equation}
\ell_t(\vtheta_t)
= -W^2(\vy_t, \hat{\vy}_t)\,\log q(\vy_t| \vtheta_t),
\label{eq:weighted-loglikelihood}
\end{equation}
with 
$W: \real^{\nout}\times\real^\nout\to\real$ the weighting function and 
$q(\vy_t \vert \vtheta_t)$ the modelled measurement process.
We call our method the \emph{weighted observation likelihood filter (WoLF)}.
To specify an instance of our method, ones needs to define
the  likelihood $q(\vy_t \vert \vtheta_t)$
and the weighting function $W$.
In the next subsections, we show the flexibility of  WoLF
and derive weighted-likelihood-based KF, EKF, and EnKF algorithms.
Setting $W(\vy_t, \bar{\vy}_t)=1$ trivially recovers existing methods, but we will instead use non-constant weighting functions inspired by the work of \cite{Barp2019,matsubara2021robust,Altamirano2023_bocpd,Altamirano2023_gp}.

\subsection{Linear weighted observation likelihood filter}

The following proposition gives a closed-form solution
for the update step of WoLF under a linear measurement function and a Gaussian likelihood (see \Cref{proof:weighted-kf} for the proof).
\begin{proposition}\label{prop:weighted-kf}
    Consider the linear-Gaussian SSM \eqref{eq:linear-ssm}
    with weighting function $W:\real^\nout\times\real^\nout \to \real$.
    Then, the update step of WoLF with loss function \eqref{eq:weighted-loglikelihood} is given by \eqref{eq:kf-update}
    with $\vR_t^{-1}$ replaced by $\bar{\vR}_t^{-1} = W^2(\vy_t, \hat{\vy}_t)\,\vR_t^{-1}$.
\end{proposition}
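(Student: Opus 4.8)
The plan is to observe that the generalised update (the GB analogue of \eqref{eq:filtering-update}) with loss \eqref{eq:weighted-loglikelihood} is, when viewed as a function of $\vtheta_t$, nothing but an ordinary conjugate Gaussian--Gaussian Bayesian update with the measurement covariance $\vR_t$ rescaled. Two elementary facts make this work: (i) raising a Gaussian density to a fixed positive power yields, up to proportionality in its argument, a Gaussian density with rescaled precision; and (ii) the weight $W^2(\vy_t,\hat{\vy}_t)$ is \emph{constant} in the state $\vtheta_t$.

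First I would make (ii) precise. Since $\hat{\vy}_t = \vH_t\vmu_{t|t-1}$ is determined by $\vy_{1:t-1}$, the scalar $w_t := W^2(\vy_t,\hat{\vy}_t)$ does not depend on $\vtheta_t$, and therefore
\[
\exp\!\big(-\ell_t(\vtheta_t)\big)
= \exp\!\big(w_t \log q(\vy_t\mid\vtheta_t)\big)
= q(\vy_t\mid\vtheta_t)^{\,w_t}.
\]
With $q(\vy_t\mid\vtheta_t) = \normdist{\vy_t}{\vH_t\vtheta_t}{\vR_t}$, expanding the Gaussian and absorbing the $\vtheta_t$-independent factors into the proportionality constant gives, as a function of $\vtheta_t$,
\[
q(\vy_t\mid\vtheta_t)^{\,w_t}
\;\propto\;
\exp\!\Big(-\tfrac{w_t}{2}(\vy_t - \vH_t\vtheta_t)^{\intercal}\vR_t^{-1}(\vy_t - \vH_t\vtheta_t)\Big)
\;\propto\;
\normdist{\vy_t}{\vH_t\vtheta_t}{\bar{\vR}_t},
\]
where $\bar{\vR}_t := w_t^{-1}\vR_t$, so that $\bar{\vR}_t^{-1} = W^2(\vy_t,\hat{\vy}_t)\,\vR_t^{-1}$. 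In other words, the generalised likelihood factor is proportional in $\vtheta_t$ to a genuine Gaussian likelihood with covariance $\bar{\vR}_t$.

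Finally I would combine this pseudo-likelihood with the Gaussian prior predictive $q(\vtheta_t\mid\vy_{1:t-1}) = \normdist{\vtheta_t}{\vmu_{t|t-1}}{\vSigma_{t|t-1}}$ from \eqref{eq:kf-predict} and complete the square in $\vtheta_t$, i.e.\ invoke the standard Gaussian conjugacy behind the Kalman update (cf.\ Section~8.4 of \citet{Sarkka23}). This yields a Gaussian posterior $q(\vtheta_t\mid\vy_{1:t})$ with precision $\vSigma_{t|t-1}^{-1} + \vH_t^{\intercal}\bar{\vR}_t^{-1}\vH_t$ and the matching mean/gain update, which is exactly \eqref{eq:kf-update} with every occurrence of $\vR_t^{-1}$ replaced by $\bar{\vR}_t^{-1}$; the normalising constant implicit in the generalised update does not affect any of these quantities. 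There is no genuine obstacle in this argument: the only point that deserves an explicit remark is fact (ii) --- that $W$ is evaluated at the predicted observation $\hat{\vy}_t$, not at $\vH_t\vtheta_t$, so it really is constant in the state. Were the weight to depend on $\vtheta_t$, conjugacy would break and the update would no longer be closed-form; the remaining completing-the-square computation is textbook and I would not spell it out in full.
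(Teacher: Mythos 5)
Your proposal is correct and follows essentially the same route as the paper's proof: both observe that the weight $W^2(\vy_t,\hat{\vy}_t)$ is constant in $\vtheta_t$, absorb the $\vtheta_t$-independent terms of the weighted negative log-likelihood into a constant so that the loss reduces to the quadratic form $\tfrac{1}{2}(\vy_t-\vH_t\vtheta_t)^\intercal\bar{\vR}_t^{-1}(\vy_t-\vH_t\vtheta_t)$ with $\bar{\vR}_t^{-1}=W^2(\vy_t,\hat{\vy}_t)\vR_t^{-1}$, and then invoke the standard Gaussian conjugate (Kalman) update. Your explicit remark that the weight is evaluated at $\hat{\vy}_t$ rather than $\vH_t\vtheta_t$ is a worthwhile clarification that the paper only makes implicitly.
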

The resulting predict and update steps for WoLF under linear dynamics and
zero-mean Gaussians for the state and measurement process
are shown in Algorithm \ref{algo:wlf-step}.
\begin{algorithm}[ht]
\begin{algorithmic}
    \REQUIRE $\vF_t$, $\vQ_t$ // predict step
    \STATE $\vmu_{t|t-1} \gets \vF_t\,\vmu_{t-1}$
    \STATE $\vSigma_{t|t-1} \gets \vF_t\,\vSigma_{t-1}\, \vF_t^\trans + \vQ_t$
    \REQUIRE $\vy_t$, $\vH_t$, $\vR_t$  // update step
    \STATE $\hat{\vy}_t \gets \vH_t \,\vmu_{t|t-1}$
    \STATE $w_t \gets W(\vy_t, \hat{\vy}_t)$
    \STATE $\vSigma_{t}^{-1}  \gets \vSigma_{t\vert t-1}^{-1} +
    w_t^2\,\vH_{t}^{\trans}\, \vR_{t}^{-1}\, \vH_{t}$
    \STATE $\vK_t \gets w_t^2\,\vSigma_t\,\vH_t^\trans \,\vR_t^{-1}$
    \STATE $\vmu_{t}  \gets \vmu_{t\vert t-1}+ \vK_t \left(\vy_{t}-\hat{\vy}_{t}\right)$
\end{algorithmic}
\caption{
    WoLF predict and update step
}
\label{algo:wlf-step}
\end{algorithm}

The computational complexity of WoLF under linear 
dynamics matches that of the KF, i.e., 
$O(\nparams^3)$.
Alternative robust filtering algorithms require
multiple iterations per measurement to achieve robustness and stability, making them significantly slower;
see Table \ref{tab:complexity-linear-model} for the computational complexity for the methods we consider, and
Table \ref{tab:2d-ssm-running-time} and Figure \ref{fig:uci-per-step-time} for empirical comparisons.

\subsection{Nonlinear weighted observation likelihood filter}
\label{subsec:wlf-nonlinear-extensions}

Our method readily extends to other nonlinear filtering algorithms.
For example,
a WoLF version of the EKF is obtained by introducing a weighting function to $\eqref{eq:linearised-measurement-mean}$
yielding the approximate log-likelihood
\begin{equation}
    \log q(\vy_t \vert \vtheta_t) = W^2(\vy_t, \hat{\vy}_t)\log\normdist{\vy_t}{\bar{\vy}_t}{\vR_t}.
\end{equation}
Similarly, we can derive a weighted ensemble Kalman filter
by weighting error terms in the update step \eqref{eq:enkf-update};
see Appendix \ref{sec:weighted-ensemble-kalman-filter} for details.
Finally, our method can be easily extended to handle measurement processes
modelled as an exponential family, such as in classification;
see Appendix \ref{sec:wlf-expfam-extension} for details.

\subsection{The choice of weighting function}
\label{subsec:choice-weighting-function}

Weighted likelihoods have a well-established history in Bayesian inference
and have demonstrated their efficacy in improving robustness
\citep{grunwald2012safe,holmes2017assigning,grunwald2017inconsistency,miller2018robust,bhattacharya2019bayesian,alquier2020concentration,dewaskar2023robustifying}. 
In this context, the corresponding posteriors are often referred to as fractional, tempered, or power posteriors. 
In most existing work, the determination of weights relies on heuristics 
and the assigned weights remain constant across all data points
so that $W(\vy_t, \hat{\vy}_t) = w \in \real$ for all $t$. 
In contrast, we dynamically incorporate information from the most recent observations without incurring additional computational costs
by defining the weight as a function of the current observation
$\vy_t$
and its prediction
$\hat{\vy}_t =  h_t(\vmu_{t|t-1})$,
which is based on all of the past observations.

To define the weighting function, 
we take inspiration from previous work 
for dealing with outliers.
In particular,  \citet{wang2018}
proposed classifying robust filtering algorithms
into two main types:
\textit{compensation-based}
algorithms,
which incorporate information from tail events into the model
in a robust way
\citep[see, e.g.,][]{Huang2016, Agamennoni2012},
and
\textit{detect-and-reject} algorithms,
which 
assume that outlier observations bear no useful information 
and thus are ignored
\citep[see, e.g.,][]{wang2018, mu2015}.
Below we show how both of these strategies can be implemented
using our WoLF method by merely changing the weighting function.

\paragraph{Inverse multi-quadratic weighting function:}
As an example of a compensation-based method,
we follow \citet{Altamirano2023_gp}
and use the Inverse Multi-Quadratic (IMQ) weighting,
which in our SSM setting is
\begin{align} \label{eq:w_t}
    W(\vy_t, \hat{\vy}_t) & =\left(1+\frac{||\vy_{t}-\hat{\vy}_{t}||_2^2}{c^{2}}\right)^{-1/2},
\end{align}
where $c > 0$ is the soft threshold and $\|\cdot\|$ denotes the $l_2$ norm.
We call WoLF with IMQ weighting ``WoLF-IMQ''.

\paragraph{Mahalanobis-based weighting function:}
The $l_2$ norm in the IMQ can be modified to account for the covariance structure of the measurement
process by replacing it with the Mahalanobis distance between $\vy_t$ and $\hat{\vy}_t$:
\begin{align}\label{eq:mahalanobis-weight}
    W(\vy_t, \hat{\vy}_t) & =\left(1+\frac{\|\vR_t^{-1/2}(\vy_t - \hat{\vy}_t)\|_2^2}{c^{2}}\right)^{-1/2}.
\end{align}
We call  WoLF with this weighting function the WoLF-MD method.
This type of weighted IMQ function has been used extensively in the kernel literature
\citep[see e.g.][]{chen2019stein, detommaso2018stein, riabiz2022optimal}.

\paragraph{Threshold Mahalanobis-based weighting function:}
As an example of a detect-and-reject method,
 we consider
\begin{equation}\label{eq:thresholded-mahalanobis-weight}
    W(\vy_t, \hat{\vy}_t) =
    \begin{cases}
    1 & \text{if } \|\vR_t^{-1/2}(\vy_t - \hat{\vy}_t)\|_2^2 \leq c\\
    0 & \text{otherwise}
    \end{cases}
\end{equation}
with $c > 0$ the fixed threshold.
The weighting function \eqref{eq:thresholded-mahalanobis-weight} corresponds
to ignoring information from estimated measurements whose Mahalanobis distance
to the true measurement is larger than
some predefined threshold $c$.
In the  linear setting, this weighting function is related to the benchmark method employed in \citet{ting2007}.
We refer to WoLF with this weighting function as ``WoLF-TMD''.

For $\vy_t \in \real^\nout$ with $\nout = \nparams \gg 1$ and diagonal measurement covariance
$\vR_t = \text{diag}\left(r_{t,1}, \ldots, r_{t, \nout}\right)$,
the WoLF-TMD function can be modified to weight individual observations so that
$W: \real^{\nout}\times\real^\nout\to\real^\nout$.
See Section \ref{experiment:lorenz96} for an example and
Appendix \ref{sec:dimension-specific-weighting}
for a discussion. 

The proposed weighting functions --- the IMQ, the MD, and the TMD ---
are defined such that $W: \real^\nout\times\real^\nout \to [0, 1]$
and therefore can only down-weight observations.
This means that our updates are always conservative,
i.e., our posteriors will be wider in the presence of outliers
(see Figure \ref{fig:intro-image} for an example).
\subsection{Theoretical properties}
\label{sec:theory}

In this section, we prove
the outlier-robustness for WoLF-type methods.
We use the classical framework of \citet{huber2011robust}.
Consider measurements $\vy_{1:t}$.
We measure the influence of a contamination $\vy^{c}_t$ by examining the divergence
between the posterior with the original observation $\vy_t$
and the posterior with the contamination $\vy_t^{c}$, which is allowed to be arbitrarily large. 
As a function of $\vy_t^{c}$, this divergence is called the \emph{posterior influence function} (PIF) and was studied in \citet{Ghosh2016, matsubara2021robust, Altamirano2023_bocpd, Altamirano2023_gp}. Following \citet{Altamirano2023_gp}, we consider the Kullback-Leibler (KL) divergence,
which allows us to obtain closed-form expressions for Gaussians. 
The PIF is given by
\begin{align}
    \operatorname{PIF}(\vy_t^{c},\vy_{1:t}) = \operatorname{KL} \left(
        p(\vtheta_t|\vy^{c}_t,\vy_{1:t-1})
        \| p(\vtheta_t|\vy_t,\vy_{1:t-1})\right). \nonumber
\end{align}

If $\sup_{\vy_t^c\in\mathbb{R}^{d}}|\operatorname{PIF}(\vy_t^{c},\vy_{1:t}) |<\infty$,
then the posterior is called outlier-robust, which
indicates that as $\|\vy_t-\vy_{t}^{c}\|_2 \to \infty$,
the contamination's effect on the posterior is bounded.
This is the Bayesian equivalent to bias-robustness in frequentist statistics.

\begin{theorem}\label{prop:weighted-kf-bounded-pif}
    Consider the linear Gaussian SSM \eqref{eq:linear-ssm} or its linearised (EKF) approximation.
    The standard (E)KF posterior has an unbounded PIF and is not outlier robust.
    
    In contrast, the generalised posterior presented in \cref{prop:weighted-kf} 
    has bounded PIF and is, therefore, outlier robust for any weighting function $W$ such that
    $\sup_{\vy_t\in\real^d}W(\vy_t, \hat{\vy}_t)<\infty$ and $\sup_{\vy_t\in\real^d}W(\vy_t, \hat{\vy}_t)^2\,\|\vy_t\|_{2}<\infty$.
\end{theorem}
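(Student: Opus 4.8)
The plan is to exploit that, for both the exact linear--Gaussian update \eqref{eq:kf-update} and its EKF linearisation, the filtering posterior is Gaussian, so the $\operatorname{PIF}$ is a KL divergence between two Gaussians and has a closed form. Write $\mathcal{N}(\vmu_1,\vSigma_1)$ for the contaminated posterior $p(\vtheta_t\vert\vy_t^{c},\vy_{1:t-1})$ and $\mathcal{N}(\vmu_0,\vSigma_0)$ for the reference posterior $p(\vtheta_t\vert\vy_t,\vy_{1:t-1})$, so that
\begin{align*}
\operatorname{PIF}(\vy_t^{c},\vy_{1:t})=\tfrac12\big[&\Tr(\vSigma_0^{-1}\vSigma_1)-\nparams+\log\tfrac{\det\vSigma_0}{\det\vSigma_1}\\
&+(\vmu_1-\vmu_0)^\intercal\vSigma_0^{-1}(\vmu_1-\vmu_0)\big].
\end{align*}
Since $(\vmu_0,\vSigma_0)$ is fixed, the whole question reduces to controlling $(\vmu_1,\vSigma_1)$ as $\|\vy_t^{c}\|_2\to\infty$. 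Note that in the EKF case $\vH_t$ (the Jacobian of $h_t$ at $\vmu_{t|t-1}$) and $\hat\vy_t=h_t(\vmu_{t|t-1})$ depend only on $\vy_{1:t-1}$, not on the contamination, so the argument below is identical in the exact and linearised cases.

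For the standard (E)KF we have $W\equiv1$, and \eqref{eq:kf-update} shows that $\vSigma_1=\vSigma_0=\vSigma_t$ is independent of the observation; hence the KL collapses to its Mahalanobis term with $\vmu_1-\vmu_0=\vK_t(\vy_t^{c}-\vy_t)$, giving $\operatorname{PIF}(\vy_t^{c},\vy_{1:t})=\tfrac12(\vy_t^{c}-\vy_t)^\intercal\vK_t^\intercal\vSigma_t^{-1}\vK_t(\vy_t^{c}-\vy_t)$. Whenever the measurement is informative ($\vH_t\neq0$, hence $\vK_t\neq0$), picking $\vy_t^{c}=\vy_t+s\vv$ with $\vK_t\vv\neq0$ makes this diverge like $s^2$, so $\sup_{\vy_t^{c}}\operatorname{PIF}=\infty$, which proves the first claim.

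For WoLF, set $w^{c}:=W(\vy_t^{c},\hat\vy_t)$ and $\bar w:=\sup_{\vy}W(\vy,\hat\vy_t)<\infty$ (the first hypothesis). By \cref{prop:weighted-kf}, $\vSigma_1^{-1}=\vSigma_{t|t-1}^{-1}+(w^{c})^2\vH_t^\intercal\vR_t^{-1}\vH_t$, and from $0\le(w^{c})^2\le\bar w^2$ we get the Loewner sandwich $\vSigma_{t|t-1}^{-1}\preceq\vSigma_1^{-1}\preceq\vSigma_{t|t-1}^{-1}+\bar w^2\vH_t^\intercal\vR_t^{-1}\vH_t$. Thus $\vSigma_1$ stays in a fixed compact set of positive-definite matrices (eigenvalues between a positive constant and $\|\vSigma_{t|t-1}\|_{\mathrm{op}}$, both independent of $\vy_t^{c}$), so $\Tr(\vSigma_0^{-1}\vSigma_1)$ and $\log\det\vSigma_1$ are bounded uniformly in $\vy_t^{c}$. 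For the mean, \eqref{eq:kf-update} gives $\vmu_1-\vmu_{t|t-1}=(w^{c})^2\vSigma_1\vH_t^\intercal\vR_t^{-1}(\vy_t^{c}-\hat\vy_t)$, so
\begin{align*}
\|\vmu_1-\vmu_{t|t-1}\|_2
&\le\|\vSigma_1\|_{\mathrm{op}}\|\vH_t^\intercal\vR_t^{-1}\|_{\mathrm{op}}\,(w^{c})^2\|\vy_t^{c}-\hat\vy_t\|_2\\
&\le\|\vSigma_1\|_{\mathrm{op}}\|\vH_t^\intercal\vR_t^{-1}\|_{\mathrm{op}}\big((w^{c})^2\|\vy_t^{c}\|_2+\bar w^2\|\hat\vy_t\|_2\big).
\end{align*}
Its right-hand side is bounded uniformly in $\vy_t^{c}$ precisely by the two hypotheses $\sup_{\vy}W(\vy,\hat\vy_t)<\infty$ and $\sup_{\vy}W(\vy,\hat\vy_t)^2\|\vy\|_2<\infty$ (using the covariance sandwich for $\|\vSigma_1\|_{\mathrm{op}}$). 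Hence $\|\vmu_1\|_2$ is bounded, the Mahalanobis term of the KL is bounded, all four terms are uniformly controlled, and $\sup_{\vy_t^{c}}\operatorname{PIF}(\vy_t^{c},\vy_{1:t})<\infty$.

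The main obstacle is this mean bound: the Kalman gain carries a factor $(w^{c})^2$ multiplying the residual $\vy_t^{c}-\hat\vy_t$, which blows up, so one needs $W^2$ to decay fast enough that $W^2\|\vy\|_2$ stays bounded --- exactly the second hypothesis --- and one must first establish the covariance sandwich because $\vSigma_1$ itself appears inside the gain. Everything else (the closed form of the Gaussian KL, the $\Tr$ and $\log\det$ terms, and the standard-KF divergence) is routine.
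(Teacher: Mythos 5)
Your proof is correct and takes essentially the same route as the paper's own argument (its Lemmas on the KF, robust KF, EKF, and robust EKF): expand the Gaussian KL in closed form, observe that the covariance terms are controlled using only $\sup_{\vy}W(\vy,\hat\vy_t)<\infty$, and isolate the mean term as the place where the second hypothesis $\sup_{\vy}W(\vy,\hat\vy_t)^2\|\vy\|_2<\infty$ is needed, while for the standard (E)KF the KL collapses to a quadratic form in $\vy_t-\vy_t^c$ that diverges. The only (cosmetic, and if anything slightly cleaner) differences are that you bound the trace and log-determinant terms via a Loewner sandwich on $\vSigma_1^{-1}$ where the paper invokes $\Tr(AB)\le\Tr(A)\Tr(B)$ and $\det(A+B)\ge\det(A)+\det(B)$, and your unboundedness argument only requires $\vK_t\neq 0$ rather than the paper's stronger claim that $\vK_t$ is invertible.
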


The proof is in \Cref{app:proof_robustness}. In particular, the conditions are satisfied when $W$ is
    \eqref{eq:w_t}, \eqref{eq:mahalanobis-weight}, or \eqref{eq:thresholded-mahalanobis-weight},
    which are the focus of our paper.
\cref{fig:pif-2d-tracking} shows an empirical validation of this proposition for the 2D tracking problem detailed in \cref{experiment:2d-tracking}.
Here, it is clear that the PIF for the standard KF is unbounded, whereas our methods exhibit a bounded influence. In Appendix \ref{proof:ensemble-kf} we show a version of the theorem above for the EnKF.

\begin{figure}[htb]
    \centering
    \includegraphics[width=\columnwidth]{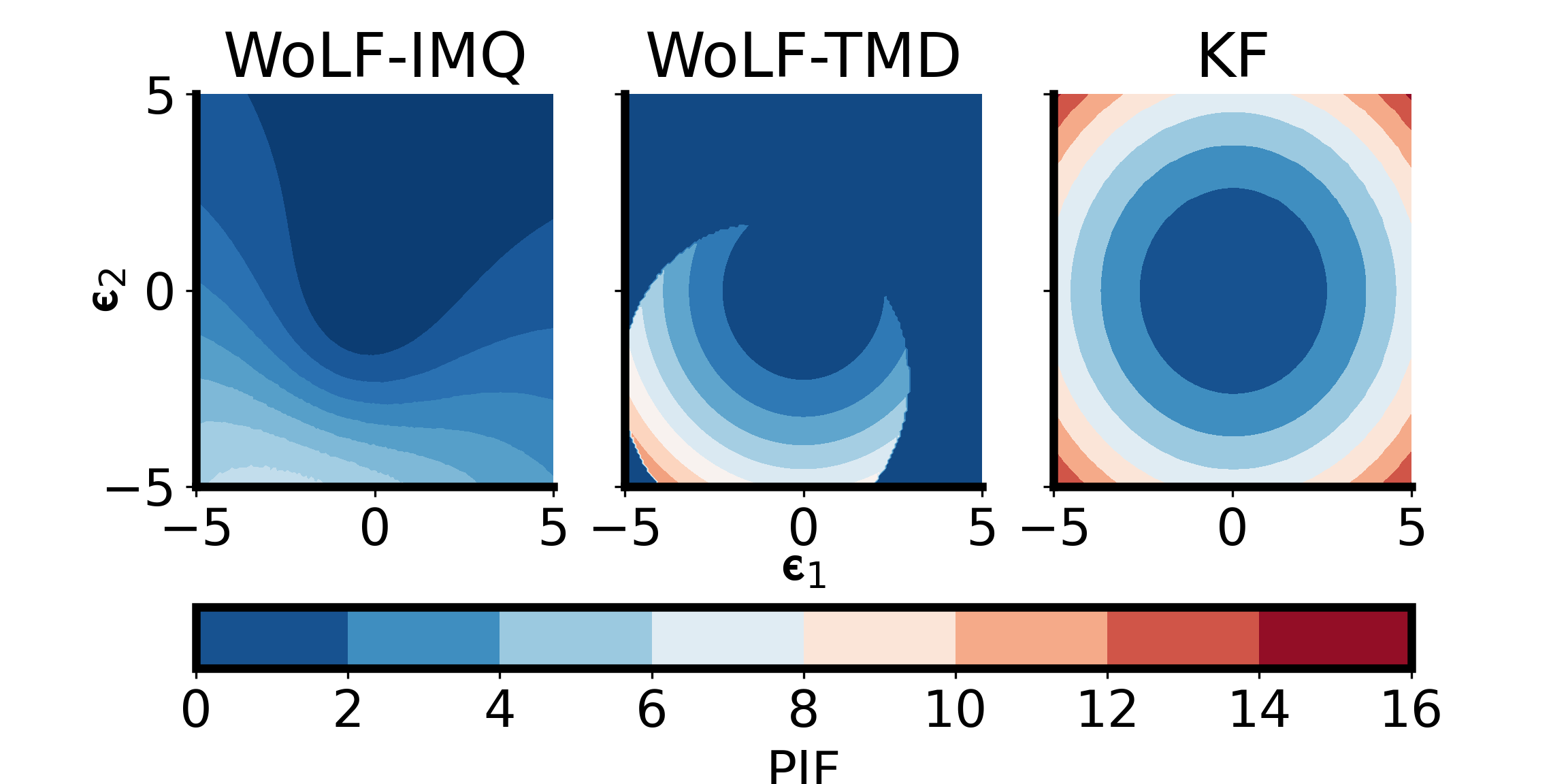}
    \removewhitespace[-6mm]
    \caption{
        PIF for the 2d tracking problem of Section \ref{experiment:2d-tracking}.
        The last measurement $\vy_t$ is replaced with $\vy^{c}_t = \vy_{t} + \vepsilon$,
        where $\vepsilon \in [-5, 5]\times[-5, 5]$.
        We observe that the PIF is asymetric for the weighted methods; this 
        is because the weighting term is a function of the prior predictive and the measurement at time $t$.
        See Appendix \ref{subsec:pif-plot-explain} for a more detailed explanation.
 }
    \label{fig:pif-2d-tracking}
\end{figure}

\section{Experiments}
\label{sec:experiments}

In this section, we study the performance of the WoLF methods in multiple filtering settings.
Each experiment employs
a dataset (or samples data from an SSM),
a collection of benchmark methods, and
a metric to compare the methods.

For our robust baselines,
we make use of three methods that are representative of recent state-of-the-art 
approaches to robust filtering:
the Bernoulli KF of \citet{wang2018} (\mWang),
which is an example of a detect-and-reject strategy;
the inverse-Wishart filter of \citet{Agamennoni2012} (\mAgamenoni),
which  is an example of a compensation-based strategy;
and
the Huberised EnKF of \citet{roh2013} (\mHubEnkf),
which  is an example of a Huberised algorithm.
The \mWang and \mAgamenoni are deterministic and optimise a VB objective to compute a Gaussian approximation to the state posterior
(see Appendix \ref{sec:related-overview} for details).
We do not compare against sophisticated hierarchical methods nor methods based on particle filtering, because
these do not scale well to high-dimensional state spaces.
For the neural network fitting problem,
we also consider  a variant of  online gradient descent (\ogd) based on Adam \cite{kingma2017adam}, which
uses multiple inner iterations per step (measurement).
This method does scale to high-dimensional state spaces, but sadly only gives a maximum a posteriori (MAP) estimate and
is not as sample efficient as a robust Bayesian filter.

For experiments where KF or EKF is used as the baseline,
 we consider the following WoLF variants:
(i) the WoLF version with inverse multi-quadratic weighting function (\mWlfImq),
(ii)  the thresholded WoLF with Mahalanobis-based weighting function (\mWlfMd).
When using the EKF variants, we linearise
the state mean \eqref{eq:linearised-state-mean} and measurement mean \eqref{eq:linearised-measurement-mean}.
For experiments where the ensemble KF (\mEnkf) is taken as the baseline algorithm,
we benchmark the performance of the weighted likelihood EnKF with
(i)  weighting with averaged-particles (\mWEnkfHard) and
(ii) the per-particle weightings (\mWEnkfSoft).
See Appendix \ref{sec:weighted-ensemble-kalman-filter} for a detailed description of the robust EnKF methods.

\eat{
For our choice of \textit{robust} baselines,
we make use of three methods that are representative of recent state-of-the-art 
approaches to robust filtering:
the \mWang method, that is an example of a detect-and-reject strategy,
the \mAgamenoni method, that is an example of a compensation-based strategy, and
the \mHubEnkf method, that is an example of a \textit{Huberised} algorithm.
The \mWang and \mAgamenoni are deterministic and optimise a VB objective to compute a Gaussian approximation to the state posterior
(see Appendix \ref{sec:related-overview} for details).
We do not compare against sophisticated hierarchical methods nor methods based on particle filtering, because
these do not scale well to high-dimensional state spaces --- as those arising when fitting
neural networks, or performing data assimilation for weather forecasting. 

For experiments where the Kalman filter (\mkf) is employed as the baseline algorithm
we benchmark 
(i) the WoLF with inverse multi-quadratic weighting function (\mWlfImq),
(ii) the thresholded WoLF with Mahalanobis-based  weighting function (\mWlfMd),
(iii) the inverse-Wishart filter of \citet{Agamennoni2012} (\mAgamenoni), and
(iv) the Bernoulli KF of \citet{wang2018} (\mWang).
In experiments where non-linear SSMs are considered, we linearise
the state mean and observation mean according to
\eqref{eq:linearised-state-mean} and \eqref{eq:linearised-measurement-mean} respectively before applying each method;
we insert an additional \texttt{\textbf{E}} letter at the beginning of the acronym to denote these linearised ("extended") methods.
For regression-type problems,
we also consider 
online gradient descent (\ogd) trained with Adam using multiple inner iterations per step.
Table \ref{tab:complexity-linear-model} summarises the KF-type methods.
For experiments where the ensemble KF (\mEnkf) is taken as the baseline algorithm,
we benchmark the performance of
(i) the weighted likelihood EnKF with averaged-particles (\mWEnkfHard) and per-particle (\mWEnkfSoft) weightings, and
(ii) the \textit{Huberised} EnKF of \citet{roh2013} (\mHubEnkf).
See Appendix \ref{sec:weighted-ensemble-kalman-filter} for a detailed description of the robust EnKF methods.
}

\begin{table}[ht]
    \small
    \centering
    \begin{tabular}{llll}
    \toprule
               Method & Cost & \#HP & Ref \\
    \midrule
         \mkf &  $O(\nparams^3)$ & 0 & \citeauthor{kalman1960}\\
         \mWang & $O(I\,\nparams^3)$ & 3 & \citeauthor{wang2018}\\
         \mAgamenoni & $O(I\,\nparams^3)$ & 2 & \citeauthor{Agamennoni2012} \\
         \ogd & $O(I\, \nparams^2)$ & 2 & \citeauthor{bencomo2023implicit}\\
         \mWlfImq  &  $O(\nparams^3)$ & 1 &(Ours)\\
         \mWlfMd  &  $O(\nparams^3)$ & 1 & (Ours)\\
    \bottomrule
    \end{tabular}
    \caption{
        Computational complexity  of the update step,
        assuming  $d \leq \nparams$ and assuming linear dynamics.
        Here, $I$ is the number of inner iterations,
        \#HP refers to the number of hyperparameters we tune, and
        ''Cost'' refers to the computational complexity.
    }
    \label{tab:complexity-linear-model}
\end{table}

In each experiment, and unless otherwise specified, we run $100$ trials to evaluate each method.
The hyperparameters of each method 
are chosen on the first trial using the Bayesian optimisation (BO) package of \citet{nogueira2014bayesopt}.
BO is a popular derivative-free approach to function maximisation \citep[see e.g.][]{frazier2018tutorial}.
Specifically, we optimise the hyperparameters that minimise the chosen metric on the first run of each experiment.
Where a multi-output metric is specified, the minimisation is taken over the maximum of the output.
The hyperparameters for KF/EKF-like methods are:
the noise scaling and number of inner iterations for the \mAgamenoni,
the two shape parameters and the number of inner iterations for the \mWang,
the learning rate and number of inner iterations for the \ogd,
the thresholding value for the \mWlfMd, and
the soft threshold hyperparameter for the \mWlfImq.
See  Table \ref{tab:complexity-linear-model} for a summary.

The results we obtain can be summarised as follows:
WoLF-based methods either
outperform or match the performance of their counterparts in the metrics we specify below, but typically at a much lower running time.

\subsection{Robust KF for tracking a 2D object}
\label{experiment:2d-tracking}

We consider the classical problem of estimating the position of an object moving in 2D
with constant velocity,
which is commonly used to benchmark tracking problems
(see e.g., Example 8.2.1.1 in \citet{pml2Book} or Example 4.5 in \citet{Sarkka23}).
The SSM takes the form
\begin{equation} \label{eq:noisy-2d-ssm}
\begin{aligned}
    p(\params_t \vert \params_{t-1}) &= \normdist{\params_t}{\vF_t\params_{t-1}}{\vQ_t},\\
    p(\vy_t \vert \params_t) &= \normdist{\vy_t}{\vH_t\params_t}{\vR_t},
\end{aligned}
\end{equation}
where $\vQ_t = q\,{\bf I}_4$, $\vR_t = r\,{\bf I}_2$,
$(\vtheta_{0,t}, \vtheta_{1,t})$ is the position, 
$(\vtheta_{2, t}, \vtheta_{3,t})$ is the velocity,
{\small
\begin{align*}
    \vF_t &=
    \begin{pmatrix}
    1 & 0 & \Delta & 0\\
    0 & 1 & 0 & \Delta \\
    0 & 0 & 1 & 0 \\
    0 & 0 & 0 & 1
    \end{pmatrix}, & 
    \vH_t &= \begin{pmatrix}
        1 & 0 & 0 & 0\\
        0 & 1 & 0 & 0
    \end{pmatrix},
\end{align*}}
$\!\!\Delta = 0.1$ is the sampling rate,
$q = 0.10$ is the system noise,
$r = 10$ is the measurement noise,
and ${\bf I}_K$ is a $K\times K$ identity matrix.
We simulate 500 trials, each with 1,000 steps.
For each method, we compute the scaled RMSE metric
$ J_{T,i} = \sqrt{\sum_{t=1}^T (\vtheta_{t,i}- \vmu_{t,i}) ^ 2}$
for $i\in\{0,1,2,3\}$ as well as the total running time (relative to the \mkf).

In our experiments,
the true data generating process is one of two variants of \eqref{eq:noisy-2d-ssm}.
The first variant
(which we call {\bf Student observations})
corresponds to a system whose measurement process
comes from the Student-t likelihood:
\begin{equation}\label{eq:noisy-2d-ssm-outlier-covariance}
    \begin{aligned}
    p(\vy_t \vert \params_t)
    &= \text{St}({\vy_t\,\vert\,\vH_t\params_t,\,\vR_t,\nu_t})\\
    &= \int_0^\infty {\cal N}\left(\vy_t\,\vert\,\vH_t\vtheta_t, \frac{\vR_t}{\tau}\right)\text{Gam}\left(\tau \vert \frac{\nu_t}{2}, \frac{\nu_t}{2}\right) d\tau, 
    \end{aligned}
\end{equation}
with $\text{Gam}(\cdot \vert a, b)$ the density of a Gamma distribution with shape $a$ and rate $b$, and
$\nu_t=2.01$.
The second variant 
(which we call {\bf mixture observations})
corresponds to a system where the mean of the observations
changes sporadically. Instances of this variant can occur as a form of
human error or a software bug in a data-entry program.
To emulate this scenario, we modify \eqref{eq:noisy-2d-ssm}
by using the following mixture model for the observation
process:
\begin{equation}\label{eq:noisy-2d-ssm-outlier-mean}
\begin{aligned}
p(\vy_t \vert \params_t) &= \normdist{\vy_t}{\vm_t}{\vR_t},\\
\vm_t &= 
    \begin{cases}
        \vH_t\,\params_t & \text{w.p.}\ 1 - p_\epsilon,\\
        2\,\vH_t\,\params_t & \text{w.p.}\ p_\epsilon,
    \end{cases}
\end{aligned}
\end{equation}
where $p_\epsilon = 0.05$. 

\paragraph{Results}

\begin{figure}[ht]
    \centering
    \includegraphics[width=0.45\linewidth]{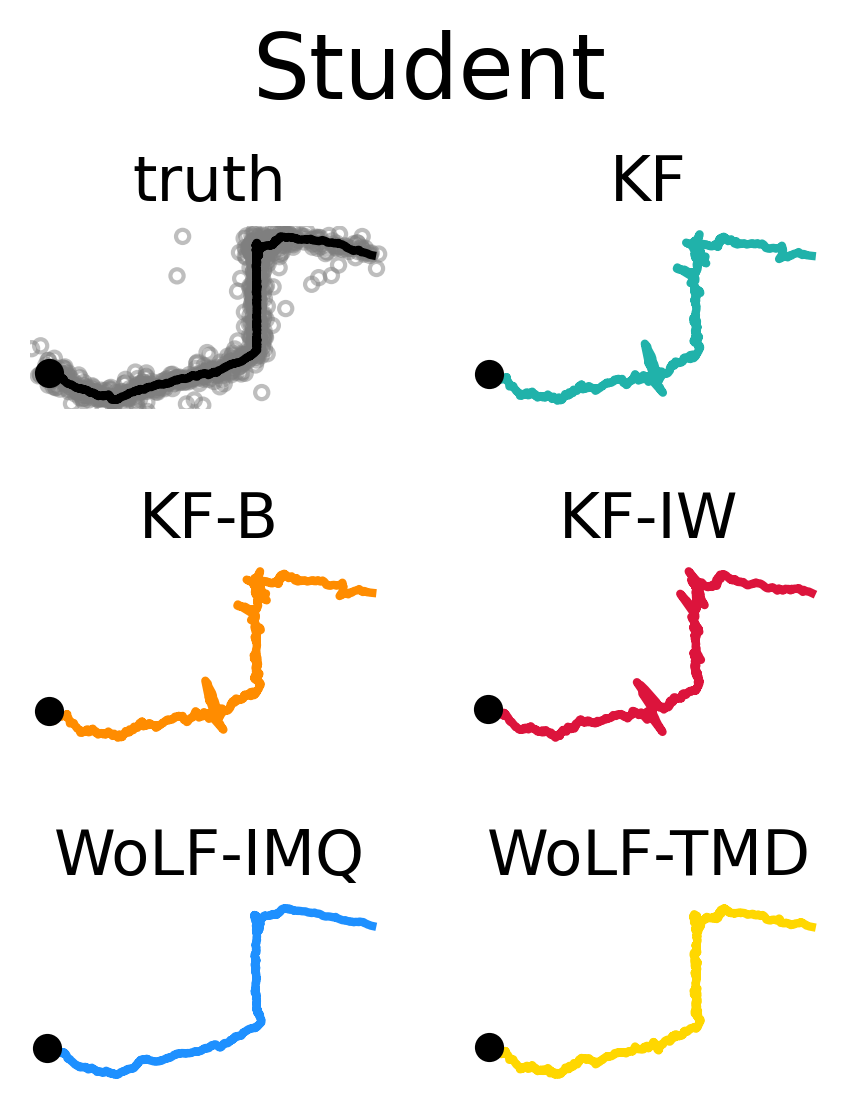}
    \hfill\vline\hfill
    \includegraphics[width=0.45\linewidth]{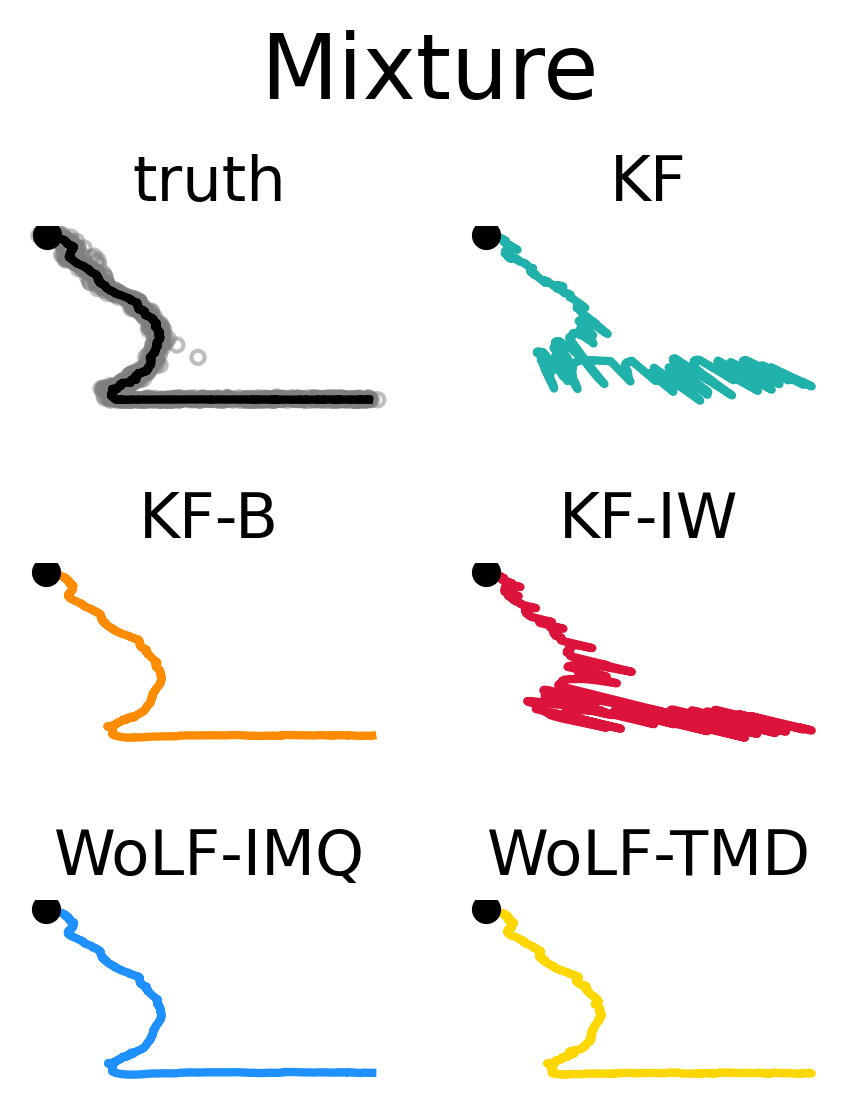}
    \caption{
    The left panel shows a sample path using the Student variant and
    the right panel shows a sample path using the mixture variant.
    The top left figure on each panel shows the true underlying state in black,
    and the measurements as grey dots.
    }
    \label{fig:linear-ssm-sample-runs}
\end{figure}

Figure \ref{fig:linear-ssm-sample-runs} shows a sample of each variant
along with the filtered state for each method.
For the Student variant (left panel),
the \mWlfImq and the \mWlfMd  estimate the true state
more closely than the competing methods.
Both the \mAgamenoni and the \mWang look comparable to the \mkf, which are not robust to outliers.
For the mixture variant (right panel),\footnote{
The top left figure in the right panel is cropped, see Figure \ref{fig:mixture-model-uncropped} for the uncropped version.
} we see that
the \mWlfImq, the \mWlfMd, and the \mWang filter the true state correctly.
In contrast, the \mAgamenoni and the \mkf are not robust to outliers.\footnote{
\mWang removes outliers that bear no information according to some criterion, but in the Student-t case, it fails.
\mAgamenoni, on the other hand, estimates a measurement covariance rather than the dispersion of such a measurement covariance.
In this sense, it is misspecified in both cases. 
}

\begin{figure}[ht]
    \centering
    \includegraphics[width=\linewidth]{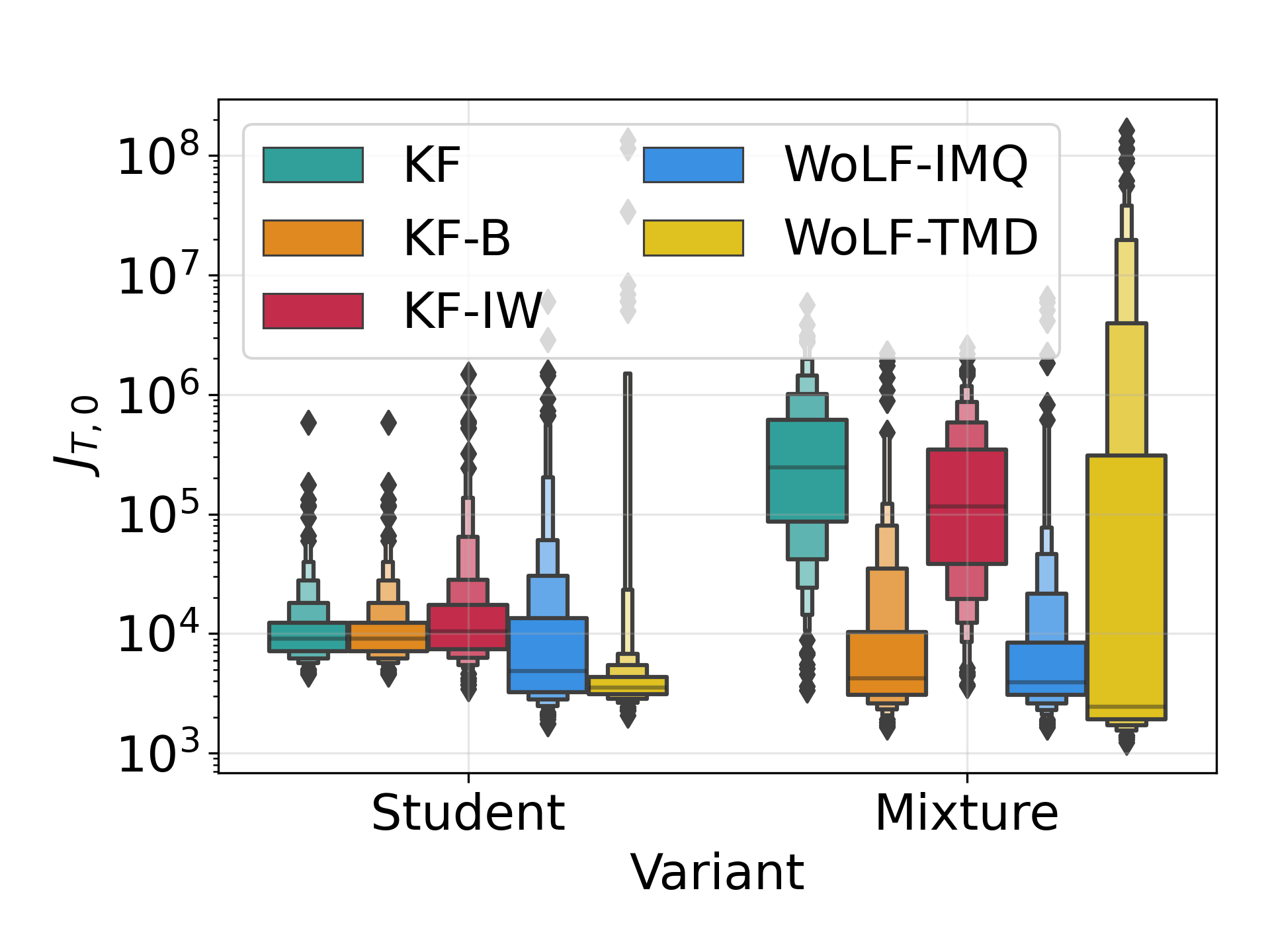}
    \removewhitespace[-8mm]
    \caption{
        Distribution  (across 500 2d tracking trials)
        of RMSE for first component of the state vector, $J_{T,0}$.
        Left panel: Student observation model.
        Right panel: Mixture observation model.
    }
    \label{fig:2d-ssm-sample}
\end{figure}

The results in Figure \ref{fig:linear-ssm-sample-runs}
hold for multiple trials as shown in Figure \ref{fig:2d-ssm-sample},
which plots the distribution
of the errors in the first component of the state vector.
The heavy tail of the \mWlfMd on the mixture observations variant,
and the distribution of $J_{T,k}$  for all $k$ are studied in
Appendix \ref{subsec:2d-tracking-further-results}.

\eat{
For the first variant, the \mWlfImq and the \mWlfMd have the lowest median $J_{T,0}$ among the methods.
The \mWlfMd has slightly lower median $J_{T,0}$ than the \mWlfImq, but the tails of the \mWlfMd are much longer
than any of the alternative methods.
For the second variant, and similar to the first variant,
we observe that \mWlfMd has the lowest median $J_{T,0}$, but the largest tail among the methods
(see Appendix \ref{subsec:2d-tracking-further-results} for an explanation of this behaviour).
Next, the median $J_{T,0}$ under \mAgamenoni is slightly lower than the \mkf.
Finally, the distribution of $J_{T,0}$ for the \mWang and the \mWlfImq are comparable,
but the \mWlfImq has slightly lower median $J_{T,0}$.
}

\begin{table}[ht]
    \centering
    \begin{tabular}{c|cc}
        \toprule
        Method & Student & Mixture \\
        \midrule
         \mWang & 2.0x & 3.7x \\
         \mAgamenoni & 1.2x & 5.3x \\
         \mWlfImq (ours) & 1.0x  & 1.0x \\
         \mWlfMd (ours) &  1.0x & 1.0x \\
         \bottomrule
    \end{tabular}
    \caption{Mean slowdown rate over \mkf.}
    \label{tab:2d-ssm-running-time}
\end{table}

Table \ref{tab:2d-ssm-running-time} shows
the median slowdown (in running time) to process the measurements relative to the \mkf.
The slowdown for method \texttt{X} is obtained 
diving the running time of method \texttt{X} over the running time of the \mkf.
Under the Student variant, the \mWlfImq, the \mWlfMd, and the \mAgamenoni have similar running time to the \mkf.
In contrast, the \mWang takes twice the amount of time.
Under the mixture variant, the \mWang and the \mAgamenoni are almost four times and five times slower than the \mkf respectively.
The changes in slowdown rate are due the number of inner iterations that were chosen during the first trial.

\subsection{Robust EKF for online MLP regression (UCI)}
\label{subsec:uci-corrupted}

In this section, we benchmark the methods using a corrupted version
of the tabular UCI regression datasets.\footnote{The dataset is available at \url{https://github.com/yaringal/DropoutUncertaintyExps}.}
Similar to other papers that deal with non-linear online learning
\citep[see, e.g.][]{lofi},
we consider a single-hidden-layer multi-layered perceptron (MLP)
with twenty hidden units and a real-valued output unit.
In this experiment, the state dimension (number of parameters in the MLP)
is $\nparams =(n_\text{in} \times 20 + 20) + (20 \times 1 + 1)$, where
$n_\text{in}$ is the dimension of the feature $\vx_t$.\footnote{
See Table \ref{tab:uci-description} for the values that $n_\text{in}$ takes for each dataset.
}
One of the main advantages of using a Bayesian filtering method for fitting neural networks
(compared to using \ogd)
is the ability to handle non-stationary distributions \citep[see e.g.][]{lofi}.
Below, we take a static state
 \citep[see e.g.][]{RVGA},
so that the prior predictive mean is $\vmu_{t|t-1} = \vmu_{t-1}$.
In Appendix \ref{experiment:training-neural-network} 
we study online learning with non-stationary environments.

Each trial is carried out as follows:
first, we randomly shuffle the rows in the dataset;
second, we divide the dataset into a warmup dataset (10\% of rows) and a corrupted dataset (remaining 90\% of rows);
third, we normalise the corrupted dataset using min-max normalisation from the warmup dataset;
fourth, with probability $p_\epsilon=0.1$,
we replace a measurement $\vy_t\in\real$ with a corrupted data point  $\vu_t \sim {\cal U}[-50, 50]$; and
fifth, we run each method on the corrupted dataset.

For each dataset and for each method, we evaluate the prior predictive
$\text{RMedSE} = \sqrt{\text{median}\{(\vy_t - h_t(\vmu_{t | t- 1})) ^ 2\}_{t=1}^T}$,
which is the squared root of the median squared error
between the measurement $\vy_t$ and the prior predictive $h_t(\vmu_{t|t-1}) = h(\vmu_{t | t-1}, \vx_t)$.\footnote{We use median instead of mean because we have outliers in measurement space.}
Here, $h$ is the MLP.
We also evaluate the average time step of each method, i.e.,
we run each method and divide the total running time by the number of samples in the corrupted dataset.

\paragraph{Results}

 Figure \ref{fig:uci-per-step-time}
shows the percentage change of the RMedSE and the percentage change of running time
with respect to those of the \ogd
for all corrupted UCI datsets.
\begin{figure}[ht]
    \centering
    \includegraphics[width=\columnwidth]{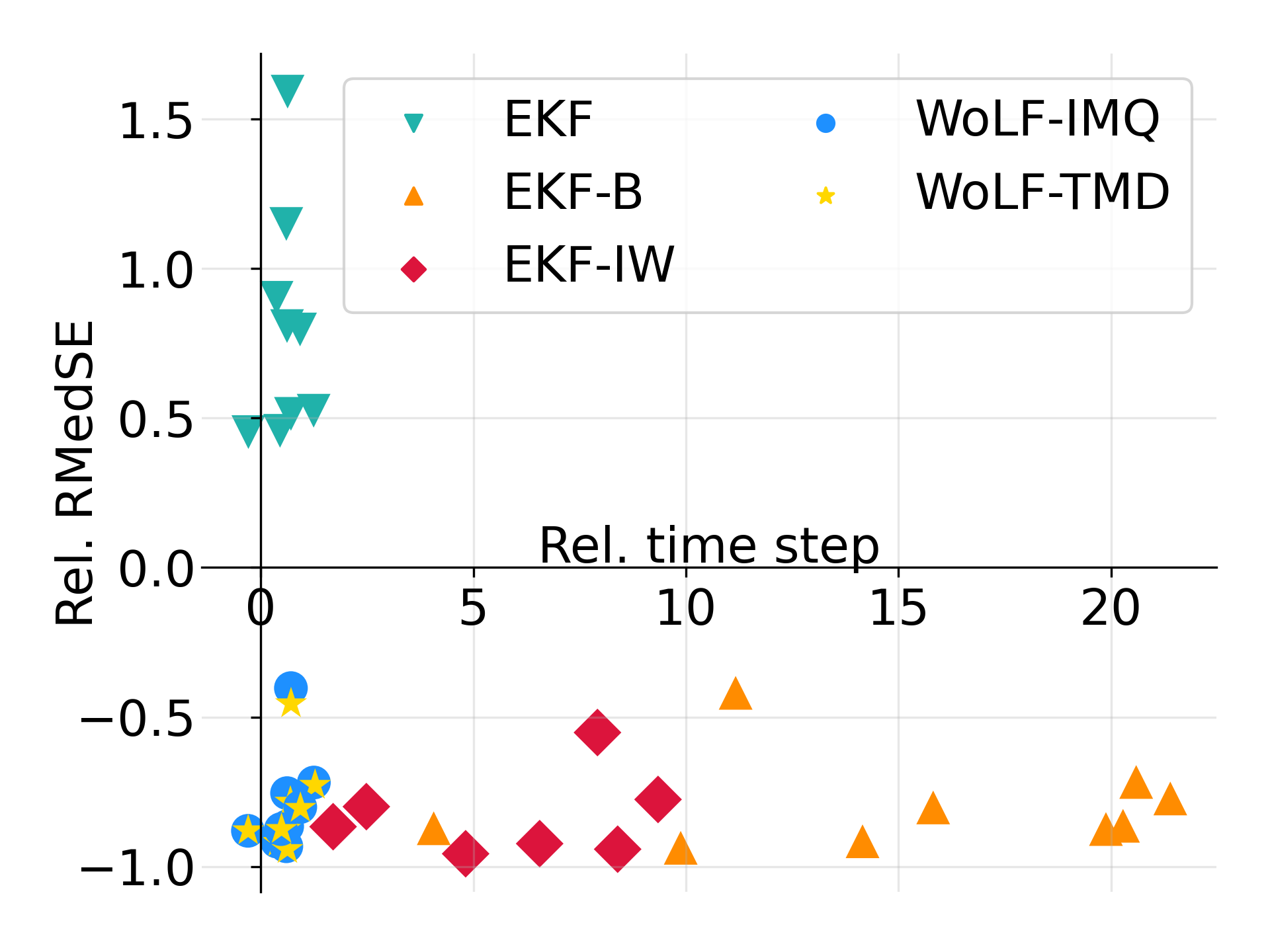}
    \removewhitespace 
    \caption{
    RMedSE
    versus time per step
    (relative to the \ogd minus $1$)
    across the corrupted UCI datasets.
    }
    \label{fig:uci-per-step-time}
\end{figure}
Given the computational complexity of the remaining methods, ideally, a robust Bayesian alternative to the \ogd
should be as much to the left as possible on the $x$-axis (rel. time step)
and as low as possible on the $y$-axis (rel. RMedSE).
We observe that the \mWlfImq and the \mWlfMd have both of these traits.
In particular, we observe that the only two points in the third quadrant are those of the \mWlfImq and the \mWlfMd.
Note that the \mAgamenoniExtended and the \mWangExtended have much higher relative running time and
the \mkfExtended has much higher relative RMedSE.

\eat{
We provide results of the RMedSE for all datasets,
perform a sensitivity analysis, and comparisons of the RMedSE versus
running time for the Kin8nm dataset in Appendix \ref{subsec:2d-tracking-further-results}.
In Appendix \ref{experiment:training-neural-network} we show that our method is robust
when used for online learning problems with non-stationary environments. 
}

\subsection{Robust EnKF for Lorenz96 model}
\label{experiment:lorenz96}

We consider a modified version of the Lorenz96 model
that is commonly used to simulate the atmosphere
\citep[see e.g.][]{lorenz2006, arnold2013stochastic} 

For a fixed $\Delta>0$, the SSM is given by 
\begin{equation}\label{eq:lorenz96-model}
\begin{aligned}
    \frac{\vtheta_{t+\Delta,i}-\vtheta_{t,i}}{\Delta}
    &= \Big(\vtheta_{t, i+1} - \vtheta_{t, i-2}\Big) \vtheta_{t, i-1} - \vtheta_{t,i} + \vphi_{t,i}, \\
    \vy_{t,i} &= 
    \begin{cases}
        \vtheta_{t,i} + \vvarphi_{t,i} & \text{w.p. } 1 - p_\epsilon,\\
        100 & \text{w.p. } p_\epsilon.
    \end{cases}
\end{aligned}
\end{equation}
Here, $\vtheta_{t,k}$ is the value of the state component $k$ at step $t$,
$\vphi_{t,i} \sim {\cal N}(8, 1)$, $\vvarphi_{t,i} \sim {\cal N}(0, 1)$, $p_\epsilon = 0.001$,
$i = 1, \ldots, d$, $t = 1, \ldots, T$, with $T \gg 1$ the number of steps,
and we use the convention $\vtheta_{t, d + k} = \vtheta_{t, k}$, $\vtheta_{t, -k} = \vtheta_{t, d - k}$.
Similar to \citet{roth2017enekf}, we integrate the state process in \eqref{eq:lorenz96-model}
to match the formulation in \eqref{eq:ssm-latent} using the Runge-Kutta-4 (RK4) procedure
with discretisation step $\Delta=0.05$, integrated over $T=10^3$ steps, $N=1,000$ number of particles,
and $\nout=\nparams=100$.
A run of the state process is shown in Figure \ref{fig:simulation-lorenz96}.
Note that the probability of an outlier happening on any state component at any timestep is
$p_\epsilon\times d= 0.1$.

\begin{figure}[htb]
    \centering
    \includegraphics[width=\columnwidth]{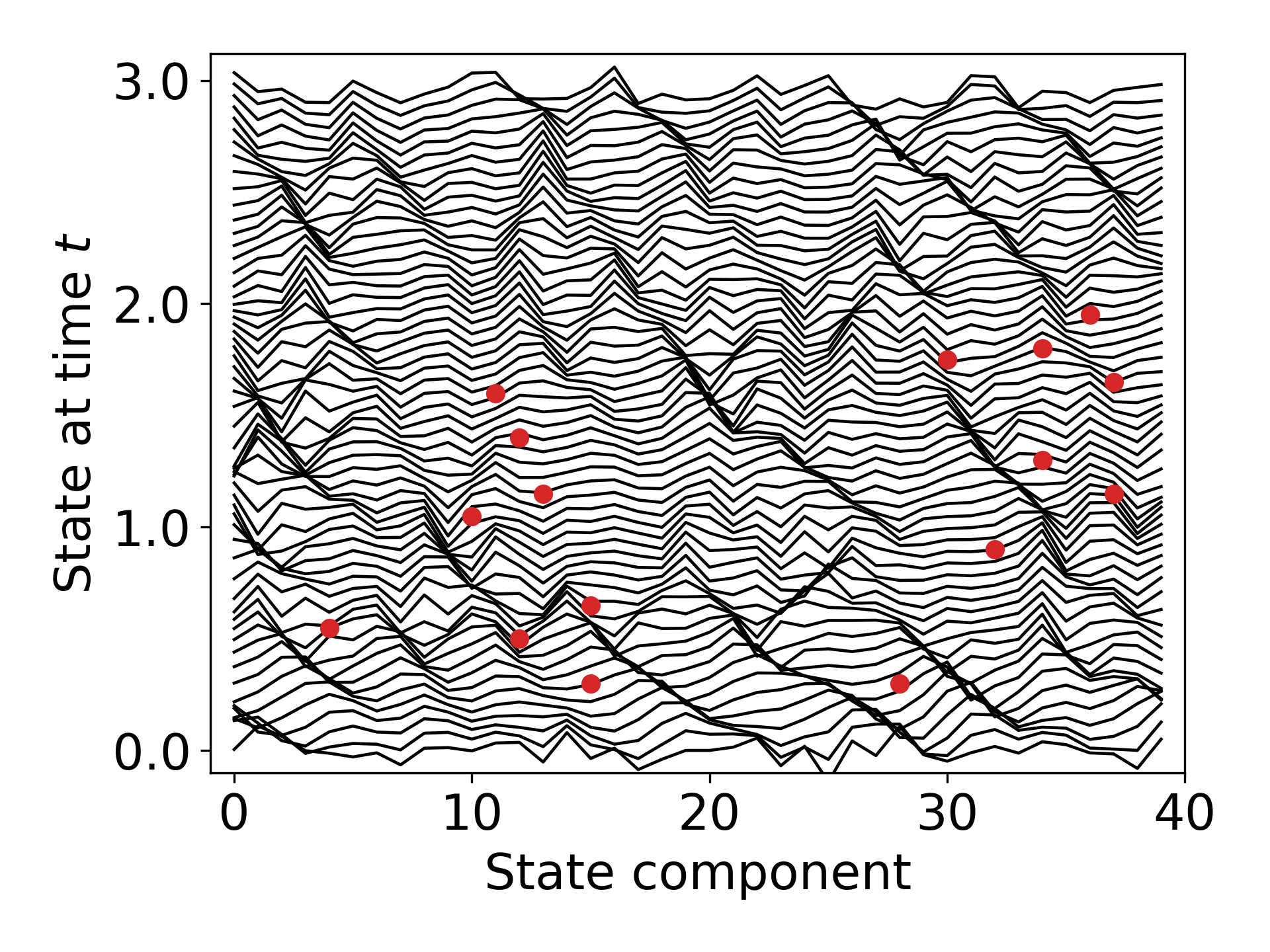}
    \removewhitespace
    \caption{
    Sample of the Lorenz process with $d=40$.
    Here, the ``waves'' move westward.
    The red dots represent where measurement outliers occur.
    We take $p_\epsilon = 0.01$ and $\Delta=0.05$ integrated over $T=60$ steps.
    }
    \label{fig:simulation-lorenz96}
\end{figure}

In this experiment, \mEnkf is the baseline.
As in \citet{roth2017ensemble},
we use the metric $L_t = \sqrt{\frac{1}{d}(\vtheta_t - \vmu_t)^\intercal (\vtheta_t - \vmu_t)}$
to measure the in-state RMSE.

\paragraph{Results}
An evaluation of $L_t$ for the \mEnkf, the \mWEnkfHard, the \mWEnkfSoft, and the \mHubEnkf is shown
in the top row of Figure \ref{fig:lorenz96-methods-comparison}.
The grey vertical lines denote timesteps where an outlier event happened, i.e., at least
one entry of $\vy_t$ is $100$.
The top plot shows that the \mHubEnkf and \mWEnkfHard have an almost-identical behaviour.
However, 
the bottom row of Figure \ref{fig:lorenz96-methods-comparison} shows
that the \mWEnkfHard and the \mWEnkfSoft are more robust to the choice of threshold $c$
compared to the \mHubEnkf.
This is because the \mHubEnkf makes updates with outlier observations clipped at $c$, whereas
WoLF-like methods disregard error measurements above $c$.
\begin{figure}[htb]
    \centering
    \includegraphics[width=\columnwidth]{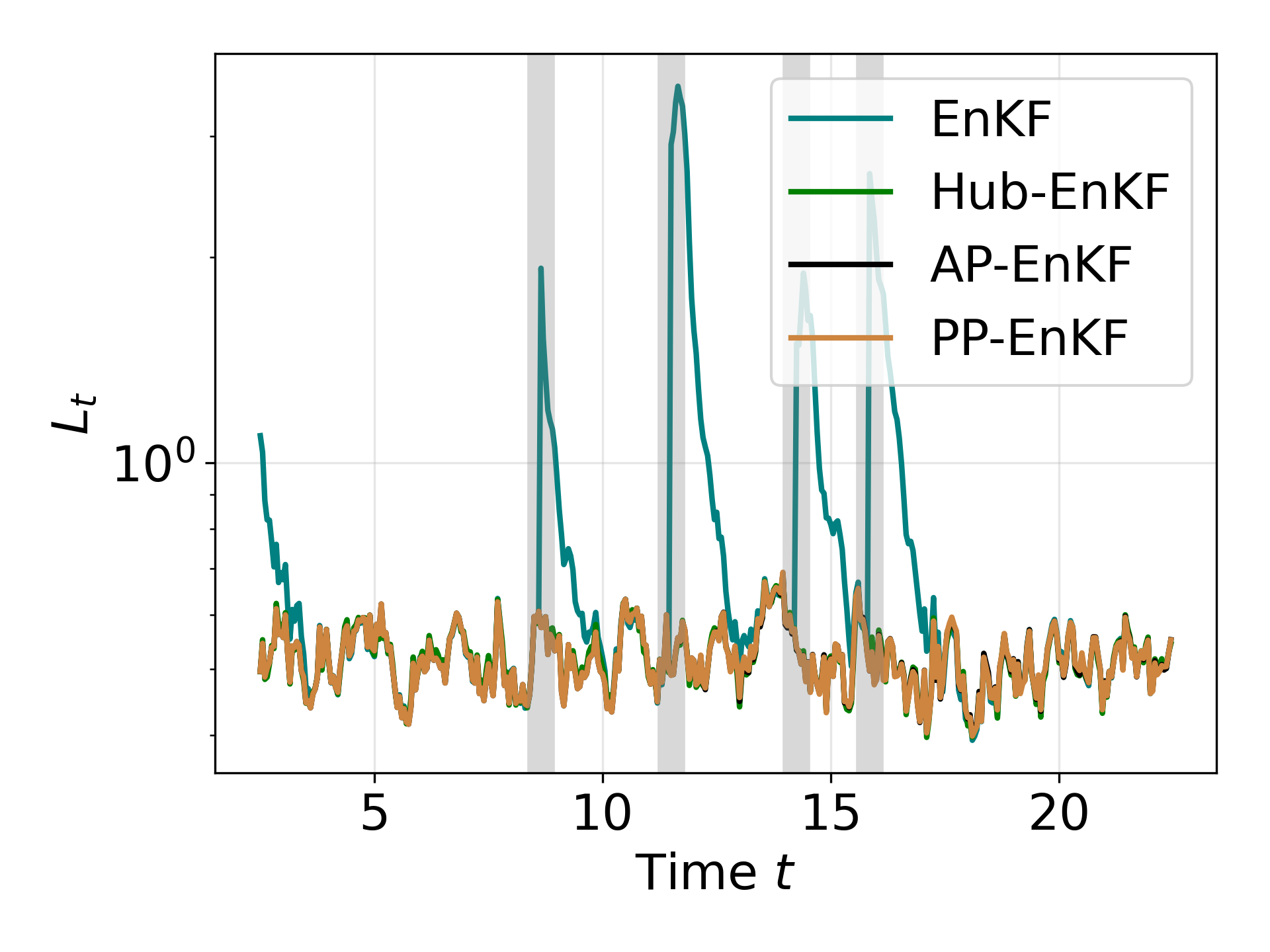}
    \includegraphics[width=\columnwidth]{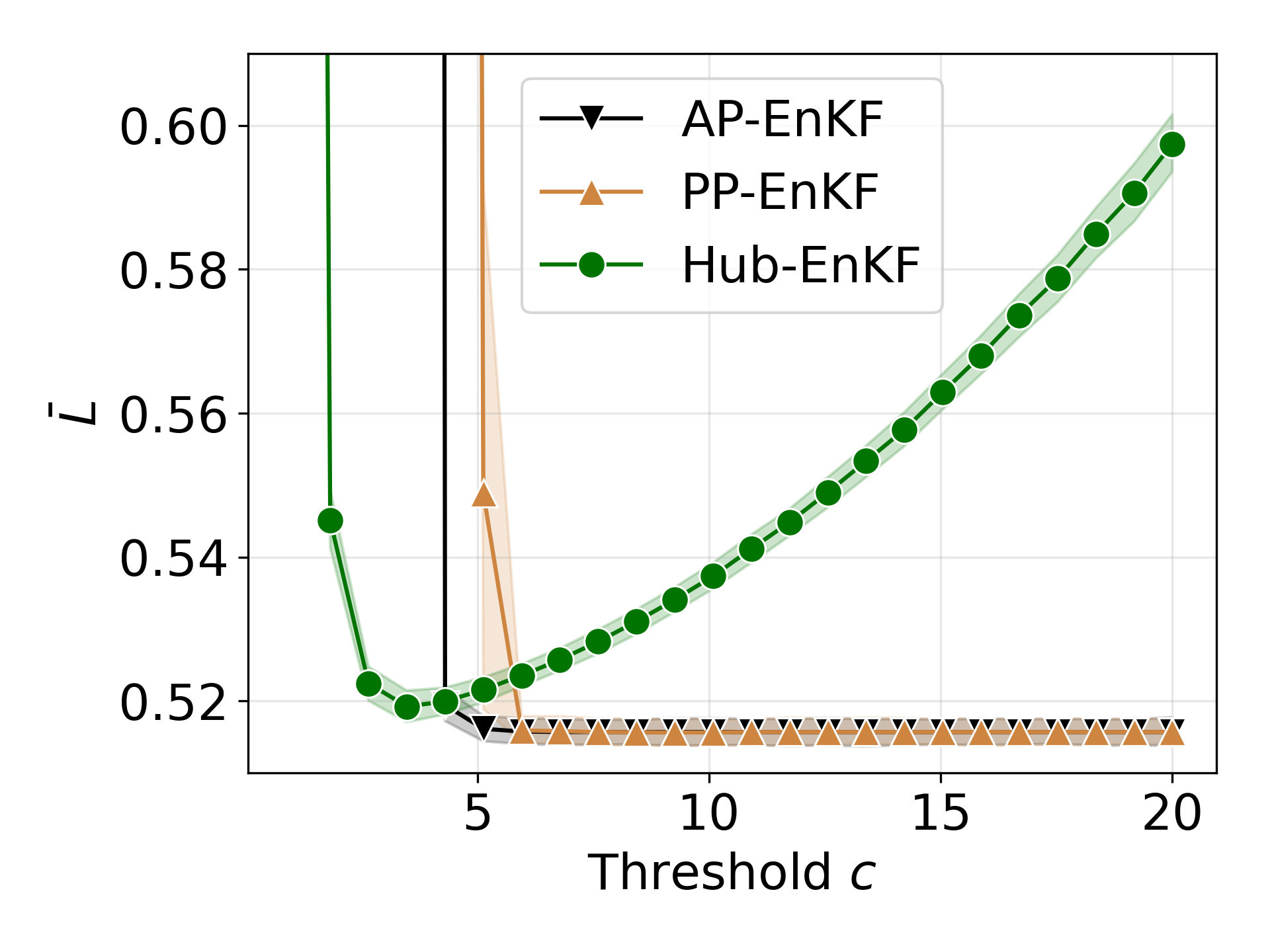}
    \removewhitespace
    \caption{
        The top row shows a sample run of the \mEnkf, the \mWEnkfHard, the \mWEnkfSoft, and the \mHubEnkf;
        outlier events are shown in grey vertical bars.
        The bottom row shows the bootstrap estimate of $L_T$ over 20 runs and 500 bootstrapped samples
        as a function of the $c$ hyperparameter.
    }
    \label{fig:lorenz96-methods-comparison}
\end{figure}
We present the results of the EnKF when the number of particles is less
than the number of state components in 
Appendix \ref{subsec:lorenz96-further-results}.
In the experiment, we modify the algorithms to make use of
the covariance inflation correction first proposed by \citet{anderson1999enkf-covariance-inflation}.
The conclusions from this section extend to the 
covariance inflation case.

\section{Conclusion}
We introduced a provably robust filtering algorithm based on generalised Bayes
which we call the weighted observation likelihood  filter or WoLF.
Our algorithm is as fast as the KF, has closed-form update equations, and is
straightforward to apply to various filtering methods.
The superior performance of the WoLF is shown on a wide range of filtering problems.
In contrast, alternative robust methods either have higher computational complexity than the WoLF, or
similar computational complexity but not higher performance.

Future work will investigate how to overcome the limitations of our approach. For example, 
(i) not being robust to outliers in the state-process,
(ii)  the assumption of a known covariance $\vR_t$,
(iii)  the assumption of known dynamics for the covariance $\vQ_t$, and
(iv) the assumption of a unimodal posterior.

\section*{Acknowledgements}
JK and FXB were supported by the EPSRC grant [EP/Y011805/1].

\section*{Impact statement}
This paper presents work whose goal is to advance the field of Machine Learning. There are many potential societal consequences of our work, none which we feel must be specifically highlighted here.

\bibliography{refs}
\bibliographystyle{icml2024}

\newpage
\appendix
\onecolumn
\vspace*{-18pt}
\section*{\LARGE \centering  Supplementary Materials
}
\vspace{14pt}

The Appendix is structured as follows.
Appendix \ref{sec:related-overview} provides an overview of  outlier-robust filtering methods.
Appendix \ref{sec:imq-as-map} derives the  WoLF-MD method from
a maximum-a-posteriori (MAP) estimate perspective.
Appendix \ref{sec:weighting-functions} collects proofs; more precisely, Appendix \ref{proof:weighted-kf} proves Proposition \ref{prop:weighted-kf} 
and Appendix \ref{app:proof_robustness} proves Theorem \ref{prop:weighted-kf-bounded-pif}.
We also show robustness for the IMQ, MD, and TMD weighting functions.
Next, in Appendix \ref{sec:wlf-extensions}, we discuss the
exponential-family, multi-output weighting function, and EnKF extensions to the WoLF.
Finally, in Appendix \ref{sec:additional-experiments},
we investigate the 2d tracking problem of Section \ref{experiment:2d-tracking} in more detail; we conduct robustness checks and
we introduce an additional experiment for corrupted non-linear non-stationary learning.

\section{
Background on existing robust filters
and related methods
}
\label{sec:related-overview}

Robust Bayesian filtering and the minimum variance estimator dates back to \citet{masreliez1975approximate} and \citet{masreliez1977robust}.
These methods propose a modified KF-recursion for the linear SSM written in terms of the score function of the measurement prior predictive.
In these early works, inference relies on Monte Carlo.

\citet{west1981robust} follows these earlier works
and proposes KF-like updates for non-normal measurement models.
That paper also studies whether several popular likelihood functions are robust in the sense of ``ignoring outliers'' --- the analysis includes the Student-t, power exponential, Huber, logistic, and stable-law likelihoods.
For the case of Student-t likelihood with one degree of freedom (i.e., Cauchy)
and linear dynamics, the update equation for the posterior mean derived in \citet{west1981robust} is equivalent to ours when using IMQ weights.
However, the approach taken in \citet{west1981robust} cannot recover our TMD scheme, due to being tied to  a given choice of measurement model.
Furthermore, \citet{west1981robust} does not provide a  theoretical foundation to use their approach in non-linear measurement models.

The work in \citet{meyr1984structure} proposes a scheme to eliminate observations that  a KF procedure labels as outliers.
Their methodology relies on a ``secondary decision system'' which checks for discrepancies between the predicted mean and the observation,
eliminating observations with high discrepancies.
This scheme is analogous to the TMD scheme, however, it is not shown to be provably robust. 

Another alternative to robustify measurement models against outliers is the work in \citet{agee1980robust}. Their paper introduces Gaussian-mixture models for robust filtering and smoothing. Their inference method is based on particle filtering.

To the best of our knowledge, the first work that proposes robust filtering in the context of robust statistics is \citet{calvet2015robust}.
Their robust filter follows \citet{masreliez1977robust} and is based on a ``Huberisation'' of the derivative of the log-measurement density (score function), which then they integrate.
See \citet{schick1994robust} for a comprehensive review of classical robust-KF methods for linear SSMs.
Recently proposed provably-robust methods include the work by \citet{boustati2020generalised} and \citet{cao2022robust}.

Finally, we discuss similarities and differences with the Bayesian learning rule (BLR) line of work by \citet{khanblr}.
BLR also allows the replacing of the likelihood with a loss function, but that loss depends only on the parameter and the observation, i.e., BLR uses $\ell_t(\vtheta) = L(\vy_t, f_t(\vtheta))$, with $L: \real^\nout\times\real^\nout$ a loss function.
By contrast, our loss depends on the weight $W_t$, which in turn depends on the belief state,
via $\hat{\vy}_t = f_t(\vmu_{t|t-1})$, which in turn is a function of all the past data, i.e.,
our loss has the form $\ell_t(\vtheta) = L(\vy_{1:t}, f_t(\vtheta))$.
In addition, WoLF is designed for the online inference setting, whereas BLR, at least in its standard form, is designed for offline inference.
Finally, note that the exponential family extension of BLR \citet{Lin2019}
has also been proposed as a way to get robustness by using scale mixture posteriors.
However, this induces robustness in parameter space, whereas we focus on robustness in observation space.
In particular, the weighting term for the BLR mixture model depends on the distance in parameter space,
$\vtheta_t - \vmu_t$, whereas ours depends on the difference in observation space, $\vy_t - \hat{\vy_t}$.

\subsection{Variational-based methods}
In this section, we provide an overview of variational-Bayes (VB) robust filtering methods.
As above, $\vtheta_t$ is the state vector of interest and 
$\vPsi_t$ are additional state parameters.
Given the SSM \eqref{eq:ssm-latent}
and measurement model $p(\vy_t \vert \vtheta_{t}, \vPsi_t)$,
VB-based methods seek an approximate posterior distribution over the extended state process $\vPhi_t = (\vtheta_t, \vPsi_t)$
that factorises as
\begin{equation}
    q(\vPhi_t) = \prod_{k=1}^K q(\vPhi_{t,k}),
\end{equation}
with $\vPhi_{t}= (\vPhi_{t,1}, \ldots, \vPhi_{t,K})$, and $K$ the number of collections.
It can be shown that the log-density $q^*$ that minimises the KL divergence
between the true posterior distribution and the variational distribution is given by
\begin{equation}\label{eq:vb-target}
    \log q^*(\vPhi_{t,k}) = \expectation{\neg k}{\log p(\vy_t, \vPhi_t)} + C,
\end{equation}
where $C$ is the normalising constant of $q^*$, and
the notation $\expectation{\neg k}{\cdot}$ denotes the conditional expectation given
all elements in $\vPhi_{t}$ except from $\vPhi_{t,k}$.
See Section 10.1.1 in \citet{bishop06} for details. Below, we discuss the robust VB-based filtering variants we use in the paper.

\subsection{KF-IW method of \citet{Agamennoni2012}}

\citet{Agamennoni2012} extend the state-space to be $\vPsi_t = (\vtheta_t, \vR_t)$,
where $\vR_t$ is the measurement covariance.
Note that the classical KF setting, $\vR_t$ is known.
The SSM is of the form
\begin{equation}
\begin{aligned}\label{eq:agamenoni-ssm}
    p(\vtheta_t \vert \vtheta_{t-1}) &= \normdist{\vtheta_t}{\vF_t\vtheta_{t-1}}{\vQ_t},\\
    p(\vR_t) &= {\cal W}^{-1}(\vR_t\,|\,\nu\vLambda, \nu),\\
    p(\vy_t \vert \vtheta_t, \vR_t) &= \normdist{\vy_t}{\vH_t\vtheta_t}{\vR_t},
\end{aligned}
\end{equation}
where ${\cal W}^{-1}(\cdot \vert {\bf P}, \eta)$ is the density of an inverse Wishart distribution
with positive-definite scale matrix ${\bf P}\in\real^{\nparams\times\nparams}$, $\eta > \nparams - 1$ degrees of freedom, and
$\nu  > 0$ is the noise-scaling hyperparameter.
They consider the class of variational distributions
\begin{equation}\label{eq:agamenoni-vb-factorisation}
    q(\bar{\vtheta}, \bar{\vR}) = q(\vtheta_0) q(\vR_0) \prod_{t=1}^T q(\vtheta_t \vert \vtheta_{t-1}) q(\vR_t),
\end{equation}
with $\bar{\vtheta} = (\vtheta_0, \ldots, \vtheta_T)$ and
$\bar{\vR} = (\vR_0, \ldots, \vR_T)$.
They show that the class of VB posteriors \eqref{eq:vb-target},
under the model in \eqref{eq:agamenoni-ssm} and \eqref{eq:agamenoni-vb-factorisation},
take the form
\begin{align}
    q(\vtheta_t \vert \vtheta_{t-1}) &= \normdist{\vtheta_t}{\vmu_t}{\vSigma_t},\\
    q(\vR_t) &= {\cal W}^{-1}(\vR_t \vert \nu_t \vLambda_t, \nu_t),
\end{align}
with $\vmu_t$, $\vSigma_t$, $\vLambda_t$, and $\nu_t$ specified in Algorithm \ref{algo:agamenoni-step}.
\begin{algorithm}[htb]
\begin{algorithmic}
    \REQUIRE $\vF_t$, $\vQ_t$, $\vmu_{t-1}$, $\vSigma_{t-1}$ // predict step
    \STATE $\vmu_{t|t-1} \gets \vF_t\,\vmu_{t-1}$
    \STATE $\vSigma_{t|t-1} \gets \vF_t\,\vSigma_{t-1} \,\vF_t^\trans + \vQ_t$
    \STATE $\vmu_t, \vSigma_t \gets \vmu_{t | t-1}, \vSigma_{t|t-1}$
    \REQUIRE $\vy_t$, $\vH_t$, $\vR_t$, $\ell\in\real_+$  // update step
    \FOR{$i = 1, \ldots, I$}
    \STATE $\vS_t \gets (\vy_t - \vH_t\,\vmu_{t})(\vy_t - \vH_t\,\vmu_{t})^\intercal + \vH_t^\intercal\,\vSigma_{t}\,\vH_t$
    \STATE $\vLambda_t \gets (\ell+1)^{-1}(\ell\,\vR_0 + \vS_t)$ 
    \STATE $\vK_t \gets (\vH_t\,\vSigma_{t|t-1} \vH_t + \vLambda_t)^{-1}\vH_t^\intercal\,\vSigma_{t|t-1}$
    \STATE $\vmu_t \gets \vmu_{t | t- 1} + \vK_t^\intercal(\vy_t - \vH_t^\intercal\,\vmu_{t|t-1})$
    \STATE $\vSigma_t \gets \vK_t^\intercal\,\vLambda_t\,\vK_t + (\vI - \vH_t\,\vK_t)^\intercal\,\vSigma_{t|t-1}(\vI - \vH_t\,\vK_t)$
    \ENDFOR
\end{algorithmic}
\caption{
    \citet{Agamennoni2012} predict and update  step for i.i.d. noise with $I\geq1$
    inner iterations.
}
\label{algo:agamenoni-step}
\end{algorithm}
In this method the hyperparameters are
the number of iterations $I$ and
the scaling term $\ell$.
The prior measurement covariance is $\vR_0$.

\subsection{KF-B method of \citet{wang2018}}
\citet{wang2018} extend the state-space to be $\vPsi_t = (\vtheta_t, \xi_t, w_t)$,
where $w_t$ is an outlier event and $\xi_t$ is its probability.
The SSM is of the form:
\begin{equation}
\begin{aligned}
    p(\vtheta_t \vert \vtheta_{t-1}) &= \normdist{\vtheta_t}{\vF_t\vtheta_{t-1}}{\vQ_t},\\
    p(\xi_t) &= \text{Beta}(\xi_t \vert \alpha_0, \beta_0),\\
    p(\rho_t \vert \xi_t) &= \text{Bern}(\rho_t \vert \xi_t),\\
    p(\vy_t \vert \vtheta_t, \vR_t \rho_t) &=
    \begin{cases}
        \normdist{\vy_t}{h_t(\vtheta_t)}{\vR_t} & \text{if } \rho_t = 1,\\
        1 & \text{if } \rho_t = 0 .
    \end{cases}
\end{aligned}
\end{equation}
In the above equations, $\text{Beta}(\cdot | a, b)$ is the density of a Beta distribution with \textit{shape} parameters $a$ and $b$,
and $\text{Bern}(\cdot \vert \pi)$ is the mass of a Bernoulli random variable with parameter $\pi\in[0,1]$.
They consider the class of variational distributions
\begin{equation}
    q(\bar{\vtheta}, \bar{\rho}, \bar{\xi}) = q(\vtheta_0) q(\rho_0) q(\xi_0) \prod_{t=1}^T q(\vtheta_t \vert \vtheta_{t-1})  q(\rho_t) q(\xi_t),
\end{equation}
with
$\bar{\vtheta} = (\vtheta_0, \ldots, \vtheta_T)$,
$\bar{\rho} = (\rho_0, \ldots, \rho_T)$,
$\bar{\xi} = (\xi_0, \ldots, \xi_T)$.
We provide the the predict and update equations in Algorithm \ref{algo:wang-step}.
\begin{algorithm}[htb]
\begin{algorithmic}
    \REQUIRE $\alpha_0, \beta_0$, $\vmu_{t-1}$, $\vSigma_{t-1}$ 
    \REQUIRE $\vF_t$, $\vQ_t$ // predict step
    \STATE $\vmu_{t|t-1} \gets \vF_t\,\vmu_{t-1}$
    \STATE $\vSigma_{t|t-1} \gets \vF_t\,\vSigma_{t-1}\,\vF_t^\trans + \vQ_t$
    \STATE $\vmu_t, \vSigma_t \gets \vmu_{t | t-1}, \vSigma_{t|t-1}$
    \REQUIRE $\vy_t$, $\vH_t$, $\vR_t$, $\text{tol.} \ll 1$  // update step
    \STATE $\rho_t, \alpha', \beta' \gets 1, \alpha_0, \beta_0$
    \FOR{$i = 1, \ldots, I$}
    \IF{$\rho_\epsilon < \text{tol.}$}
        \STATE $\vmu_t \gets \vmu_{t|t-1}$
        \STATE $\vSigma_t \gets \vSigma_{t|-1}$
    \ELSE
        \STATE $\bar{\vR}_t \gets \vR / \rho_t$
        \STATE $\hat{\vy}_t \gets \vH_t\,\vmu_{t|t-1}$
        \STATE $\vSigma_{t}^{-1}  \gets \vSigma_{t\vert t-1}^{-1} +
        \,\vH_{t}^{\top}\, \bar{\vR}_{t}^{-1} \,\vH_{t}$
        \STATE $\vK_t \gets \,\vSigma_t\,\vH_t^\trans\,\bar{\vR}_t^{-1}$
        \STATE $\vmu_{t}  \gets \vmu_{t\vert t-1}+ \vK_t \left(\vy_{t}-\hat{\vy}_{t}\right)$
    \ENDIF

    \STATE $\vB_t \gets \expectation{\vtheta \sim {\cal N}(\vmu_t, \vSigma_t)}{(\vy_t - h_t(\vtheta)(\vy_t - h_t(\vtheta))^\intercal }$
    \STATE $\log\bar\pi_t \gets \Psi(\alpha') - \Psi(\alpha' + \beta' + 1)$
    \STATE $\log(1 - \bar\pi_t) \gets \Psi(\beta' +1) - \Psi(\alpha' + \beta' + 1) $
    \STATE $\rho_t \gets \frac{\exp(\log\bar\pi_t - \text{Tr}(\vB_t\vR_t^{-1}) / 2)}{\exp(\log\bar\pi_t - \text{Tr}(\vB_t\vR_t^{-1}) / 2) + \exp(\log(1 - \bar\pi_t))}$
    \STATE $\alpha' \gets \alpha_0 + \rho_t$
    \STATE $\beta' \gets \beta_0 + 1 - \rho_t$
    \ENDFOR
\end{algorithmic}
\caption{
    \citet{wang2018} predict and update step with $I\geq1$
    inner iterations.
}
\label{algo:wang-step}
\end{algorithm}
Here, the hyperparameters are the prior rates $\alpha_0$ and $\beta_0$, and the number of inner iterations $I$,
$\Psi(\cdot)$ is the digamma function, and
$\vB_t$ is of closed form after linearising the measurement function.

\subsection{\citet{ting2007}}

\citet{ting2007} extend the state-space to be $\vPsi_t = (\vtheta_t, w_t)$,
where $w_t$ is a weighting term for the observation covariance $\vR$.
In their method, $\vR$ is known and fixed.
The SSM is of the form:
\begin{equation}\label{eq:ting-ssm}
\begin{aligned}
    p(\vtheta_t \vert \vtheta_{t-1}) &= \normdist{\vtheta_t}{\vF\vtheta_{t-1}}{\vQ},\\
    p(w_t) &= {\rm Gam}(w_t \vert a_{w}, b_{w}),\\
    p(\vy_t \vert \vtheta_t) &= \normdist{\vy_t}{\vH\vtheta_t}{\vR / w_t},
\end{aligned}
\end{equation}
for a diagonal dynamics covariance $\vQ$,
$a_w, b_w > 0$,
and diagonal observation covariance $\vR$.
They consider the class of variational distributions
\begin{equation}
    q(\bar w, \bar{\vtheta}) = q(\vtheta_0) \prod_{t=1}^T q(\vtheta_t \vert \vtheta_{t-1}) q(w_t),
\end{equation}
with
$\bar{\vtheta} = (\vtheta_0, \ldots, \vtheta_T)$ and
$\bar{w} = (w_0, \ldots, w_T)$.
They show, for known $\vF$, $\vH$, $\vQ$, and $\vR$, that the variational distributions are of the form
\begin{align}
    q(\vtheta_t \vert \vtheta_{t-1}) &= \gauss(\vtheta_t \vert \vmu_t, \vSigma_t),\\
    q(w_t) &= \text{Gam}(w_t \vert a_{w,t}, b_{w,t}),
\end{align}
where
\begin{equation}
\begin{aligned}
    a_{w,t} &= a_{w} + \frac{1}{2},\\
    b_{w,t} &= b_w + \mathbb{E}_{\vtheta \sim {\cal N}(\vmu_{t}, \vSigma_t)}[(\vy_t - \vH\vtheta)^\intercal \vR^{-1}(\vy_t - \vH\vtheta)], \\
    \vSigma_t^{-1} &= \vQ^{-1} + v_t\vH^\intercal\vR^{-1}\vH,\\
    \vK_t &= \vSigma_t\vH^\intercal\vR^{-1},\\
    \vmu_{t} &= \vF\vmu_{t-1}  v_t\,\vK_t(\vx_t - \vH\vF\vmu_{t-1}),\\
    v_t &= \frac{a_{w,t} + \frac{1}{2}}{b_{w,t} + \mathbb{E}_{\vtheta \sim {\cal N}(\vmu_{t}, \vSigma_t)}[(\vy_t - \vH\vtheta)^\intercal \vR^{-1}(\vy_t - \vH\vtheta)]}.
\end{aligned}
\end{equation}
Their method assumes no prior knowledge of either the measurement matrix $\vH$ or the projection matrix $\vF$.
These are estimated using the EM algorithm.
Assuming known $\vH$ and $\vF$ --- as we do in this paper ---
allows us to bypass the M-step.
However,
this is detrimental to their approach since no information about the posterior covariance is propagated forward.

\subsection{\citet{Huang2016}}
\citet{Huang2016} extend the state-space to be $\vPsi_t = (\vtheta_t, \vR_t, \nu_t, w_t)$,
where
$\vR_t$ is the measurement covariance, 
$w_t$ is a weighting term for the measurement covariance, and
$\nu_t$ are the degrees of freedom for the weighting term.
The SSM takes the form:
\begin{equation}
\begin{aligned}
    p(\vtheta_t \vert \vtheta_{t-1}) &= \normdist{\vtheta_t}{f_t(\vtheta_{t-1})}{\vQ_t},\\
    p(\nu_t) &= {\rm Gam}(\nu_t\,\vert\,a_t, b_t),\\
    p(w_t) &= {\rm Gam}(\lambda_t \vert \nu_t / 2, \nu_t / 2),\\
    p(\vR_t) &= {\cal W}^{-1}(\vR_t\,|\,\vLambda_t, u_t),\\
    p(\vy_t \vert \vtheta_t, \vR_t) &= \normdist{\vy_t}{h_t(\vtheta_t)}{\vR_t/w_t},
\end{aligned}
\end{equation}
with $\vQ_t = \text{diag}(q_{t,1}, \ldots, q_{t,D})$.
Note that their method combines the SSMs in \citet{ting2007} and \citet{Agamennoni2012}.

\section{WoLF-MD as a MAP estimator}\label{sec:imq-as-map}
In this section we show how to derive WoLF-MD as a MAP estimator. This is an alternative derivation that circumvents the use of generalised Bayes.

\subsection{Deriving the Mahalanobis IMQ term}

Consider the modified observation model of \eqref{eq:ting-ssm}:
\begin{align}
p(w_{t}) & ={\rm Gam}(w_{t}\vert\alpha,\beta),\\
p(\vy_{t}\vert\hat{\vy}_{t}) & =\gauss(\vy_{t}\vert\hat{\vy}_{t},w_{t}^{-1}\vR_{t}),
\end{align}
with $\hat{\vy}_t =  \vH_t \vmu_{t|t-1}$ known at time $t$, and $\alpha,\beta>0$.
The posterior on $w_{t}$ is
\begin{align}
p(w_{t}\vert\vy_{t}) & \propto w_{t}^{\alpha-1}e^{-\beta w_{t}}\left|w_{t}\vR_{t}^{-1}\right|^{1/2}\exp\left(-\frac{1}{2}\ve_{t}^{\top}w_{t}\vR_{t}^{-1}\ve_{t}\right)\\
 & \propto{\rm Gam}\left(w_{t}\vert\alpha+\frac{n_{y}}{2},\beta+\frac{1}{2}\left\Vert \ve_{t}\right\Vert _{\vR_{t}^{-1}}^{2}\right),
\end{align}
where $\ve_{t}=\vy_{t}-\hat{\vy}_{t}$ and $\left\Vert \vv\right\Vert _{\vA}=\left\Vert \vA^{1/2}\vv\right\Vert _{2}$
is the Mahalanobis distance.
The maximum-a-posteriori (MAP) estimate for $w_{t}$ is
\begin{equation}
W_{t}
= \argmax_{w_t\in\real^+} p(w_t \vert \vy_{t})
=\frac{\alpha+\frac{n_{y}}{2}-1}{\beta+\frac{1}{2}\left\Vert \ve_{t}\right\Vert _{\vR_{t}^{-1}}^{2}}\label{eq:w*}.
\end{equation}
For a given $c\in\real$, take the hyperparameters $\alpha$ and $\beta$ to be
\begin{align}
\alpha  =\frac{c^{2}-n_{y}+2}{2}\,, &\quad \beta =\frac{c^{2}}{2}\,,
\end{align}
where $n_{y}$ is the number of measurements.
We obtain 
\begin{align}
w_{t} & =\left(1+\frac{\left\Vert \ve_{t}\right\Vert _{\vR_{t}^{-1}}^{2}}{c^{2}}\right)^{-1/2}.
\end{align}
This is the Mahalanobis-based IMQ weighting function \eqref{eq:mahalanobis-weight}.
Substituting the MAP estimate back into the observation model yields the weighted
loglikelihood approximation
\begin{equation}
\log p\left(\vy_{t}\vert\hat{\vy}_{t}\right)\approx w_{t}^{2}\log\gauss\left(\vy_{t}\vert\hat{\vy}_{t},\vR_{t}\right).
\end{equation}

\subsection{Prior Uncertainty\label{subsec:Prior-Uncertainty}}
In this section, we take the measurement mean to be the output of a predictive model $\bar{\vy}_t$
with unknown parameter $\vtheta_{t}$. We let
\begin{align}
p\left(\vtheta_{t}\vert\vy_{1:t-1}\right)
& =\gauss\left(\vtheta_{t}\vert\vmu_{t\vert t-1},\vSigma_{t\vert t-1}\right),\\
p\left(\vy_{t}\vert \vtheta_{t},w_{t}\right)
& =\gauss\left(\vy_{t}\vert\bar{\vy}_t, w_{t}^{-1}\vR_{t}\right),
\end{align}
where $\bar{\vy}_t$ is given by \eqref{eq:linearised-measurement-mean}.
The joint posterior is 
\begin{equation}
\begin{aligned}
p\left(\vtheta_{t},w_{t}\vert\vy_{1:t}\right) & \propto w_{t}^{\alpha-1}e^{-\beta w_{t}}\exp\left(-\frac{1}{2}\left(\vtheta_{t}-\vmu_{t\vert t-1}\right)^{\top}\vSigma_{t\vert t-1}^{-1}\left(\vtheta_{t}-\vmu_{t\vert t-1}\right)\right)\nonumber \\
 & \qquad\times\left|w_{t}\vR_{t}^{-1}\right|^{1/2}\exp\left(-\frac{1}{2}\left(\vy_{t}-\bar{\vy}_{t}\right)^{\top}w_{t}\vR_{t}^{-1}\left(\vy_{t}-\bar{\vy}_t\right)\right)\\
 & \propto w_{t}^{\alpha-1+n_{y}/2}\exp\left(-\frac{1}{2}\left\Vert \vtheta_{t}-\vmu_{t\vert t-1}-\left(\vSigma_{t\vert t-1}^{-1}+w_{t}\vH_{t}^{\top}\vR_{t}^{-1}\vH_{t}\right)^{-1}w_{t}\vH_{t}^{\top}\vR_{t}^{-1}\ve_{t}\right\Vert _{\vSigma_{t\vert t-1}^{-1}+w_{t}\vH_{t}^{\top}\vR_{t}^{-1}\vH_{t}}^{2}\right)\nonumber \\
 & \qquad\times\exp\left(-\beta w_{t}-\frac{1}{2}\ve_{t}^{\top}\left(w_{t}^{-1}\vR_{t}+\vH_{t}\vSigma_{t\vert t-1}\vH_{t}^{\top}\right)^{-1}\ve_{t}\right).
\end{aligned}
\end{equation}
where the notation $||{\vx}||_\vA$ means $ \vx^\intercal\, \vA^{-1}\, \vx $.
Then, the marginal for $w_t$ can be written as
\begin{align}
p\left(w_{t}\vert\vy_{1:t}\right) & \propto w_{t}^{\alpha-1+n_{y}/2}\left|\vSigma_{t\vert t-1}^{-1}+w_{t}\vH_{t}^{\top}\vR_{t}^{-1}\vH_{t}\right|^{-1/2}\exp\left(-\beta w_{t}-\frac{1}{2}\ve_{t}^{\top}\left(w_{t}^{-1}\vR_{t}+\vH_{t}\vSigma_{t\vert t-1}\vH_{t}^{\top}\right)^{-1}\ve_{t}\right)\label{eq:wt-marginal}.
\end{align}
By taking the limit $\vSigma_{t\vert t-1}\to\bm{0}$, equation
\eqref{eq:wt-marginal} becomes
\begin{equation}
\lim_{\vSigma_{t\vert t-1}\to\bm{0}}p\left(w_{t}\vert\vy_{1:t}\right)\propto w_{t}^{\alpha-1+n_{y}/2}\exp\left(-\beta w_{t}-\frac{1}{2}\ve_{t}^{\top}\vR_{t}^{-1}\ve_{t}w_{t}\right),
\end{equation}
with maximum at
\begin{equation}
w_{t}^{*}=\argmax_{w_t}\lim_{\vSigma_{t\vert t-1}\to\bm{0}} = \frac{\alpha-1+\frac{n_{y}}{2}}{\beta+\frac{1}{2}\ve_{t}^{\top}\vR_{t}^{-1}\ve_{t}},
\end{equation}
that matches \eqref{eq:w*}.
Therefore in the SSM setting,
the Mahalanobis IMQ weighting is the MAP estimate for $w_{t}$
after ignoring the prior uncertainty $\vSigma_{t\vert t-1}$.

\section{Proofs of theoretical results}
\label{sec:weighting-functions}

\subsection{Proof of Proposition \ref{prop:weighted-kf}}
\label{proof:weighted-kf}
\begin{proof}
    Let $w_t^2 := W^2(\vy_t, \hat{\vy}_t)$.
    The loss function takes the form
    \begin{equation}\label{eq:weighted-gaussian-loglikelihood}
    \begin{aligned}
        \ell_t(\vtheta_t)
        &= -w_t^2 \log \normdist{\vy_t}{\vH_t \vtheta_t}{\vR_t}\\
        &= \frac{1}{2}\left(\vy_t - \vH_t\vtheta_t\right)^\intercal(\vR_t / w_t^2)^{-1}\left(\vy_t - \vH_t\vtheta_t\right)
        -\frac{w_t^2\,\nout}{2}\log\pi - \frac{w_t^2}{2}\log|\vR_t|\\
        &= \frac{1}{2}\left(\vy_t - \vH_t\vtheta_t\right)^\intercal\bar\vR_t^{-1}\left(\vy_t - \vH_t\vtheta_t\right)
        + C,
    \end{aligned}
    \end{equation}
    with $\bar\vR_t = \vR_t / w_t^2$, and
    where $C = -\frac{w_t^2\,\nout}{2}\log\pi - \frac{w_t^2}{2}\log|\vR_t|$ is a term that does not depend on $\vtheta_t$.
    The remaining follows from the standard KF derivation. 
    Note that the loss function does not correspond to the log-likelihood for a homoskedastic Gaussian model since $\bar{\vR}_t$
    may depend on all data, including $\vy_t$.
\end{proof}
Figure \ref{fig:weighted-log-likelihood-gaussian} shows the weighted log-likelihood \eqref{eq:weighted-gaussian-loglikelihood}
for a univariate ${\cal N}(0, 1)$ Gaussian density
as a function of the weighting term $w_t^2\in(0, 1]$. 
\begin{figure}[htb]
    \centering
    \includegraphics[width=0.5\linewidth]{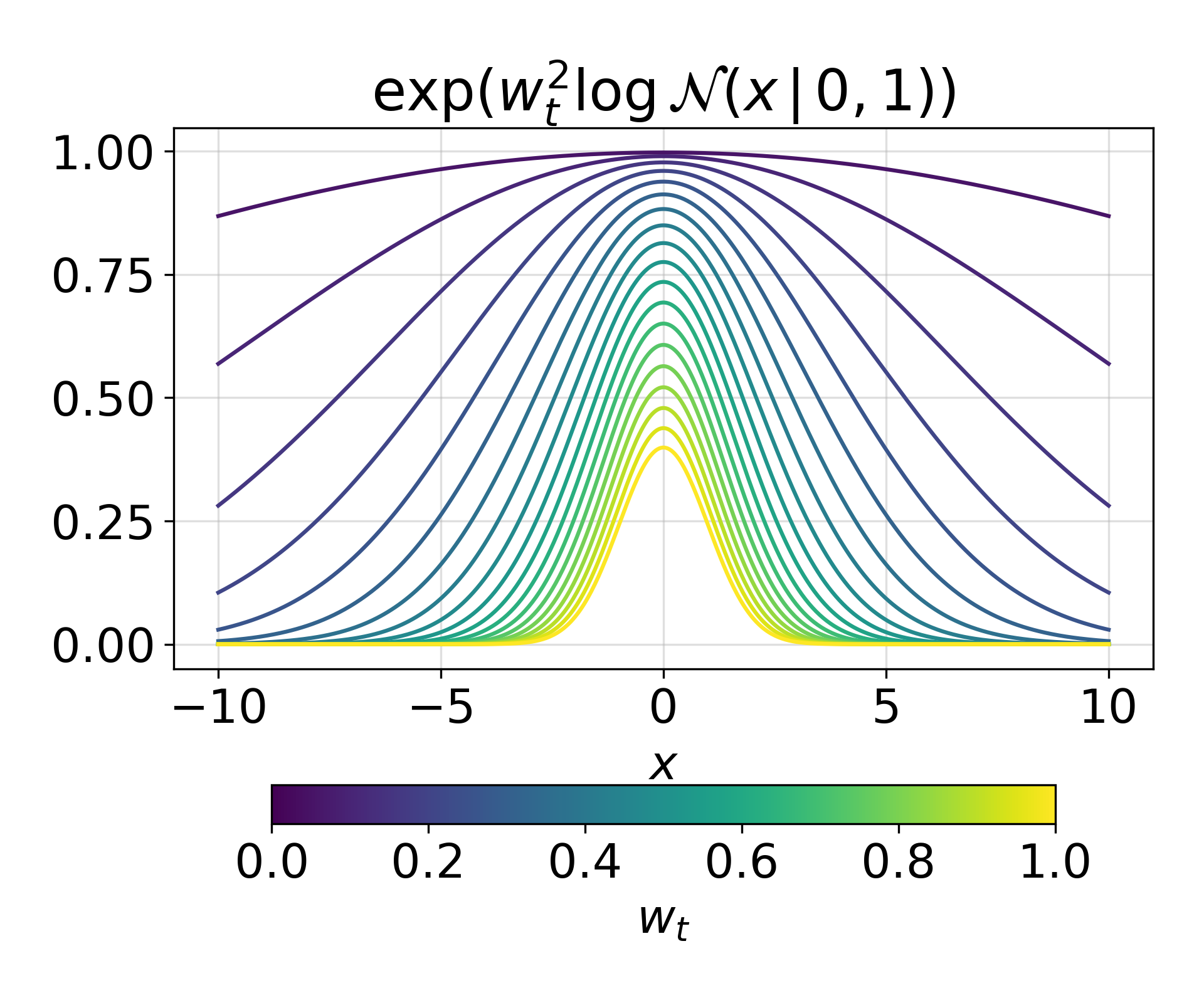}
    \removewhitespace[-5mm]
    \caption{Weighted likelihood (unnormalised) for a standard Gaussian.}
    \label{fig:weighted-log-likelihood-gaussian}
\end{figure}
We observe that a weighting log-likelihood resembles a heavy-tailed likelihood for $w_t < 1$.

\subsection{Proof of Theorem \ref{prop:weighted-kf-bounded-pif}}
\label{app:proof_robustness}
\Cref{prop:weighted-kf-bounded-pif} comprises several sub-claims, each of which we will prove as a separate lemma in this section. At the end of the section, we will integrate all the lemmas to prove the main result. The first results are for the linear Gaussian SSM case.

\begin{lemma}
\label{lemma:kf}
    Consider the linear Gaussian SSM. The standard KF posterior has an unbounded PIF and is not outlier robust.
\end{lemma}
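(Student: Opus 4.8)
The plan is to exploit the fact that, in the standard Kalman filter, the posterior covariance $\vSigma_t$ in \eqref{eq:kf-update} is determined entirely by $\vSigma_{t\vert t-1}$, $\vH_t$, and $\vR_t$, and is therefore \emph{independent of the observations}. Consequently the clean posterior $p(\vtheta_t \vert \vy_t, \vy_{1:t-1}) = \normdist{\vtheta_t}{\vmu_t}{\vSigma_t}$ and the contaminated posterior $p(\vtheta_t \vert \vy^{c}_t, \vy_{1:t-1}) = \normdist{\vtheta_t}{\vmu^{c}_t}{\vSigma_t}$ share the same covariance and differ only through their means, with $\vmu^{c}_t - \vmu_t = \vK_t(\vy^{c}_t - \vy_t)$ by the mean update in \eqref{eq:kf-update}. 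This is the structural observation that makes the PIF collapse to a pure mean-shift term.

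First I would substitute these two Gaussians into the closed-form KL divergence between Gaussians with a common covariance, $\operatorname{KL}\!\left(\normdist{\cdot}{\va}{\vSigma}\,\middle\|\,\normdist{\cdot}{\vb}{\vSigma}\right) = \tfrac12 (\va - \vb)^\intercal \vSigma^{-1} (\va - \vb)$, to get
\[
\operatorname{PIF}(\vy^{c}_t, \vy_{1:t}) = \tfrac12 (\vy^{c}_t - \vy_t)^\intercal \vK_t^\intercal \vSigma_t^{-1} \vK_t (\vy^{c}_t - \vy_t).
\]
Second, I would observe that the Kalman gain $\vK_t = \vSigma_t \vH_t^\intercal \vR_t^{-1}$ is nonzero whenever the measurement is informative (i.e. $\vH_t \neq \vzero$, using that $\vR_t$ and $\vSigma_t$ are positive definite), so there exists a direction $\vv \in \real^{\nout}$ with $\vK_t \vv \neq \vzero$. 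Taking the contamination $\vy^{c}_t = \vy_t + s\,\vv$ and letting $s \to \infty$ yields $\operatorname{PIF} = \tfrac{s^2}{2}\,(\vK_t \vv)^\intercal \vSigma_t^{-1} (\vK_t \vv) \to \infty$, since $\vSigma_t^{-1}$ is positive definite. Hence $\sup_{\vy^{c}_t \in \real^{\nout}} \operatorname{PIF}(\vy^{c}_t, \vy_{1:t}) = \infty$ and the KF posterior is not outlier robust.

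There is no genuinely hard step; the only points requiring care are (i) recording explicitly that the KF posterior covariance is data-independent — this is exactly what reduces the KL to a quadratic in $\vy^{c}_t - \vy_t$ — and (ii) excluding the degenerate case $\vK_t = \vzero$, i.e. a completely uninformative observation, which one rules out by assumption. Finally, I would remark that the identical argument applies verbatim to the linearised (EKF) case used in \cref{prop:weighted-kf-bounded-pif}, since there too the posterior covariance update does not involve $\vy_t$, so no separate computation is needed.
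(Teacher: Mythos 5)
Your proof is correct and follows essentially the same route as the paper's: both exploit that the standard KF posterior covariance (and hence the gain) is independent of the observation, so the KL divergence collapses to the quadratic form $\tfrac12(\vy_t^c-\vy_t)^\intercal\vK_t^\intercal\vSigma_t^{-1}\vK_t(\vy_t^c-\vy_t)$, which is then shown to be unbounded. The only divergence is in the final step --- the paper lower-bounds the quadratic by $\tfrac12\lambda_{\min}(\vK_t^\intercal\vSigma_t^{-1}\vK_t)\|\vy_t-\vy_t^c\|_2^2$ after asserting $\vK_t$ is invertible (which implicitly requires $\vH_t$ to have full row rank so that $\vK_t^\intercal\vSigma_t^{-1}\vK_t$ is positive definite), whereas you send the contamination to infinity along a single direction $\vv$ with $\vK_t\vv\neq\vzero$, which needs only $\vH_t\neq\vzero$ and so handles the rectangular case slightly more carefully.
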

\begin{proof}
Let $p(\vtheta_t|\vy_t,\vy_{1:t-1}) = \mathcal{N}(\vtheta_t | \vmu_{t}, \vSigma_{t})$, and
$p(\vtheta_t|\vy^{c}_{t},\vy_{1:t-1}) = \mathcal{N}(\vtheta_t| \vmu_{t}^c, \vSigma_{t}^c)$, the uncontaminated and contaminated standard Kalman filter posterior. Here
\begin{align*}
\vmu_{t} =\vmu_{t\vert t-1}+\vK_t \left(\vy_{t}-\hat{\vy}_{t}\right), &  \, &&\vmu_{t}^{c} =\vmu_{t\vert t-1}+\vK_t^{c} \left(\vy_{t}^{c}-\hat{\vy}_{t}\right),\\
\vSigma_{t}^{-1} =\vSigma_{t\vert t-1}^{-1} +
 \vH_{t}^{\intercal} \vR_{t}^{-1} \vH_{t}, & \,  && (\vSigma_{t}^{c})^{-1} =\vSigma_{t\vert t-1}^{-1} +\vH_{t}^{\intercal} \vR_{t}^{-1} \vH_{t}.
\end{align*}
Here $\vmu_{t|t-1}, \vK_t, \vH_t, \vSigma^{-1}_{t|t-1},\vR_t^{-1}$ do not depend on the contamination $\vy_t^c$.
It is clear that $\vSigma_{t}^{-1} = (\vSigma_{t}^{c})^{-1}$, hence $\vK_t^{c} = \vSigma_{t}^{c} \vH_t^{\intercal}\vR_t^{-1} = \vSigma_{t} \vH_t^{\intercal}\vR_t^{-1} = \vK_t$.
Using the fact that given two $d$-dimensional Gaussians $\mathcal{N}(\vmu_0,\vSigma_0)$ and $\mathcal{N}(\vmu_1,\vSigma_1)$,
the KL divergence is 
\begin{equation*}
    \text{KL}(\mathcal{N}(\vmu_0,\vSigma_0) \| \mathcal{N}(\vmu_1,\vSigma_1))= \frac{1}{2}\left(
    \operatorname{Tr}\left(\vSigma_1^{-1}\vSigma_0\right)  - \nparams +
    \left(\vmu_1 - \vmu_0\right)^\intercal \vSigma_1^{-1}\left(\vmu_1 - \vmu_0\right) +
    \ln\left(\frac{\det\vSigma_1}{\det\vSigma_0}\right)
  \right),
\end{equation*}
and given that $\operatorname{Tr}(\vI_p) = p$, we can derive the PIF as
\begin{align*}
    \operatorname{PIF}(\vy_t^{c},\vy_{1:t}) &=\operatorname{KL} \left(
        p(\vtheta_t|\vy^{c}_t,\vy_{1:t-1})
        \| p(\vtheta_t|\vy_t,\vy_{1:t-1})\right)\\
    &=\frac{1}{2}\Bigg(
    \Tr\left(\vSigma_{t}^{-1}\vSigma_{t}\right) - \nparams +
    \left(\vmu_{t} - \vmu_{t}^{c}\right)^\intercal \vSigma_{t}^{-1}\left(\vmu_{t} - \vmu_{t}^{c}\right)+ 
    \ln\left(\frac{\det\vSigma_{t}}{\det\vSigma_{t}}\right)\Bigg)\nonumber\\
    &=\Tr\left(\vI_\nparams\right) - \nparams + \frac{1}{2}\left(\vmu_{t} - \vmu_{t}^{c}\right)^\intercal \vSigma_{t}^{-1}\left(\vmu_{t} - \vmu_{t}^{c}\right)\nonumber\\
    &=\frac{1}{2}\left(\vy_{t} - \vy_{t}^{c}\right)^\intercal \vK_t^\intercal\vSigma_{t}^{-1}\vK_t\left(\vy_{t} - \vy_{t}^{c}\right).
    \end{align*}
Here, $\vK_t^{\intercal} \vSigma_{t}^{-1} \vK_t$ is positive definite.
To see this, first note that for every invertible matrix $B$ and conformable positive definite matrix $A$,  $B^{\intercal}AB$ is positive definite \citep[Chapter 7.1][]{horn2012matrix}. 
Since $\vK_t$  is a product of invertible matrices, it  therefore is itself invertible.
In fact, let $\vz\in \mathbb{R}^{\nparams}$ be a non-zero vector.
We can write $\vz^{\intercal} B^{\intercal}AB \vz = \tilde{\vz}^{\intercal}A\tilde{\vz}$, for $\tilde{\vz}= B\vz$.
We know that $\tilde{\vz}= B\vz \neq 0$, because $B$ is invertible.
Then, since $A$ is positive definite and $\tilde{\vz}\in\real^\nparams$,
it holds that $\tilde{\vz}^{\intercal}A\tilde{\vz}>0$, proving that $B^{\intercal}AB$ is positive definite.
Finally, since $\vSigma_{t}^{-1}$ is positive definite, it therefore holds that $\vK_t^\intercal \vSigma_{t}^{-1} \vK_t$ is positive definite, too.

This in turn allows us to lower-bound the PIF by
\begin{align*}
    \frac{1}{2}\lambda_{\min}(\vK_t^\intercal\vSigma_{t}^{-1}\vK_t)\|\vy_{t} - \vy_{t}^{c}\|^{2}_{2}\leq \frac{1}{2}\left(\vy_{t} - \vy_{t}^{c}\right)^\intercal \vK_t^\intercal\vSigma_{t}^{-1}\vK_t\left(\vy_{t} - \vy_{t}^{c}\right) = \operatorname{PIF}(\vy_t^{c},\vy_{1:t})
 , \nonumber
\end{align*}
where we use the fact that for every positive definite matrix $A$ and vector $\vz$,
we have $\lambda_{\min}(A)\|\vz\|^2\leq \vz^{\intercal}A\vz\leq\lambda_{\max}(A)\|\vz\|^2$,
where $\lambda_{\min}(A)$ and $\lambda_{\max}(A)$ are the minimum and maximum eigenvalues of $A$,
respectively \citep[Chapter 5.7][]{horn2012matrix}.
Moreover, we know that $\lambda_{\min}\left(\vK_t^\intercal \vSigma_{t}^{-1} \vK_t\right)>0$,
since $\vK_t^\intercal \vSigma_{t}^{-1} \vK_t$ is positive definite, and it does not depend on $\vy_{t}^{c}$.
Therefore, it indeed holds that $\operatorname{PIF}(\vy_t^{c},\vy_{1:t})\rightarrow + \infty$ as $\|\vy_{t} - \vy_{t}^{c}\|^{2}_{2} \to +\infty$.
\end{proof}

\begin{lemma}
    \label{lemma:robust-kf}
    Consider the linear Gaussian SSM. The generalised posterior presented in \cref{prop:weighted-kf} has bounded PIF and is, therefore, outlier robust for any weighting function $W$ such that
    $\sup_{\vy_t\in\mathbf{R}^d}W(\vy_{1:t})<\infty$ and $\sup_{\vy_t\in\mathbf{R}^d}W(\vy_{1:t})^2\,\|\vy_t\|_{2}<\infty$.
\end{lemma}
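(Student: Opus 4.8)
The plan is to follow the template of the proof of \Cref{lemma:kf}, while tracking the fact that, unlike the standard KF, the posterior covariance \emph{does} change under contamination, because the weight does. Write the uncontaminated WoLF posterior of \Cref{prop:weighted-kf} as $\mathcal{N}(\vtheta_t\mid\vmu_t,\vSigma_t)$ and the contaminated one as $\mathcal{N}(\vtheta_t\mid\vmu_t^c,\vSigma_t^c)$, and set $w_t:=W(\vy_t,\hat{\vy}_t)$, $w_t^c:=W(\vy_t^c,\hat{\vy}_t)$. From \Cref{prop:weighted-kf},
\begin{align*}
\vSigma_t^{-1} &= \vSigma_{t|t-1}^{-1} + w_t^2\,\vH_t^\intercal\vR_t^{-1}\vH_t, &
(\vSigma_t^c)^{-1} &= \vSigma_{t|t-1}^{-1} + (w_t^c)^2\,\vH_t^\intercal\vR_t^{-1}\vH_t, \\
\vmu_t &= \vmu_{t|t-1} + w_t^2\,\vSigma_t\vH_t^\intercal\vR_t^{-1}(\vy_t-\hat{\vy}_t), &
\vmu_t^c &= \vmu_{t|t-1} + (w_t^c)^2\,\vSigma_t^c\vH_t^\intercal\vR_t^{-1}(\vy_t^c-\hat{\vy}_t),
\end{align*}
where only $w_t^c$, $\vSigma_t^c$ and $\vmu_t^c$ depend on the contamination $\vy_t^c$. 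Substituting into the Gaussian KL identity recalled in \Cref{lemma:kf} gives
\begin{align*}
\operatorname{PIF}(\vy_t^c,\vy_{1:t}) = \tfrac12\Big(\Tr(\vSigma_t^{-1}\vSigma_t^c) - \nparams + (\vmu_t-\vmu_t^c)^\intercal\vSigma_t^{-1}(\vmu_t-\vmu_t^c) + \ln(\det\vSigma_t/\det\vSigma_t^c)\Big),
\end{align*}
so it suffices to bound each of the four terms uniformly over $\vy_t^c\in\real^d$.

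First I would dispose of the two terms that involve only covariances. Since $(w_t^c)^2\vH_t^\intercal\vR_t^{-1}\vH_t\succeq\vzero$ we have $(\vSigma_t^c)^{-1}\succeq\vSigma_{t|t-1}^{-1}$, hence $\vSigma_t^c\preceq\vSigma_{t|t-1}$; and using $\bar W:=\sup_{\vy}W(\vy,\hat{\vy}_t)<\infty$ we also have $(\vSigma_t^c)^{-1}\preceq\vSigma_{t|t-1}^{-1}+\bar W^2\vH_t^\intercal\vR_t^{-1}\vH_t$, hence $\vSigma_t^c\succeq(\vSigma_{t|t-1}^{-1}+\bar W^2\vH_t^\intercal\vR_t^{-1}\vH_t)^{-1}\succ\vzero$. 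Thus $\vSigma_t^c$ is squeezed between two fixed positive-definite matrices that do not depend on $\vy_t^c$. Because $\vSigma_t^{-1}\succ\vzero$ is itself fixed, $\Tr(\vSigma_t^{-1}\vSigma_t^c)\le\Tr(\vSigma_t^{-1}\vSigma_{t|t-1})<\infty$; and $\det\vSigma_t^c$ lies in a fixed interval bounded away from $0$ and $\infty$, so $\ln(\det\vSigma_t/\det\vSigma_t^c)$ is bounded. Both bounds are uniform in $\vy_t^c$.

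The main obstacle is the quadratic term $(\vmu_t-\vmu_t^c)^\intercal\vSigma_t^{-1}(\vmu_t-\vmu_t^c)$, and this is where both hypotheses on $W$ are needed. Since $\vSigma_t^{-1}$ is fixed it is enough to bound $\|\vmu_t-\vmu_t^c\|_2$ uniformly in $\vy_t^c$. Write $\vmu_t-\vmu_t^c=(\vmu_t-\vmu_{t|t-1})-(w_t^c)^2\,\vSigma_t^c\vH_t^\intercal\vR_t^{-1}(\vy_t^c-\hat{\vy}_t)$; the first bracket is a fixed finite vector, and the second has norm at most $\lambda_{\max}(\vSigma_{t|t-1})\,M_t\big((w_t^c)^2\|\vy_t^c\|_2+(w_t^c)^2\|\hat{\vy}_t\|_2\big)$, where $M_t$ is the fixed operator norm of $\vH_t^\intercal\vR_t^{-1}$ and I used $\vSigma_t^c\preceq\vSigma_{t|t-1}$ together with the triangle inequality. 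Now $(w_t^c)^2\|\vy_t^c\|_2=W(\vy_t^c,\hat{\vy}_t)^2\|\vy_t^c\|_2\le\sup_{\vy}W(\vy,\hat{\vy}_t)^2\|\vy\|_2<\infty$ by the second hypothesis, and $(w_t^c)^2\|\hat{\vy}_t\|_2\le\bar W^2\|\hat{\vy}_t\|_2<\infty$ by the first (with $\hat{\vy}_t$ fixed). Hence $\|\vmu_t-\vmu_t^c\|_2\le C$ for a constant $C$ independent of $\vy_t^c$, so $(\vmu_t-\vmu_t^c)^\intercal\vSigma_t^{-1}(\vmu_t-\vmu_t^c)\le\lambda_{\max}(\vSigma_t^{-1})C^2<\infty$.

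Combining the three bounds, $\sup_{\vy_t^c\in\real^d}\operatorname{PIF}(\vy_t^c,\vy_{1:t})<\infty$, which is exactly outlier robustness. I would finish by noting that the two hypotheses are tight for this argument — the contaminated gain scales like $(w_t^c)^2$ and multiplies $\vy_t^c$, so it is precisely $W^2\|\vy_t\|$ that must stay bounded — and that the IMQ weight \eqref{eq:w_t} (for which $W\sim c/\|\vy_t\|$, so $W^2\|\vy_t\|\to0$), the MD weight \eqref{eq:mahalanobis-weight}, and the TMD weight \eqref{eq:thresholded-mahalanobis-weight} (eventually exactly $0$) all satisfy them. The only genuinely new ingredient relative to \Cref{lemma:kf} is the two-sided Loewner control of $\vSigma_t^c$: there the contaminated and uncontaminated covariances coincide and cancel, whereas here they must be confined to a compact set of positive-definite matrices.
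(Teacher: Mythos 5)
Your proof is correct and follows essentially the same route as the paper's: decompose the Gaussian KL into the trace, quadratic (mean-difference), and log-determinant terms, and bound each uniformly in $\vy_t^c$ using $\sup W<\infty$ for the covariance-dependent pieces and $\sup W^2\|\vy_t\|_2<\infty$ for the contaminated-mean piece. The only (harmless) variations are that you keep the fixed $\vSigma_t^{-1}$ rather than $(\vSigma_t^c)^{-1}$ in the quadratic form and control the covariance terms via a two-sided Loewner sandwich on $\vSigma_t^c$, where the paper instead invokes $\Tr(AB)\leq\Tr(A)\Tr(B)$ and $\det(A+B)\geq\det(A)+\det(B)$ — both arguments are valid and yield the same conclusion.
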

\begin{proof}
Let $W$ be a weighting function such that
    $\sup_{\vy_t\in\real^d}W(\vy_{1:t})<\infty$ and $\sup_{\vy_t\in\real^d}W(\vy_{1:t})^2\|\vy_t\|_{2}<\infty$. Lets define $w_t := W^2(\vy_{1:t})$, and $w_t^c := W^2(\vy_{t}^{c},\vy_{1:1-t})$.
Let $q(\vtheta_t|\vy_t,\vy_{1:t-1}) = \mathcal{N}(\vtheta_t | \vmu_{t}, \vSigma_{t})$, and
$q(\vtheta_t|\vy^{c}_{t},\vy_{1:t-1}) = \mathcal{N}(\vtheta_t| \vmu_{t}^c, \vSigma_{t}^c)$, where
\begin{align*}
\vmu_{t} =\vmu_{t\vert t-1}+w_t\vK_t \left(\vy_{t}-\hat{\vy}_{t}\right), &  \, &&\vmu_{t}^{c} =\vmu_{t\vert t-1}+w_t^c\vK_t^{c} \left(\vy_{t}^{c}-\hat{\vy}_{t}\right),\\
\vSigma_{t}^{-1} =\vSigma_{t\vert t-1}^{-1} +
 w_t\vH_{t}^{\intercal} \vR_{t}^{-1} \vH_{t}, & \,  && (\vSigma_{t}^{c})^{-1} =\vSigma_{t\vert t-1}^{-1} +w_t^c\vH_{t}^{\intercal} \vR_{t}^{-1} \vH_{t},
\end{align*}
as in \cref{algo:wlf-step} 
Then, the PIF has the form 
\begin{align*}
    \operatorname{PIF}(\vy_t,\vy_{1:t-1}) =
  \frac{1}{2}\Bigg(
    &\underbrace{\Tr\left((\vSigma_{t}^{c})^{-1}\vSigma_{t}\right) - p}_{(1)} 
    +
    \underbrace{\left(\vmu_{t} - \vmu_{t}^{c}\right)^\intercal (\vSigma_{t}^{c})^{-1}\left(\vmu_{t} - \vmu_{t}^{c}\right)}_{(2)}
    +
    \underbrace{\ln\left(\frac{\det\vSigma_{t}^{c}}{\det\vSigma_{t}}\right)}_{(3)}\Bigg)
 . \nonumber
\end{align*}
Now, we will get a bound for each term in the PIF. The first term can be bounded as
\begin{align*}
    (1) &= \Tr\left((\vSigma_{t}^{c})^{-1}\vSigma_{t}\right) - \nparams
    \leq \Tr\left((\vSigma_{t}^{c})^{-1}\right)\Tr\left(\vSigma_{t}\right)- \nparams,
\end{align*}
where we use the fact that for two positive semidefinite matrices $A$, $B$, it holds that $\Tr(AB)\leq\Tr(A)\Tr(B)$.
Observing that $\vSigma_{t}$ does not depend on $y_t^c$, we can now write
$C_1 = \vSigma_t$, so that by using the arithmetic rules of traces, we obtain
\begin{align*}
    (1) &\leq C_1\Tr\left(\left(\vSigma_{t}^{c}\right)^{-1}\right) - \nparams\\
    &= C_1\Tr\left(\vSigma_{t\vert t-1}^{-1} +w_t^c\vH_{t}^{\intercal} \vR_{t}^{-1} \vH_{t}\right) - \nparams\\
    &= C_1\Tr\left(\vSigma_{t\vert t-1}^{-1}\right) +
    C_1 w_t^c \Tr\left(\vH_{t}^{\intercal} \vR_{t}^{-1} \vH_{t}\right) - \nparams.
\end{align*}
Here, we use the fact that the trace of a sum is a sum of traces, and the trace of the constant times matrix is equal to the constant times trace of the matrix.
Finally, since  $\sup_{\vy_t^c\in\mathbb{R}^d} w_t^c\leq C_2<\infty$, the entire expression can be bounded by a constant
$C_3< \infty$ that does not depend on the contamination $\vy_t^c$ as
\begin{align*}
    (1) & \leq C_1\Tr\left(\vSigma_{t\vert t-1}^{-1}\right) +
    C_1 C_2 \Tr\left(\vH_{t}^{\intercal} \vR_{t}^{-1} \vH_{t}\right) - \nparams = C_3,
\end{align*}
where we use the fact that both traces are finite since both matrices are real-valued.
Next, we bound the second term by noting that it is the squared Mahalanobis norm of $\vmu_{t} - \vmu_{t}^{c}$ with respect to $\vSigma_{t}^{c}$, and we write it as $\|\vmu_{t} - \vmu_{t}^{c}\|_{\vSigma_{t}^{c}}$ (in particular, it then satisfies all the properties of a norm). Therefore, we apply the triangle inequality to obtain
\begin{align*}
(2) &= (\|\vmu_{t} - \vmu_{t}^{c}\|_{\vSigma_{t}^{c}})^2 \\
&= (\|w_t\vK_{t}(\vy_{t}-\hat{\vy}_{t}) - w_t^c\vK_{t}^c(\vy_{t}^c-\hat{\vy}_{t})\|_{\vSigma_{t}^{c}})^2 \\
&\leq (\|w_t\vK_{t}(\vy_{t}-\hat{\vy}_{t})\|_{\vSigma_{t}^{c}} + \| w_t^c\vK_{t}^c(\vy_{t}^c-\hat{\vy}_{t})\|_{\vSigma_{t}^{c}})^2\\
&\leq 2(\|w_t\vK_{t}(\vy_{t}-\hat{\vy}_{t})\|_{\vSigma_{t}^{c}}^2 + \| w_t^c\vK_{t}^c(\vy_{t}^c-\hat{\vy}_{t})\|_{\vSigma_{t}^{c}}^2).
\end{align*}
In the last inequality, we use the fact that, for two real numbers $a$ and $b$,
it holds that $(a+b)^2\leq 2(a^2+b^2)$.
Then, we bound each term separately,
\begin{align*}
 2\|w_t\vK_{t}(\vy_{t}-\hat{\vy}_{t})\|_{\vSigma_{t}^{c}}^2  &= 2w_t^{2}(\vy_{t}-\hat{\vy}_{t})^\intercal\vK_{t}^\intercal(\vSigma_{t}^{c})^{-1}\vK_{t}(\vy_{t}-\hat{\vy}_{t}) \\
 &\leq2w_t^{2}\lambda_{\max}\left(\vK_t^\intercal(\vSigma_{t}^{c})^{-1} \vK_t\right)\| \vy_{t}-\hat{\vy}_{t}\|_{2}^{2},
\end{align*}
where $2\lambda_{\max}\left(\vK_t^\intercal (\vSigma_{t}^{c})^{-1} \vK_t\right)\leq C_4<\infty$,
since $\vK_t^\intercal (\vSigma_{t}^{c})^{-1} \vK_t$ is a positive definite matrix, and thus, all eigenvalues are real-valued.
This property arises from the fact that for every invertible matrix $B$ and a positive definite matrix $A$,
$B^{\intercal}AB$ is positive definite.
Finally, we must verify that $(\vSigma_{t}^{c})^{-1}$ is positive definite.
Let $\vz\in\real^\nparams$ be a non-zero vector. Then,
\begin{align*}
\vz^{\intercal}\big(\vSigma_{t}^{c}\big)^{-1}\vz &= \vz^{\intercal}\big(\vSigma_{t\vert t-1}^{-1} +w_t^c\vH_{t}^{\intercal} \vR_{t}^{-1} \vH_{t}\big)\vz = \vz^{\intercal}\big(\vSigma_{t\vert t-1}^{-1}\big)\vz +w_t^c \big(\vz^{\intercal}\vH_{t}^{\intercal} \vR_{t}^{-1} \vH_{t}\vz\big).
\end{align*}
We know that $\vSigma_{t\vert t-1}^{-1}$ is positive definite, hence $\vz^{\intercal}\big(\vSigma_{t\vert t-1}^{-1}\big)\vz>0$. Moreover, $\vz^{\intercal}\vH_{t}^{\intercal} \vR_{t}^{-1} \vH_{t}\vz>0$ because $\vH_{t}^{\intercal} \vR_{t}^{-1} \vH_{t}$ is positive definite. Since $w_t^c=W^2(\vy_{t}^{c},\vy_{1:1-t})\geq0$ for all $\vy_t^c\in\mathbb{R}^d$, then $w_t^c \big(\vz^{\intercal}\vH_{t}^{\intercal} \vR_{t}^{-1} \vH_{t}\vz\big)\geq0$. Finally, combining these inequalities, $\vz^{\intercal}\big(\vSigma_{t}^{c}\big)^{-1}\vz>0$. Therefore, $(\vSigma_{t}^{c})^{-1}$ is positive definite.
Therefore,
\begin{align*}
 2\|w_t\vK_{t}(\vy_{t}-\hat{\vy}_{t})\|_{\vSigma_{t}^{c}}^2 \leq C_4 (w_t\| \vy_{t}-\hat{\vy}_{t}\|_{2})^{2} = C_5
\end{align*}
since $(w_t\| \vy_{t}-\hat{\vy}_{t}\|_{2})^{2}$ does not depend on the contamination. Similarly, 
\begin{align*}
 2\|w_t^c\vK_{t}^c(\vy_{t}^c-\hat{\vy}_{t})\|_{\vSigma_{t}^{c}}^2 \leq C_6 (w_t^c\| \vy_{t}^c-\hat{\vy}_{t}\|_{2})^{2},
\end{align*}
where, using the same argument as before, $2\lambda_{\max}\left((\vK_t^c)^\intercal (\vSigma_{t}^{c})^{-1} \vK_t^c\right)\leq C_6<\infty$.
Now, since $\sup_{\vy_t^c\in\mathbf{R}^d}w_t^c\|\vy_t^c\|_{2}\leq C_7<\infty$, we have:
\begin{align*}
  C_6 (w_t^c\| \vy_{t}^c-\hat{\vy}_{t}\|_{2})^{2} &\leq C_6 (w_t^c\| \vy_{t}^c\|_{2}+w_t^c\|\hat{\vy}_{t}\|_{2})^{2}\leq  C_6 (C_7+C_2\|\hat{\vy}_{t}\|_{2})^{2} = C_8.
\end{align*}
Putting it all together, we find that
\begin{align*}
(2) \leq C_5 + C_8.
\end{align*}
Lastly, the third and final term can be rewritten using  properties of determinants as 
\begin{align*}
    (3) &= \ln\left(\frac{\det\vSigma_{t}^{c}}{\det\vSigma_{t}}\right) = \ln\left(\frac{1}{\det\vSigma_{t}}\right) + \ln\left(\det \vSigma_{t}^{c}\right)
    = \ln\left(\frac{1}{\det\vSigma_{t}}\right) + \ln\left(\frac{1}{\det (\vSigma_{t}^{c})^{-1}}\right).
\end{align*}
We define $C_9 = \ln\left(\frac{1}{\det\vSigma_{t}}\right)$ since it does not depend on the contamination, and write
\begin{align*}
    (3) &= C_9 + \ln\left(\frac{1}{\det (\vSigma_{t}^{c})^{-1}}\right)\\
    &= C_9 + \ln\left(\frac{1}{\det\left(\vSigma_{t\vert t-1}^{-1} +
    w_t^c \vH_{t}^{\intercal} \vR_{t}^{-1} \vH_{t}\right)}\right)\\
    &\leq C_9 + \ln\left(\frac{1}{\det\left(\vSigma_{t\vert t-1}^{-1}\right) +\det\left(
    w_t^c \vH_{t}^{\intercal} \vR_{t}^{-1} \vH_{t}\right)}\right),\\
\end{align*}
where in the last inequality, we use the fact that for two positive semidefinite matrices $A$, $B$, it also holds that $\det(A+B)\geq\det(A) + \det(B)$. Finally,\begin{align*}
    (3) 
    &\leq C_9 + \ln\left(\frac{1}{\det\left(\vSigma_{t\vert t-1}^{-1}\right) +\det\left(
    w_t^c \vH_{t}^{\intercal} \vR_{t}^{-1} \vH_{t}\right)}\right)\\
    &\leq C_9 + \ln\left(\frac{1}{\det\left(\vSigma_{t\vert t-1}^{-1}\right)}\right) = C_{10}.\\
\end{align*}
Here, we use the fact that $w_t^c \vH_{t}^{\intercal} \vR_{t}^{-1} \vH_{t}$ is positive semidefinite, as we showed previously. Therefore $\det\left(
    w_t^c \vH_{t}^{\intercal} \vR_{t}^{-1} \vH_{t}\right) \geq 0$. By putting the bounds for (1), (2), and (3) together, we obtain
\begin{align*}
    \operatorname{PIF}(\vy_t,\vy_{1:t-1})& =
  \frac{1}{2}\Bigg(
    \underbrace{\Tr\left((\vSigma_{t}^{c})^{-1}\vSigma_{t}\right) - d}_{(1)} 
    +
    \underbrace{\left(\vmu_{t} - \vmu_{t}^{c}\right)^\intercal (\vSigma_{t}^{c})^{-1}\left(\vmu_{t} - \vmu_{t}^{c}\right)}_{(2)}
    +
    \underbrace{\ln\left(\frac{\det\vSigma_{t}^{c}}{\det\vSigma_{t}}\right)}_{(3)}\Bigg)
 . \nonumber\\
 &\leq C_3  + C_5 + C_8 + C_{10} < \infty.
\end{align*}
\end{proof}
We now extend the result to the linearised approximation of the SSM case.
\begin{lemma}
\label{lemma:ekf}
    Consider the linearised approximation of the SSM. The standard EKF posterior has an unbounded PIF and is not outlier robust.
\end{lemma}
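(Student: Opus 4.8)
The plan is to reduce this claim to the linear Gaussian case already settled in \cref{lemma:kf}. The crucial observation is that the EKF performs its linearisation of $h_t$ \emph{around the prior predictive mean} $\vmu_{t|t-1}$, which by \eqref{eq:linearised-measurement-mean} depends only on $\vy_{1:t-1}$ and not on the current measurement. Hence the Jacobian $\vH_t$ (evaluated at $\vmu_{t|t-1}$), the predicted observation $\hat{\vy}_t = h_t(\vmu_{t|t-1})$, the predictive covariance $\vSigma_{t|t-1}$, and therefore also the gain $\vK_t$ and posterior covariance $\vSigma_t$ obtained from \eqref{eq:kf-update}, are all unaffected by replacing $\vy_t$ by a contamination $\vy_t^c$. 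In other words, conditionally on $\vy_{1:t-1}$ the EKF update is \emph{exactly} a linear–Gaussian Kalman update with fixed system matrices, so the computation in the proof of \cref{lemma:kf} carries over verbatim.

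Concretely, I would first write the uncontaminated and contaminated EKF posteriors as $\normdist{\vtheta_t}{\vmu_t}{\vSigma_t}$ and $\normdist{\vtheta_t}{\vmu_t^c}{\vSigma_t^c}$, with $\vmu_t = \vmu_{t|t-1} + \vK_t(\vy_t - \hat{\vy}_t)$, $\vmu_t^c = \vmu_{t|t-1} + \vK_t(\vy_t^c - \hat{\vy}_t)$, and $\vSigma_t^{-1} = (\vSigma_t^c)^{-1} = \vSigma_{t|t-1}^{-1} + \vH_t^\intercal \vR_t^{-1}\vH_t$. Since the covariances coincide, the Gaussian KL formula used in \cref{lemma:kf} collapses its trace and log-determinant terms, leaving
\[
\operatorname{PIF}(\vy_t^c,\vy_{1:t}) = \tfrac12 (\vmu_t - \vmu_t^c)^\intercal \vSigma_t^{-1}(\vmu_t - \vmu_t^c) = \tfrac12 (\vy_t - \vy_t^c)^\intercal \vK_t^\intercal \vSigma_t^{-1} \vK_t (\vy_t - \vy_t^c).
\]
The matrix $\vK_t^\intercal \vSigma_t^{-1}\vK_t$ is positive definite (it has the form $\vB^\intercal \vSigma_t^{-1}\vB$ with $\vB=\vK_t$ of full column rank, using invertibility of $\vSigma_t$ and $\vR_t$ together with full row rank of $\vH_t$, exactly as in \cref{lemma:kf}) and, crucially, does not depend on $\vy_t^c$. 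Lower-bounding by its smallest eigenvalue gives $\operatorname{PIF}(\vy_t^c,\vy_{1:t}) \ge \tfrac12\lambda_{\min}(\vK_t^\intercal\vSigma_t^{-1}\vK_t)\,\|\vy_t - \vy_t^c\|_2^2 \to \infty$ as $\|\vy_t - \vy_t^c\|_2 \to \infty$, so the PIF is unbounded and the EKF posterior is not outlier robust.

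The main obstacle is bookkeeping rather than a genuine difficulty: one must make the ``independence of the linearisation point from $\vy_t$'' argument airtight, i.e.\ verify that the affine surrogate $\bar{\vy}_t = \vH_t(\vtheta_t - \vmu_{t|t-1}) + h_t(\vmu_{t|t-1})$ of \eqref{eq:linearised-measurement-mean} is affine in $\vtheta_t$ with coefficients determined by $\vy_{1:t-1}$ only, so that the Gaussian-in/Gaussian-out structure of \eqref{eq:kf-update} applies unchanged. A minor secondary point is the full-rank condition on $\vH_t$ needed for strict positive definiteness of $\vK_t^\intercal\vSigma_t^{-1}\vK_t$; when $\vH_t$ has full row rank $d \le m$ one still obtains a positive definite $d\times d$ matrix, which suffices. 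Everything else is a direct transcription of \cref{lemma:kf}.
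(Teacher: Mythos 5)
Your proof is correct and follows essentially the same route as the paper's: the paper likewise observes that $\vmu_{t|t-1}$, $\hat{\vy}_t$, $\vH_t$ (the Jacobian at $\vmu_{t|t-1}$), $\vK_t$, and $\vSigma_t$ are independent of the contamination $\vy_t^c$, so the EKF update has exactly the structure of the linear--Gaussian case and the argument of \cref{lemma:kf} transfers verbatim. If anything, you are slightly more careful than the paper on the positive-definiteness of $\vK_t^\intercal\vSigma_t^{-1}\vK_t$ (requiring full row rank of $\vH_t$ rather than asserting invertibility of the generally non-square $\vK_t$), which is a welcome refinement rather than a departure.
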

\begin{proof}
We can easily replicate the procedure in \cref{lemma:kf} to the EKF since it can be straightforwardly applied to the approximate posterior presented in \cref{subsec:ekf}. Specifically, for the standard EKF, we compute:
\begin{align}
    \operatorname{PIF}(\vy_t^{c},\vy_{1:t}) = \operatorname{KL} \left(
        p(\vtheta_t|\vy^{c}_t,\vy_{1:t-1})
        \| p(\vtheta_t|\vy_t,\vy_{1:t-1})\right) 
\end{align}
for $p(\vtheta_t|\vy_t,\vy_{1:t-1}) = \mathcal{N}(\vtheta_t | \vmu_{t}, \vSigma_{t})$, and
$p(\vtheta_t|\vy^{c}_{t},\vy_{1:t-1}) = \mathcal{N}(\vtheta_t| \vmu_{t}^c, \vSigma_{t}^c)$, representing the uncontaminated and contaminated standard EKF posterior. In particular,
\begin{align*}
\vmu_{t} &= \vmu_{t\vert t-1} + \vK_t (\vy_{t} - \hat{\vy}_{t}),  & \vmu_{t}^{c} &= \vmu_{t\vert t-1} + \vK_t^{c} (\vy_{t}^{c} - \hat{\vy}_{t}),\\
\vSigma_{t}^{-1} &= \vSigma_{t\vert t-1}^{-1} + \vH_{t}^{\intercal} \vR_{t}^{-1} \vH_{t}, & (\vSigma_{t}^{c})^{-1} &= \vSigma_{t\vert t-1}^{-1} + \vH_{t}^{\intercal} \vR_{t}^{-1} \vH_{t}.
\end{align*}
Here, $\vmu_{t|t-1}, \vK_t, \vH_t, \vSigma^{-1}_{t|t-1}, \vR_t^{-1}$ do not depend on the contamination $\vy_t^c$, and
$\vmu_{t|t-1} = \mathbb{E}[\bar{\vmu}_{t|t-1} | \vmu_{t-1}] = f_t(\vmu_{t-1})$,
$\hat{\vy}_t = \mathbb{E}[\bar{\vy}_t] = h_t(\vmu_{t|t-1})$, where $\bar{\vmu}_{t|t-1}$ and $\bar{\vy}_t$ are defined as \cref{eq:linearised-state-mean} and \cref{eq:linearised-measurement-mean} respectively,  and
$\vH_t$ is the Jacobian of $h_t$ evaluated at $\vmu_{t|t-1}$.
Since it follows the same structure as the standard Kalman filter,
replicating the procedure in \cref{lemma:kf}, we obtain that the standard EKF is not robust.
\end{proof}
\begin{lemma}
\label{lemma:robust-ekf}
    Consider the linearised approximation of the SSM. The generalised posterior presented in \cref{prop:weighted-kf} has bounded PIF and is, therefore, outlier robust for any weighting function $W$ such that
    $\sup_{\vy_t\in\mathbf{R}^d}W(\vy_t, \hat{\vy}_t)<\infty$ and
    $\sup_{\vy_t\in\mathbf{R}^d}W(\vy_t, \hat{\vy}_t)^2\,\|\vy_t\|_{2}<\infty$.
\end{lemma}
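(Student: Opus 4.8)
The plan is to reduce this to \Cref{lemma:robust-kf} by observing that the WoLF-EKF update has exactly the same algebraic form as the WoLF-KF update, the only difference being the meaning of the matrices involved. Concretely, by the construction in \Cref{subsec:wlf-nonlinear-extensions} together with \Cref{prop:weighted-kf}, the uncontaminated and contaminated generalised posteriors are $q(\vtheta_t\mid\vy_t,\vy_{1:t-1}) = \gauss(\vtheta_t\mid\vmu_t,\vSigma_t)$ and $q(\vtheta_t\mid\vy_t^c,\vy_{1:t-1}) = \gauss(\vtheta_t\mid\vmu_t^c,\vSigma_t^c)$ with
\begin{align*}
\vmu_t &= \vmu_{t\vert t-1} + w_t\vK_t(\vy_t - \hat\vy_t), & \vmu_t^c &= \vmu_{t\vert t-1} + w_t^c\vK_t^c(\vy_t^c - \hat\vy_t),\\
\vSigma_t^{-1} &= \vSigma_{t\vert t-1}^{-1} + w_t\vH_t^\intercal\vR_t^{-1}\vH_t, & (\vSigma_t^c)^{-1} &= \vSigma_{t\vert t-1}^{-1} + w_t^c\vH_t^\intercal\vR_t^{-1}\vH_t,
\end{align*}
where now $\vmu_{t\vert t-1} = f_t(\vmu_{t-1})$, $\vH_t$ is the Jacobian of $h_t$ evaluated at $\vmu_{t\vert t-1}$, $\hat\vy_t = h_t(\vmu_{t\vert t-1})$, $w_t = W^2(\vy_t,\hat\vy_t)$, and $w_t^c = W^2(\vy_t^c,\hat\vy_t)$. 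The key structural fact I would state explicitly is that $\vmu_{t\vert t-1}$, $\vSigma_{t\vert t-1}^{-1}$, $\vH_t$, $\hat\vy_t$, $\vR_t^{-1}$ and hence $\vK_t$ depend only on $\vy_{1:t-1}$, not on the contamination $\vy_t^c$ — the linearisation point $\vmu_{t\vert t-1}$ is a deterministic function of the previous posterior mean — so in particular $\|\hat\vy_t\|_2$ is a finite constant in $\vy_t^c$. The only quantities that move with the contamination are the scalar weight $w_t^c\in[0,1]$ and the residual $\vy_t^c-\hat\vy_t$, exactly as in the linear case.

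Given this, I would then simply replay the three-term decomposition of $\operatorname{PIF}(\vy_t^c,\vy_{1:t}) = \operatorname{KL}\!\left(q(\vtheta_t\mid\vy_t^c,\vy_{1:t-1})\,\|\,q(\vtheta_t\mid\vy_t,\vy_{1:t-1})\right)$ from the proof of \Cref{lemma:robust-kf}: term (1), $\Tr((\vSigma_t^c)^{-1}\vSigma_t)-\nparams$, is bounded via $\Tr(AB)\le\Tr(A)\Tr(B)$, positive semidefiniteness, and $\sup_{\vy_t^c}w_t^c<\infty$; term (2), $\|\vmu_t-\vmu_t^c\|_{\vSigma_t^c}^2$, is bounded by the triangle inequality and $(a+b)^2\le2(a^2+b^2)$, together with $\lambda_{\max}$-bounds on the positive definite matrices $\vK_t^\intercal(\vSigma_t^c)^{-1}\vK_t$ and $(\vK_t^c)^\intercal(\vSigma_t^c)^{-1}\vK_t^c$ (positive definite because $(\vSigma_t^c)^{-1}=\vSigma_{t\vert t-1}^{-1}+w_t^c\vH_t^\intercal\vR_t^{-1}\vH_t$ is), $\sup_{\vy_t^c}w_t^c\|\vy_t^c\|_2<\infty$, and finiteness of $\|\hat\vy_t\|_2$; term (3), $\ln(\det\vSigma_t^c/\det\vSigma_t)$, is bounded using $\det(A+B)\ge\det(A)+\det(B)$ for positive semidefinite $A,B$ and discarding the nonnegative term $\det(w_t^c\vH_t^\intercal\vR_t^{-1}\vH_t)\ge0$. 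Summing the three bounds yields a constant independent of $\vy_t^c$, so $\sup_{\vy_t^c\in\real^d}\operatorname{PIF}(\vy_t^c,\vy_{1:t})<\infty$, i.e.\ outlier robustness; and for the weighting functions \eqref{eq:w_t}, \eqref{eq:mahalanobis-weight}, \eqref{eq:thresholded-mahalanobis-weight} the two hypothesised suprema on $W$ are finite (each such $W$ takes values in $[0,1]$ and decays at least like $\|\vy_t\|_2^{-1}$), so the hypotheses apply.

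I would also add a short remark isolating what is genuinely new relative to the linear case, since that is where the only real care is needed: the EKF linearises $h_t$ and $f_t$ at $\vmu_{t\vert t-1}$ rather than at $\vtheta_{t-1}$ or at $\vy_t$, and because $\vmu_{t\vert t-1}=f_t(\vmu_{t-1})$ is measurable with respect to $\vy_{1:t-1}$, replacing $\vy_t$ by $\vy_t^c$ leaves $\vH_t$, $\hat\vy_t$, $\vK_t$, $\vSigma_{t\vert t-1}$ unchanged. This is exactly the independence property the linear proof relied on, so no new estimate is required. The ``main obstacle'' is therefore purely bookkeeping — verifying this independence carefully so that the constants $C_1,\dots,C_{10}$ of \Cref{lemma:robust-kf} remain constants in $\vy_t^c$ — rather than any new analytic difficulty; combined with \Cref{lemma:ekf} this establishes the full statement of \Cref{prop:weighted-kf-bounded-pif}.
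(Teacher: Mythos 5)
Your proposal is correct and follows essentially the same route as the paper: both reduce the EKF case to \cref{lemma:robust-kf} by observing that the weighted EKF posterior has the identical algebraic form, with $\vH_t$ the Jacobian at $\vmu_{t\vert t-1}=f_t(\vmu_{t-1})$ and $\hat\vy_t=h_t(\vmu_{t\vert t-1})$ all measurable with respect to $\vy_{1:t-1}$ and hence unaffected by the contamination, so the three-term KL bound replays verbatim. Your explicit remark isolating this independence of the linearisation point from $\vy_t^c$ is in fact slightly more careful than the paper's one-line justification (the only nit is that $w_t^c$ need not lie in $[0,1]$ for a general admissible $W$, only be bounded, which is all the argument uses).
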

\begin{proof}
We can easily replicate the procedure in \cref{lemma:robust-kf} since it can be straightforwardly applied to the approximate posterior presented in \cref{subsec:wlf-nonlinear-extensions}.
Now consider the weighted EKF. We compute
\begin{align}
    \operatorname{PIF}(\vy_t^{c},\vy_{1:t}) = \operatorname{KL} \left(
        q(\vtheta_t|\vy^{c}_t,\vy_{1:t-1})
        \| q(\vtheta_t|\vy_t,\vy_{1:t-1})\right) .
\end{align}

Let $W: \real^{\nout\times\nout}\to\real$ be a weighting function such that
$\sup_{\vy_t\in\mathbf{R}^d}W(\vy_t, \hat{\vy}_t)<\infty$ and
$\sup_{\vy_t\in\mathbf{R}^d}W(\vy_t, \hat{\vy}_t)^2\|\vy_t\|_{2}<\infty$.
Define $w_t := W^2(\vy_t, \hat{\vy}_t)$, and $w_t^c := W^2(\vy_{t}^{c},\hat{\vy}_t)$.

Let $q(\vtheta_t|\vy_t,\vy_{1:t-1}) = \mathcal{N}(\vtheta_t | \vmu_{t}, \vSigma_{t})$, and
$q(\vtheta_t|\vy^{c}_{t},\vy_{1:t-1}) = \mathcal{N}(\vtheta_t| \vmu_{t}^c, \vSigma_{t}^c)$, where
\begin{align*}
\vmu_{t} &= \vmu_{t\vert t-1} + w_t \vK_t (\vy_{t} - \hat{\vy}_{t}),  & \vmu_{t}^{c} &= \vmu_{t\vert t-1} + w_t^c \vK_t^{c} (\vy_{t}^{c} - \hat{\vy}_{t}),\\
\vSigma_{t}^{-1} &= \vSigma_{t\vert t-1}^{-1} + w_t \vH_{t}^{\intercal} \vR_{t}^{-1} \vH_{t}, & (\vSigma_{t}^{c})^{-1} &= \vSigma_{t\vert t-1}^{-1} + w_t^c \vH_{t}^{\intercal} \vR_{t}^{-1} \vH_{t}.
\end{align*}

for $\vmu_{t|t-1}$ and $\hat{\vy}_t$ as before. Then, following the same procedure as in the proof of \cref{lemma:robust-kf}, we show that this PIF is bounded and the method is robust. 
\end{proof}

\paragraph{Proof of Theorem \ref{prop:weighted-kf-bounded-pif}}
The proof of \cref{prop:weighted-kf-bounded-pif} follows directly from \cref{lemma:kf,lemma:robust-kf,lemma:ekf,lemma:robust-ekf}.

\subsection{Ensemble Kalman Filter}
\label{proof:ensemble-kf}
Now, we extend \cref{prop:weighted-kf-bounded-pif} to the ensemble Kalman filter case. There are a couple of caveats to proving robustness for this case. While we know the distribution of the particles in the case where the state-space model is linear and Gaussian, this is not always the case. Therefore, we propose studying the empirical measures defined by the particles. The problem with this is that the Kullback-Leibler divergence is not defined for empirical measures with different supports. This is why we propose using the 2-Wasserstein distance instead since it is unbounded and well-defined for empirical measures. 
The 2-Wasserstein distance is defined as follows: If $P$ is an empirical measure with samples $x_1,...,x_n$ and $Q$ is an empirical measure with samples $y_1,...,y_n$, 
\begin{align*}
	D_{W_2}(P, Q) =  \inf_\pi \left( \frac{1}{n} \sum_{i=1}^n \left\|x_i - y_{\pi(i)}\right\|_2^2 \right)^{1/2},
\end{align*}
where the infimum is over all permutations $\pi$ of $n$ elements. Therefore, the PIF for the ensemble Kalman filter case has the form
\begin{align}
\label{eq:empirical-PIF}
	\operatorname{PIF}(\vy_t^{c},\vy_{1:t}) =  D_{W_2}\left(\mathbb{P}_{N}, \mathbb{P}_{N}^{c}\right),
\end{align}
where $\mathbb{P}_{N}$ is the empirical measure of the (non-contaminated) particles
$\left\{\hat{\theta}^{(i)}_{t} = \hat{\vtheta}_{t\vert t-1}^{\left(i\right)}+  w_t\bar{\vK}_{t}\left(\vy_{t}-\hat{\vy}_{t\vert t-1}^{(i)}\right)\right\}_{i=1}^{N}$,
and $\mathbb{P}_{N}^{c}$ is the empirical measure of the contaminated particles
$\left\{\hat{\theta}^{(i)}_{t^c} = \hat{\vtheta}_{t\vert t-1}^{\left(i\right)} +  w_t^c\bar{\vK}_{t}\left( \vy_{t}^{c}-\hat{\vy}_{t\vert t-1}^{(i)}\right)\right\}_{i=1}^{N}$.
As in the previous definition, if $\sup_{\vy_t^c\in\mathbb{R}^{d}}|\operatorname{PIF}(\vy_t^{c},\vy_{1:t})| < \infty$,
then the posterior is called outlier-robust.
\begin{lemma}
 The generalised posterior presented in \cref{sec:weighted-ensemble-kalman-filter}
 has bounded PIF (as defined in \cref{eq:empirical-PIF}) and is,
 therefore, outlier robust for any weighting function $W:\real^{\nout\times\nout}\to\real$ such that
    $\sup_{\vy_t\in\mathbf{R}^d}W(\vy_t, \hat{\vy}_t)<\infty$ and
    $\sup_{\vy_t\in\mathbf{R}^d}W(\vy_t, \hat{\vy}_t)^2\,\|\vy_t\|_{2}<\infty$.
\end{lemma}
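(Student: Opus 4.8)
The plan is to follow the structure of the proof of \cref{lemma:robust-kf}, replacing the closed-form KL computation---which is unavailable for empirical measures with differing supports---by a direct upper bound on the 2-Wasserstein distance obtained from an explicit (suboptimal) coupling. The crucial structural fact is that $\mathbb{P}_N$ and $\mathbb{P}_N^c$ are supported on particle clouds sharing the same index set $i = 1, \dots, N$, and that the predicted ensemble $\{\hat{\vtheta}_{t\vert t-1}^{(i)}\}$, the sampled predictions $\{\hat{\vy}_{t\vert t-1}^{(i)}\}$, and the ensemble gain $\bar{\vK}_t$ are all formed from the \emph{predict} step and from quantities available before the observation is processed, so none of them depends on the contamination $\vy_t^c$. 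Taking the identity permutation $\pi = \mathrm{id}$ in the definition of $D_{W_2}$ yields
\begin{align*}
\operatorname{PIF}(\vy_t^{c},\vy_{1:t})^2 = D_{W_2}\big(\mathbb{P}_N, \mathbb{P}_N^c\big)^2 \leq \frac{1}{N}\sum_{i=1}^N \big\|\hat{\theta}^{(i)}_{t} - \hat{\theta}^{(i)}_{t^c}\big\|_2^2 ,
\end{align*}
so it suffices to bound each summand by a constant independent of $\vy_t^c$.

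Writing $w_t := W^2(\vy_t,\hat{\vy}_t)$ and $w_t^c := W^2(\vy_t^c,\hat{\vy}_t)$, the per-particle difference is
\begin{align*}
\hat{\theta}^{(i)}_{t} - \hat{\theta}^{(i)}_{t^c} = \bar{\vK}_t\big( w_t(\vy_t - \hat{\vy}_{t\vert t-1}^{(i)}) - w_t^c(\vy_t^c - \hat{\vy}_{t\vert t-1}^{(i)}) \big) ,
\end{align*}
and taking the operator norm of $\bar{\vK}_t$ together with the triangle inequality gives
\begin{align*}
\big\|\hat{\theta}^{(i)}_{t} - \hat{\theta}^{(i)}_{t^c}\big\|_2 \leq \|\bar{\vK}_t\|_{\mathrm{op}}\Big( w_t\|\vy_t - \hat{\vy}_{t\vert t-1}^{(i)}\|_2 + w_t^c\|\vy_t^c\|_2 + w_t^c\|\hat{\vy}_{t\vert t-1}^{(i)}\|_2 \Big) .
\end{align*}
The first bracketed term is independent of $\vy_t^c$; the second is bounded by $\sup_{\vy_t^c}W(\vy_t^c,\hat{\vy}_t)^2\|\vy_t^c\|_2 < \infty$; and the third is bounded because $\sup_{\vy_t^c}W(\vy_t^c,\hat{\vy}_t)^2 < \infty$ and $\hat{\vy}_{t\vert t-1}^{(i)}$ is a fixed finite vector. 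Since $\bar{\vK}_t$ is a fixed real matrix, $\|\bar{\vK}_t\|_{\mathrm{op}} < \infty$, so each $\|\hat{\theta}^{(i)}_{t} - \hat{\theta}^{(i)}_{t^c}\|_2$ is bounded uniformly in $\vy_t^c$; substituting into the display above and taking the square root bounds the PIF, establishing outlier robustness. If a per-particle weight $W(\vy_t, \hat{\vy}_{t\vert t-1}^{(i)})$ is used in place of a shared weight, the same argument goes through verbatim with $\hat{\vy}_t$ replaced by $\hat{\vy}_{t\vert t-1}^{(i)}$.

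The main obstacle I expect is the bookkeeping about \emph{which} ingredients of the weighted EnKF update are permitted to depend on $\vy_t^c$: one must check against the construction in \cref{sec:weighted-ensemble-kalman-filter} that the weight multiplies only the innovation $\vy_t - \hat{\vy}_{t\vert t-1}^{(i)}$ and that the ensemble-estimated gain $\bar{\vK}_t$ is left untouched; were the gain itself reweighted by a data-dependent factor, one would additionally need it to stay bounded, which still holds but requires a short extra step. A second point worth recording explicitly is the reason for switching from KL to $W_2$ in the definition of the PIF: KL is infinite (indeed undefined) between empirical measures with different supports, whereas $W_2$ is finite, and---crucially---only an \emph{upper} bound via the identity coupling is needed here, so no optimal-transport argument is required.
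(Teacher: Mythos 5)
Your proposal is correct and follows essentially the same route as the paper's proof: bound the $W_2$ infimum by the identity coupling, expand the per-particle difference $\bar{\vK}_t\bigl(w_t(\vy_t-\hat{\vy}_{t|t-1}^{(i)})-w_t^c(\vy_t^c-\hat{\vy}_{t|t-1}^{(i)})\bigr)$, and control it via the triangle inequality together with the two supremum conditions on $W$ (the paper uses $\lambda_{\max}(\bar{\vK}_t^\intercal\bar{\vK}_t)$ where you use $\|\bar{\vK}_t\|_{\mathrm{op}}$, which is the same bound). Your observation that $\bar{\vK}_t$ and the sampled predictions are formed in the predict step and hence do not depend on $\vy_t^c$ is exactly the bookkeeping the paper relies on.
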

\begin{proof}
Let $W:\real^{\nout\times\nout} \to \real$ be a weighting function such that
$\sup_{\vy_t\in\mathbf{R}^d}W(\vy_t, \hat{\vy}_t)<\infty$ and
$\sup_{\vy_t\in\mathbf{R}^d}W(\vy_t, \hat{\vy}_t)^2\|\vy_t\|_{2}<\infty$.
Define $w_t := W^2(\vy_t, \hat{\vy}_t)$, and $w_t^c := W^2(\vy_{t}^{c}, \hat{\vy}_t)$.
Then, the PIF
\begin{align*}
	\operatorname{PIF}(\vy_t^{c},\vy_{1:t}) =
     D_{W_2}(\mathbb{P}_{N}, \mathbb{P}_{N}^{c}) =
     \inf_\pi \left( \frac{1}{N} \sum_{i=1}^N \left\|\hat{\theta}^{(i)}_{t} - \hat{\theta}^{(\pi(i))}_{t^c} \right\|_2^2 \right)^{1/2},
\end{align*}
where the infimum is over all permutations $\pi$ of $N$ elements, $\mathbb{P}_{N}$ is the empirical measure of the particles $\left\{\hat{\theta}^{(i)}_{t} = \hat{\vtheta}_{t\vert t-1}^{\left(i\right)}+  w_t\bar{\vK}_{t}\left(\vy_{t}-\hat{\vy}_{t\vert t-1}^{(i)}\right)\right\}_{i=1}^{N}$, and $\mathbb{P}_{N}^{c}$ is the empirical measure of the contaminated particles $\left\{\hat{\theta}^{(i)}_{t^c} = \hat{\vtheta}_{t\vert t-1}^{\left(i\right)} +  w_t^c\bar{\vK}_{t}\left( \vy_{t}^{c}-\hat{\vy}_{t\vert t-1}^{(i)}\right)\right\}_{i=1}^{N}$. Therefore, we know that the infimum will be smaller than considering only the identity as permutation,
\begin{align*}
	D_{W_2}(\mathbb{P}_{N}, \mathbb{P}_{N}^{c}) \leq \left( \frac{1}{N} \sum_{i=1}^N \left\|\hat{\theta}^{(i)}_{t} - \hat{\theta}^{(i)}_{t^c} \right\|_2^2 \right)^{1/2}.
\end{align*}
Now, using the equation of the particles:
\begin{align*}
	D_{W_2}(\mathbb{P}_{N}, \mathbb{P}_{N}^{c}) &\leq \left( \frac{1}{N} \sum_{i=1}^N \left\|\hat{\vtheta}_{t\vert t-1}^{\left(i\right)}+  w_t\bar{\vK}_{t}\left(\vy_{t}-\hat{\vy}_{t\vert t-1}^{(i)}\right) -\hat{\vtheta}_{t\vert t-1}^{\left(i\right)} -  w_t^c\bar{\vK}_{t}\left(\vy_{t}^{c}-\hat{\vy}_{t\vert t-1}^{(i)}\right) \right\|_2^2 \right)^{1/2}\\
   & = \left( \frac{1}{N} \sum_{i=1}^N \left\| w_t\bar{\vK}_{t}\left(\vy_{t}-\hat{\vy}_{t\vert t-1}^{(i)}\right) -  w_t^c\bar{\vK}_{t}\left(\vy_{t}^{c}-\hat{\vy}_{t\vert t-1}^{(i)}\right) \right\|_2^2 \right)^{1/2}.
\end{align*}
We use triangle inequality to obtain:
\begin{align*}
	D_{W_2}(\mathbb{P}_{N}, \mathbb{P}_{N}^{c}) &\leq
   \left( \frac{1}{N} \sum_{i=1}^N \left(\left\| w_t\bar{\vK}_{t}\left(\vy_{t}-\hat{\vy}_{t\vert t-1}^{(i)}\right)\right\|_2 + \left\|w_t^c\bar{\vK}_{t}\left(\vy_{t}^{c}-\hat{\vy}_{t\vert t-1}^{(i)}\right) \right\|_2\right)^2 \right)^{1/2}\\
   &\leq
   \left( \frac{2}{N} \sum_{i=1}^N \left\| w_t\bar{\vK}_{t}\left(\vy_{t}-\hat{\vy}_{t\vert t-1}^{(i)}\right)\right\|_2^2  + \left\|w_t^c\bar{\vK}_{t}\left(\vy_{t}^{c}-\hat{\vy}_{t\vert t-1}^{(i)}\right) \right\|_2^2 \right)^{1/2}.
\end{align*}
Lets define $C_{11} = \sum_{i=1}^N \| w_t\bar{\vK}_{t}\left(\vy_{t}-\hat{\vy}_{t\vert t-1}^{(i)}\right)\|_2^2$, a constant that does not depend on the contamination. Therefore:
\begin{align*}
	D_{W_2}(\mathbb{P}_{N}, \mathbb{P}_{N}^{c}) &\leq
   \left( \frac{2}{N} \left( C_{11} + \sum_{i=1}^N \left\|w_t^c\bar{\vK}_{t}\left(\vy_{t}^{c}-\hat{\vy}_{t\vert t-1}^{(i)}\right) \right\|_2^2 \right)\right)^{1/2}.
\end{align*}
First, we observe that $\left\|w_t^c\bar{\vK}_{t}\left(\vy_{t}^{c}-\hat{\vy}_{t\vert t-1}^{(i)}\right) \right\|_2^2 = \left(w_t^c\right)^2 \left(\vy_{t}^{c}-\hat{\vy}_{t\vert t-1}^{(i)}\right)^{\intercal}\bar{\vK}_{t}^{\intercal}\bar{\vK}_{t}\left(\vy_{t}^{c}-\hat{\vy}_{t\vert t-1}^{(i)}\right)$. Now we bound this expression using the fact that for every positive definite matrix $A$ and vector $\vz$,
we have $\lambda_{\min}(A)\|\vz\|^2\leq \vz^{\intercal}A\vz\leq\lambda_{\max}(A)\|\vz\|^2$,
where $\lambda_{\min}(A)$ and $\lambda_{\max}(A)$ are the minimum and maximum eigenvalues of $A$,
respectively \citep[Chapter 5.7][]{horn2012matrix}:
\begin{align*}
	\left\|w_t^c\bar{\vK}_{t}\left(\vy_{t}^{c}-\hat{\vy}_{t\vert t-1}^{(i)}\right)  \right\|_2^2 \leq (w_t^c)^2 \lambda_{\max}(\bar{\vK}_{t}^{\intercal}\bar{\vK}_{t})  \left\|\vy_{t}^{c}-\hat{\vy}_{t\vert t-1}^{(i)} \right\|_2^2,
\end{align*}
where $\lambda_{\max}\left(\bar{\vK}_{t}^{\intercal}\bar{\vK}_{t}\right)\leq C_{12}<\infty$,
since $\bar{\vK}_{t}^{\intercal}\bar{\vK}_{t}$ is a positive definite matrix, and thus, all eigenvalues are real-valued. Then,
\begin{align*}
	\left\|w_t^c\bar{\vK}_{t}\left(\vy_{t}^{c}-\hat{\vy}_{t\vert t-1}^{(i)}\right) \right\|_2^2 \leq C_{12}\left(w_t^c \left\|\vy_{t}^{c}-\hat{\vy}_{t\vert t-1}^{(i)} \right\|_2\right)^2.
\end{align*}
Now, since $\sup_{\vy_t^c\in\mathbb{R}^d} w_t^c\leq C_{13}<\infty$, and $\sup_{\vy_t^c\in\mathbf{R}^d}w_t^c\|\vy_t^c\|_{2}\leq C_{14}<\infty$, we have:
\begin{align*}
  C_{12}\left(w_t^c \left\|\vy_{t}^{c}-\hat{\vy}_{t\vert t-1}^{(i)} \right\|_2\right)^2 &\leq C_{12} \left(w_t^c\left\| \vy_{t}^c\right\|_{2}+w_t^c\left\|\hat{\vy}_{t\vert t-1}^{(i)}\right\|_{2}\right)^{2}\leq  C_{12} \left(C_{14}+C_{13}\left\|\hat{\vy}_{t\vert t-1}^{(i)}\right\|_{2}\right)^{2} = C_{15}.
\end{align*}
Putting it all together, we find that
\begin{align*}
	D_{W_2}(\mathbb{P}_{N}, \mathbb{P}_{N}^{c}) &\leq
   \left( \frac{2}{N} \left( C_{11} + \sum_{i=1}^N C_{15} \right)\right)^{1/2}<\infty.
\end{align*}
\end{proof}
\section{Extensions of the weighted likelihood filter}
\label{sec:wlf-extensions}

Below, we discuss a number of extension to the WoLF methodology including generalisations to (i) exponential-family members,  (ii) multi-output
weighting functions, and (iii) the EnKF.

\subsection{Exponential family likelihoods}
\label{sec:wlf-expfam-extension}
We extend WoLF for measurements modelled using an element of the  exponential family of distributions (as
first mentioned in Section \ref{subsec:wlf-nonlinear-extensions}).
Classical examples of exponential families, in addition to the Gaussian distribution, are
the Bernoulli distribution,
the Gamma distribution, 
the Beta distribution, and
the Poisson distribution.
These distributions can be considered to tackle filtering problems when
the measurements are generated respectively from
a binary process,
a process that only takes values in positive real line,
a process that takes values in the interval $[0,1]$, or
a counting process.

We take the mass function of a measurement $\vy_t \in B \subseteq \real^\nout$ to be of the form
\begin{align}
    p(\vy_t|\vtheta_t) &=
    \text{expfam}(\vy_t|\veta_t)
    = Z^{-1}(\veta_t)\exp\Big(\veta_t^\intercal T(\vy_t) + b(\vy_t) \Big).
    \label{eqn:obs}
\end{align}
with
$B$ the support of the measurement $\vy_t$,
$\veta_t \in\real^k$ the natural parameters,
$A: B\to\real^k$ the sufficient statistic function,
$b: B \to \real$ the base measure, and
$Z: \real^k\to\real$ the normalising function.
To simplify the notation, let $\veta_t = h_t(\vtheta_t)$ and $\vz_t = T(\vy_t)$.
We also define the dual (moment) parameters
\begin{equation}
\vlambda_t=\expect{\vz_t\vert\veta_t} = \nabla_{\veta_t}\log Z(\veta_t)
\end{equation}
and the conditional variance
\begin{equation}
\vR_t=\text{Cov}[\vz_t\vert\veta_t]=\nabla_{\veta_t}\vlambda_t=\nabla_{\veta_t}^{2}\log Z(\veta_t),
\end{equation}
and we take our predictive model to output the dual parameters:
\begin{align}
    \vlambda_t = h_t(\vtheta_t).
\end{align}
For example, for a Gaussian likelihood with fixed observation noise
and a linear observation model as in \eqref{eq:linear-ssm},
we have that $A(\vy_t)=\vy_t$,
$\vlambda_t = \vH_t \vtheta_t$, and
$\vR_t$ is constant.

The exponential family EKF algorithm of
\citet{Ollivier2018} approximates the likelihood in \eqref{eqn:obs} with a moment-matched Gaussian,
\begin{align}
    \label{eq:EF-EKF-likelihood}
    q(\vy_t|\vtheta_t) = \gauss(\vz_t|\bar{\vlambda}_t,\vR_t),
\end{align}
where $\bar{\vlambda}_t =\vH_t( \vtheta_t - \bar\vmu_{t | t-1}) + h_t(\bar\vmu_{t | t-1})$, and
$\vmu_{t|t-1}$ is given by \eqref{eq:linearised-state-mean}.
We modify this algorithm by including a weighting term on the log-likelihood to get
\begin{align}
\log q(\vy_{t}\vert\vtheta_{t}) &=
W_{t}(\vy_t, \hat{\vy}_t)^2
\left(-\frac{1}{2}\bar{\vlambda}_{t}^{\top}{\vR}_{t}^{-1}\bar{\vlambda}_{t}+\bar{\vlambda}_{t}^{\top}{\vR}_{t}^{-1}T(\vy_{t})\right) + {\rm cst},
\end{align}
with ${\rm cst}.$ a constant term that does not depend on $\vtheta$.
We leave the study of $W(\vy_t,\hat{\vy}_t)$ when modelling a non-Gaussian exponential family for future work.

\subsection{Dimension-specific weighting}
\label{sec:dimension-specific-weighting}
In this section, we provide details about the implementation
of a vector-valued weighting function $W_{t}: \real^\nout\times\real^\nout \to \real^\nout$
introduced in Section \ref{subsec:choice-weighting-function}.
See Section \ref{experiment:lorenz96} for an evaluation of this method.
We consider $\vy_t \in \real^\nout$ with $\nout > 1$.

If the likelihood factorises as
$p\left(\vy_{t}\vert\vtheta_{t}\right) = \prod_{j=1}^{\nout}p\left(y_{t,j}\vert\vtheta_{t}\right)$,
with $y_{t,j}$ the $j$-th element of the $t$-th measurement,
we define the weighted log-likelihood as
\begin{equation}\label{eq:weighted-independent-loglikelihood}
\log q\left(\vy_{t}\vert\vtheta_{t}\right) =
\sum_{j=1}^\nout w_{t,j}^{2}\log p\left(y_{t,j}\vert\vtheta_{t}\right),
\end{equation}
where $w_{t,j}^2 = W^2(\vy_t,\hat{\vy}_t)_j$ is the $j$-th entry of the vector-valued weight function.

For a Gaussian likelihood $p(\vy_{t}\vert\vtheta_{t})=\gauss\left(\vy_{t}\vert h_t\left(\vtheta_{t}\right),\vR_{t}\right)$
we define the weighted likelihood as
\begin{align}
    q(\vy_{t}\vert\vtheta_{t}) &=
    \gauss\left(\vy_{t}\vert h_t\left(\vtheta_{t}\right),\bar{\vR}_{t}\right), \label{eq:dim-weighted-gaussian-likelihood}\\
    \bar{\vR}_{t}^{-1} & =\Diag\left(\vw_{t}\right)\vR_{t}^{-1}\Diag\left(\vw_{t}\right),
    \label{eq:dim-weighted-Rbar}
\end{align}
which is a special case of \eqref{eq:weighted-independent-loglikelihood}
when $\vR_{t}$ is diagonal.%
\footnote{$\Diag(\vv)$ for $\vv\in\real^K$ is the $K\times K$ matrix with $\Diag(\vv)_{i,i} = \vv_{i}$ and $\Diag(\vv)_{i,j}=0$ for $i\ne j$.}
This expression scales the precision of each $y_{t,j}$ by $w_{t,j}^2$ while preserving correlations.
It can also be written in terms of dimension-specific weighting on the errors:
\begin{align}
\log q(\vy_{t}\vert\vtheta_{t})
 &=-\frac{1}{2}\tilde{E}_{t}\left(\vtheta_t\right)^{\intercal}\vR_{t}^{-1}\tilde{E}_{t}\left(\vtheta_t\right),\\
\tilde{E}_{t}\left(\vtheta_t\right) & =\Diag\left(\vw_{t}\right)\left(\vy_{t}-h_t\left(\vtheta_{t}\right)\right).
\end{align}
The EKF update from the weighted loglikelihood in \eqref{eq:dim-weighted-gaussian-likelihood} mirrors the standard EKF update
with the true observation precision $\vR_t^{-1}$ replaced by $\bar{\vR}_t^{-1}$:
\begin{align}
\vSigma_{t}^{-1} &=
    \vSigma_{t|t-1}^{-1} 
    + \vH_{t}^{\top} \bar{\vR}_t^{-1} \vH_t,
    \label{eq:WoLF-dim-cov-update}\\
\vmu_{t} &=
    \vmu_{t|t-1} 
    + \vK_{t} 
    (\vy_{t}-\hat{\vy}_t),
    \label{eq:WoLF-dim-mean-update}\\
\vK_{t} &=
    \vSigma_{t} \vH_{t}^{\top} \bar{\vR}_{t}^{-1}.
    \label{eq:WoLF-dim-gain}
\end{align}
This generalises the WoLF update with scalar weight given by Proposition \ref{prop:weighted-kf}.

Finally, when $w_{t,j}=0$ for one or more observation dimensions,
\eqref{eq:weighted-independent-loglikelihood} and \eqref{eq:dim-weighted-gaussian-likelihood}
define improper observation distributions
because the marginals on these dimensions are uniform over $\real$ and the precision $\bar{\vR}_t^{-1}$ is singular.
Nevertheless, the joint marginal for the dimensions with positive weights, $\vy_{t,J_t}$ where $J_t=\{j:w_{t,j}>0\}$, is a proper distribution. This is all that is needed because the observations $y_{t,j}$ with $w_{t,j}=0$ are ignored in the update.
This can be seen in \eqref{eq:WoLF-dim-cov-update},
\eqref{eq:WoLF-dim-mean-update}, and \eqref{eq:WoLF-dim-gain}, where
as a consequence of \eqref{eq:dim-weighted-Rbar}, the
error terms and the Jacobian are both zeroed out on the zero-weighted dimensions.

\subsection{Weighted ensembles}
\label{sec:weighted-ensemble-kalman-filter}
In this section, we introduce the weighted EnKF first mentioned in Section \ref{subsec:wlf-nonlinear-extensions}.
We begin with a discussion of the EnKF originally referenced in Section \ref{subsec:enkf}.

\paragraph{The ensemble Kalman filter:}
The ensemble Kalman filter (EnKF) \citep{roth2017ensemble} uses an ensemble of $N\in\mathbb{N}$ particles
$\{\hat{\vtheta}_{t\vert t-1}^{\left(i\right)}\}_{i=1}^N$.
For each $i=1, \ldots, N$, the update step samples predictions $\hat{\vy}_{t\vert t-1}^{\left(i\right)}$
according to
\begin{equation}\label{eq:EnKF-sampling}
   \hat{\vy}_{t\vert t-1}^{\left(i\right)} \sim \gauss\left(h_t\left(\hat{\vtheta}_{t|t-1}^{(i)}\right),\vR_t\right).
\end{equation}
and then updates each particle according to
\begin{equation}\label{eq:EnKF-update}
\hat{\vtheta}_{t}^{\left(i\right)}=
\hat{\vtheta}_{t\vert t-1}^{\left(i\right)}+\bar{\vK}_{t}\left(\vy_{t}-\hat{\vy}_{t\vert t-1}^{(i)}\right),
\end{equation}

The EnKF gain $\bar{\vK}_{t}$ is 
\begin{equation}\label{eq:enkf-gain-matrix}
\bar{\vK}_{t}={\rm cov}_{i}\left[\hat{\vtheta}_{t\vert t-1}^{\left(i\right)},\hat{\vy}_{t\vert t-1}^{\left(i\right)}\right]\left(\varQ{\hat{\vy}_{t\vert t-1}^{\left(i\right)}}{i}\right)^{-1}.
\end{equation}
We write $\expectQ{\cdot}{i},{\rm cov}_{i}\left[\cdot,\cdot\right],\varQ{\cdot}{i}$
to refer to the distribution over particles as indexed by $i$.

The EnKF converges to standard KF as $N\to\infty$ when the likelihood is linear-Gaussian and the prior is also Gaussian,
\begin{align}
\label{eq:EnKF-linear-Gaussian-assumption}
p\left(\vy_{t}\vert \vtheta_{t}\right) & =\gauss\left(\vy_{t}\vert\vH_{t}\vtheta_{t},\vR_{t}\right),\\
\hat{\vtheta}_{t\vert t-1}^{\left(i\right)} & \sim\gauss\left(\vmu_{t\vert t-1},\vSigma_{t\vert t-1}\right).
\end{align}
Under these conditions the EnKF gain $\bar{\vK}_{t}$ matches the KF gain $\vK_{t}$, because
\begin{align}
{\rm cov}_{i}\left[\hat{\vtheta}_{t\vert t-1}^{\left(i\right)},\hat{\vy}_{t\vert t-1}^{\left(i\right)}\right] & =\vSigma_{t\vert t-1}\vH_{t}^{\top}\label{eq:EnKF-cross_cov-lin_gauss_limit},\\
\varQ{\hat{\vy}_{t\vert t-1}^{\left(i\right)}}{i} & =\vH_{t}\vSigma_{t\vert t-1}\vH_{t}^{\top}+\vR_{t}
\label{eq:EnKF-var-lin_gauss_limit},\\
\bar{\vK}_{t} & =\vSigma_{t\vert t-1}\vH_{t}^{\top}\left(\vH_{t}\vSigma_{t\vert t-1}\vH_{t}^{\top}+\vR_{t}\right)^{-1}\label{eq:EnKF-gain-lin_Gauss_limit}\\
 & =\vK_{t},
 \label{eq:EnKF-gain-lin_Gauss_limit-b}
\end{align}
and therefore the statistics of the posterior ensemble also match
the KF update
\begin{align}
\vmu_{t} =\expectQ{\hat{\vtheta}_{t}^{\left(i\right)}}{i}
 & =\vmu_{t\vert t-1}+\vK_{t}\left(\vy_{t}-\vH_{t}\vmu_{t\vert t-1}\right),
 \label{eq:EnKF-mean-update-lin_gauss_limit}\\
\vSigma_{t} =\varQ{\hat{\vtheta}_{t}^{\left(i\right)}}{i}
 & =\vSigma_{t\vert t-1}-\vK_{t}\vH_{t}\vSigma_{t\vert t-1}.\label{eq:EnKF-cov-update-lin_gauss_limit}
\end{align}

\paragraph{The weighted-likelihood EnKF:}
We propose a weighted version of EnKF based on our WoLF
with dimension-specific weights and Gaussian likelihood,
where \eqref{eq:EnKF-sampling} is modified to sample particle predictions from the weighted likelihood in \eqref{eq:dim-weighted-gaussian-likelihood}:
\begin{align}
    \hat{\vy}_{t|t-1}^{(i)}
    \sim \gauss\left(h_t\left(\hat{\vtheta}_{t|t-1}^{(i)}\right),\bar{\vR}_t\right).
    \label{eq:wEnKF-sampling-Rtilde}
\end{align}

The weighted EnKF converges to the WoLF as $N\to\infty$, by the same argument for the vanilla EnKF and EKF in
\eqref{eq:EnKF-cross_cov-lin_gauss_limit} through \eqref{eq:EnKF-cov-update-lin_gauss_limit}.
Specifically, under this limit the sampling scheme in \eqref{eq:wEnKF-sampling-Rtilde} and the update in \eqref{eq:EnKF-update} yield
\begin{align}
    \bar{\vK}_t &= \vSigma_{t\vert t-1}\vH_{t}^{\top}\left(\vH_{t}\vSigma_{t\vert t-1}\vH_{t}^{\top}+\bar{\vR}_{t}\right)^{-1},\\
    \vmu_{t}  &= \vmu_{t\vert t-1}+ \bar{\vK}_{t}\left(\vy_{t}-\vH_{t}\vmu_{t\vert t-1}\right),\\
    \vSigma_{t} &= \vSigma_{t\vert t-1}-\bar{\vK}_{t}\vH_{t}\vSigma_{t\vert t-1},
    \label{eq:wEnKF-cov-update-lin_gauss_limit}
\end{align}
which matches the WoLF update in \eqref{eq:WoLF-dim-cov-update}, \eqref{eq:WoLF-dim-mean-update}, and \eqref{eq:WoLF-dim-gain}.

For the case of scalar weights with $w_t^2 = W^2(\vy_t,\hat{\vy}_t) \in \{0,1\}$ as in \eqref{eq:thresholded-mahalanobis-weight}, the weighted EnKF can be implemented by unweighted sampling as in \eqref{eq:EnKF-sampling} followed by a weighted update that replaces \eqref{eq:EnKF-update} with
\begin{equation}\label{eq:weighted-enkf-1d}
    \hat{\vtheta}^{(i)}_{t} = \hat{\vtheta}^{(i)}_{t|t-1} + w_t^2 \bar{\vK}_t \left(\vy_t - \hat{\vy}_{t|t-1}^{(i)}\right).
\end{equation}
That is, the update is simply skipped when $w_t^2=0$, in agreement with Proposition \ref{prop:weighted-kf} and \cref{algo:wlf-step}.

For the case of vector weights $\vw_t$ as in \cref{sec:dimension-specific-weighting},
when $\vw_t\in\{0,1\}^d$ the weighted EnKF can be implemented by
a generalisation of \eqref{eq:weighted-enkf-1d} that ignores only those observation components $\vy_{t,j}$
for which $\vw_{t,j}=0$.
Specifically, we can use the vanilla EnKF sampling in \eqref{eq:EnKF-sampling} but sample only the positively weighted dimensions, i.e.\ $\hat{\vy}^{(i)}_{t|t-1,J_t}$ where $J_t=\{j:\vw_{t,j}>1\}$. 
Thus $\hat{\vy}^{(i)}_{t|t-1}$ has size $|J_t|$ and \eqref{eq:enkf-gain-matrix} yields $\bar{\vK}_t$ with size $\nparams\times|J_t|$, and the particles can be updated according to
\begin{equation}
    \label{eq:weighted-enkf-dimensional-update}
    \hat{\vtheta}^{(i)}_{t} = \hat{\vtheta}^{(i)}_{t|t-1} + \bar{\vK}_t \left(\vy_{t,J_t} - \hat{\vy}_{t|t-1,J_t}^{(i)}\right).
\end{equation}
This agrees with the vanilla EnKF update except that zero-weighted observation dimensions are ignored. It can be shown to converge to the dimensionally weighted WoLF described in \cref{sec:dimension-specific-weighting} as $N\to\infty$.%
\footnote{The general case of $\vw_t\in[0,1]^d$ can be implemented by combining this scheme with weighted sampling of positively weighted dimensions, i.e. using \eqref{eq:wEnKF-sampling-Rtilde} to sample $\hat{\vy}^{(i)}_{t|t-1,J_t}$.}

For the experiments reported here (\cref{experiment:lorenz96}), we take the shortcut of sampling predictions for all dimensions and generalising \eqref{eq:weighted-enkf-1d} to vector weights:
\begin{equation}
\hat{\vtheta}_{t}^{\left(i\right)}=
\hat{\vtheta}_{t\vert t-1}^{\left(i\right)} +
\bar{\vK}_{t}\Diag(\vw_t) \left(\vy_{t}-\hat{\vy}_{t\vert t-1}^{(i)}\right).
\label{eq:AP-EnKF-update}
\end{equation}
This approximates the method of sampling only $\hat{\vy}^{(i)}_{t|t-1,J_t}$
and updating with \eqref{eq:weighted-enkf-dimensional-update}.
This is more efficient in our Jax \citep{jax2018github} implementation
because all arrays are of constant size.
We generalise \eqref{eq:thresholded-mahalanobis-weight} to define the weight vector as
\begin{equation}
    \vw_{t,j} =
    \begin{cases}
        1 & \frac{1}{N}\sum_{i=1}^N\left(\vy_{t, j}-\hat{\vy}_{t\vert t-1, j}^{(i)}\right)^2 \leq c,\\
        0 & \text{otherwise},
    \end{cases}
\end{equation}
We call this the {\bf average-particle EnKF (AP-EnKF)}.

Alternatively, we propose the update equation
\begin{equation}
\hat{\vtheta}_{t}^{\left(i\right)}=
\hat{\vtheta}_{t\vert t-1}^{\left(i\right)} +
\bar{\vK}_{t}\,\Diag\left(\vw_t^{(i)}\right) \left(\vy_{t}-\hat{\vy}_{t\vert t-1}^{(i)}\right),
\end{equation}
with
\begin{equation}
    \vw_{t,j}^{(i)} =
    \begin{cases}
        1 & \left(\vy_{t, j}-\hat{\vy}_{t\vert t-1, j}^{(i)}\right)^2 \leq c,\\
        0 & \text{otherwise},
    \end{cases}
\end{equation}
which we call the {\bf per-particle EnKF (PP-EnKF)}.

\paragraph{The Huberised EnKF:}
The update equations for the PP-EnKF and AP-EnKF are related to the \textit{Huberised} EnKF (H-EnKF)
algorithm of \citet{roh2013}
that update particles according to
\begin{equation}
\hat{\vtheta}_{t}^{\left(i\right)}=
\hat{\vtheta}_{t\vert t-1}^{\left(i\right)} +
\bar{\vK}_{t}\,\rho\left(\vy_{t}-\hat{\vy}_{t\vert t-1}^{(i)}\right),
\end{equation}
with 
\begin{equation}
    \rho(z) = 
\begin{cases}
    c & \text{if } z > c, \\
    -c & \text{if } z < -c, \\
    z & \text{otherwise},
\end{cases}
\end{equation}
$c > 0$ a given threshold, and the function $\rho$ is applied element-wise.

\section{Additional numerical experriments}
\label{sec:additional-experiments}

This section provides additional numerical experiments.
In particular, in Appendix \ref{subsec:pif-plot-explain}, we discuss Figure \ref{fig:pif-2d-tracking} in more detail;
in Appendix \ref{sec:ablation},
we conduct robustness checks for the numerical experiments presented in Section \ref{sec:experiments}; and
in Appendix \ref{experiment:training-neural-network}, we provide an additional experiment
for online learning in a non-stationary environment with outlier measurements.

\subsection{Description of the PIF}
\label{subsec:pif-plot-explain}

In this section, we discuss 
the 2d tracking problem of Section \ref{experiment:2d-tracking} in more detail.
We generate $t=20$ steps from \eqref{eq:noisy-2d-ssm} and the last measurement $\vy_t$
is replaced with $\vy_t^{c} = \vy_t + \vepsilon$,
where $\vepsilon \in [-5, 5]\times[-5, 5]$ is the outlier noise.
Figure \ref{fig:2d-tracking-grid-weighting-term} (left) shows the weighting term for the WoLF methods
as a function of the outlier noise $\vepsilon$.
Alternatively, Figure \ref{fig:2d-tracking-grid-weighting-term} (right) shows the Mahalanobis distance
between the prior predictive $\hat{\vy}_t$ and the outlier measurement $\vy_t^c$.
\begin{figure}[htb]
    \centering
    \includegraphics[width=0.45\linewidth]{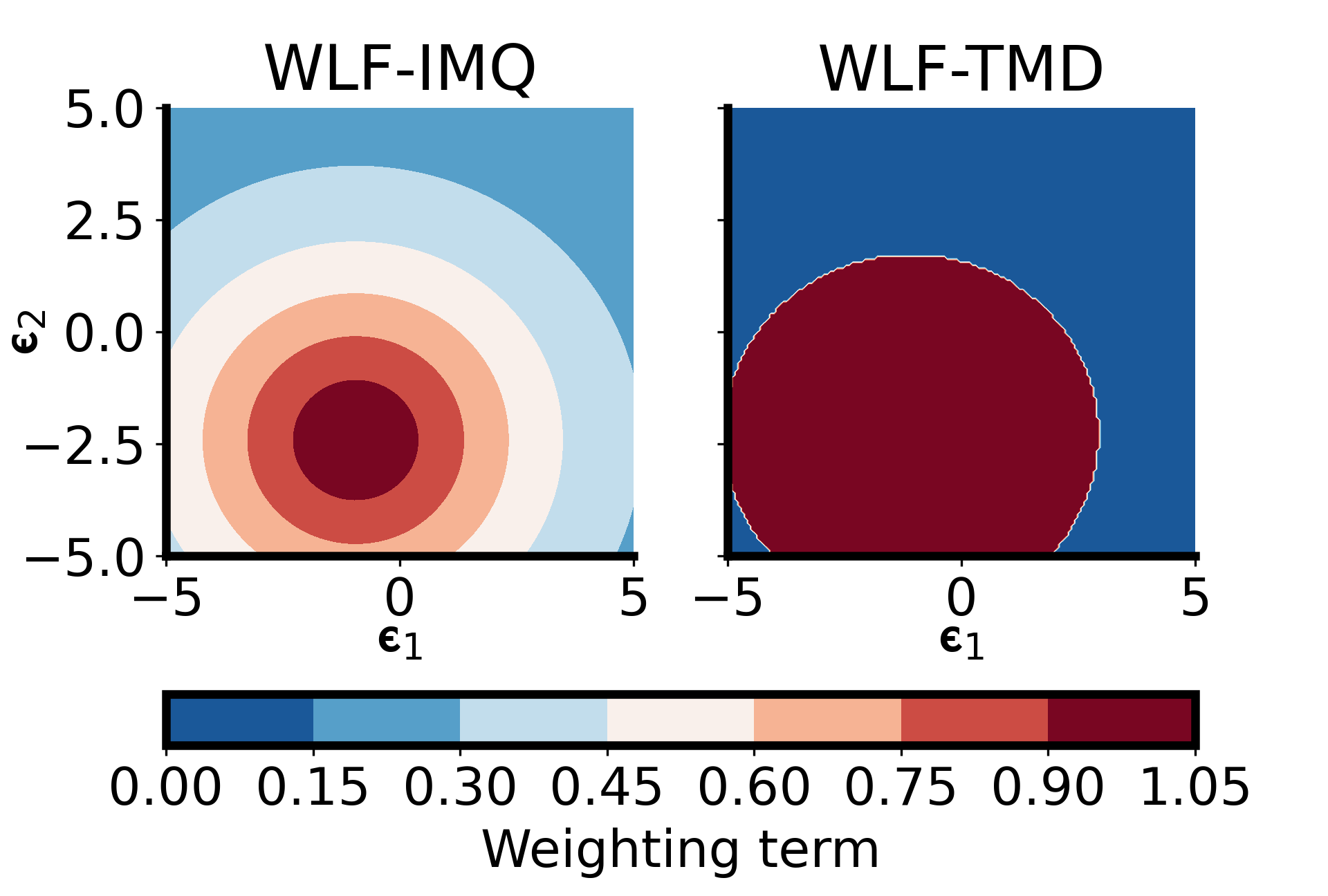}
    \removewhitespace[-4mm]
    \caption{
        Weighting term $W_t(\vy_{1:t})$ given an the outlier measurement $\vy_t^c$. The $x$-axis and $y$-axis describe the errors $\vepsilon \in [-5, 5]\times[-5, 5]$. 
    }
    \label{fig:2d-tracking-grid-weighting-term}
\end{figure}
We observe that the midpoint of the contours for all methods is not centred at $\vepsilon = (0, 0)$.
This is because the weighting term is a function of the prior predictive and the measurement
at time $t$. Hence, for the IMQ, the weighting is $1$ only if the prior predictive
equals the measurement, i.e., $\hat{\vy}_t = \vH_t\vmu_{t|t-1}$.
This result explains the distored PIF observed in Figure \ref{fig:pif-2d-tracking}.

Figure \ref{fig:2d-tracking-grid-pif-md-extended-domain} shows the PIF for the WoLF-IMQ and the WoLF-TMD
first shown in Figure \ref{fig:pif-2d-tracking}.
\begin{figure}[htb]
    \centering
    \includegraphics[width=0.5\linewidth]{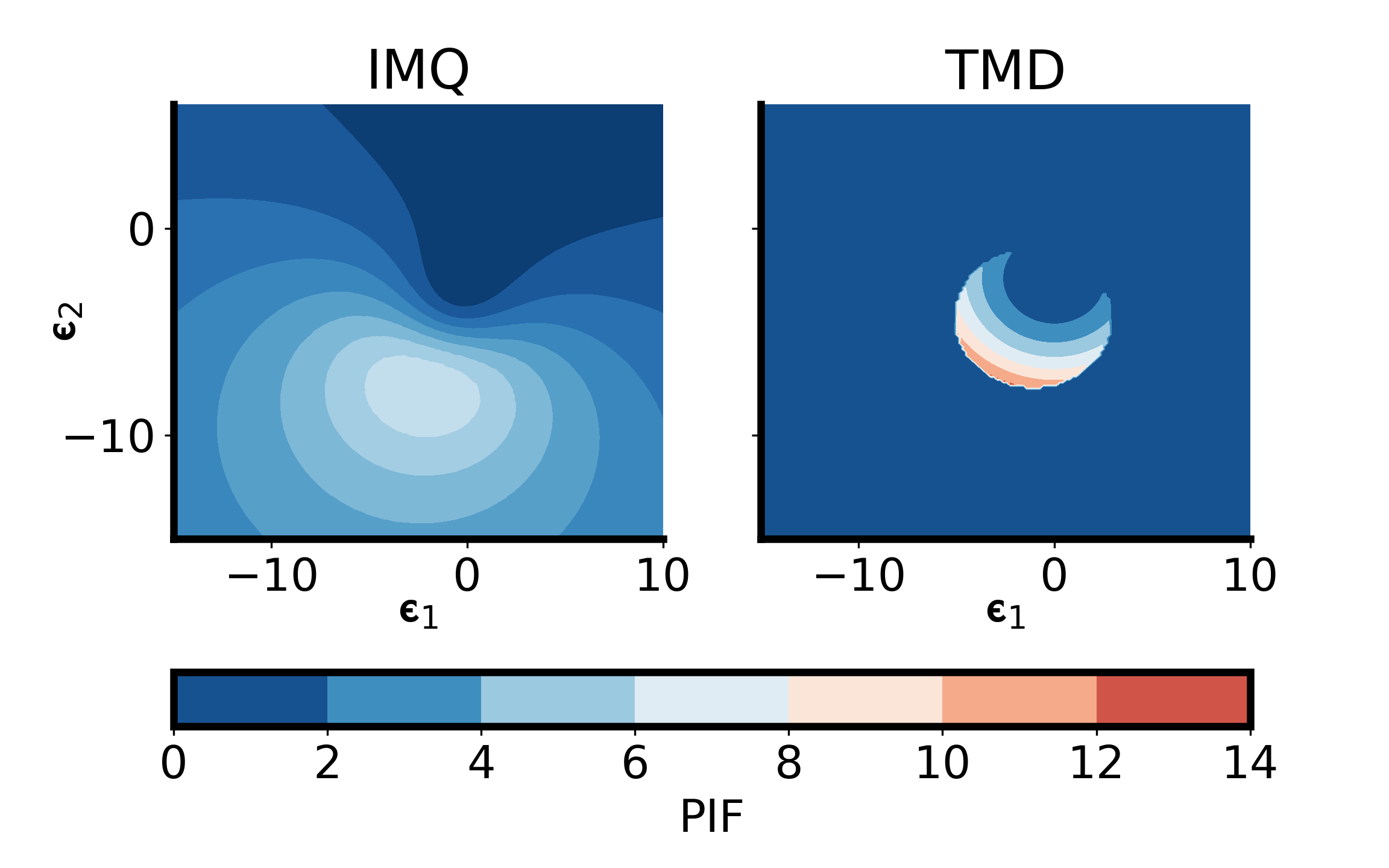}
    \vspace{-5mm}
    \caption{
        PIF of Figure \ref{fig:pif-2d-tracking} for extended domain of the errors $\vepsilon \in [-15, 10]\times[-15, 10]$. 
    }
    \label{fig:2d-tracking-grid-pif-md-extended-domain}
\end{figure}
We observe that the PIF for
the WoLF-TMD has a hard cutoff that strongly bounds the PIF values at the expense of higher PIF around a region near the 
cutoff boundary.
In contrast, the WoLF-IMQ bounds the PIF values more softly and has lower maximum PIF than the WoLF-TMD.

\subsection{Robustness}
\label{sec:ablation}

\subsubsection{2d tracking}
\label{subsec:2d-tracking-further-results}
In this section, we present further results from experiment \ref{experiment:2d-tracking}.
Figure \ref{fig:2d-tracking-all} shows the distribution over errors over all 4 state components for both outlier variants of the 2d tracking problem.
For the mixture variant, we see elongated tails in the error distribution
for the  \mWlfMd algorithm.
\begin{figure}[htb]
    \centering
    \includegraphics[width=0.45\linewidth]{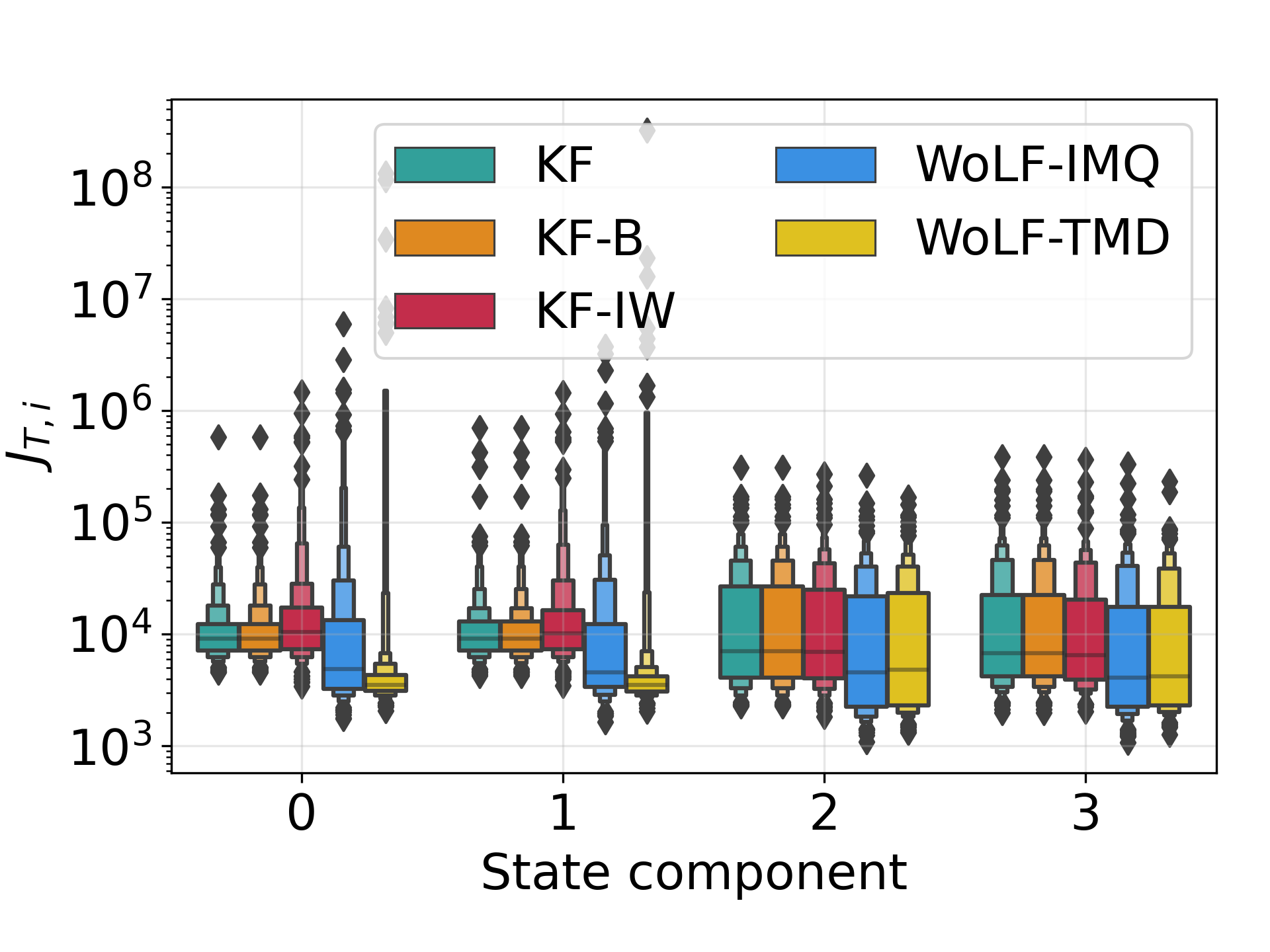}
    \includegraphics[width=0.45\linewidth]{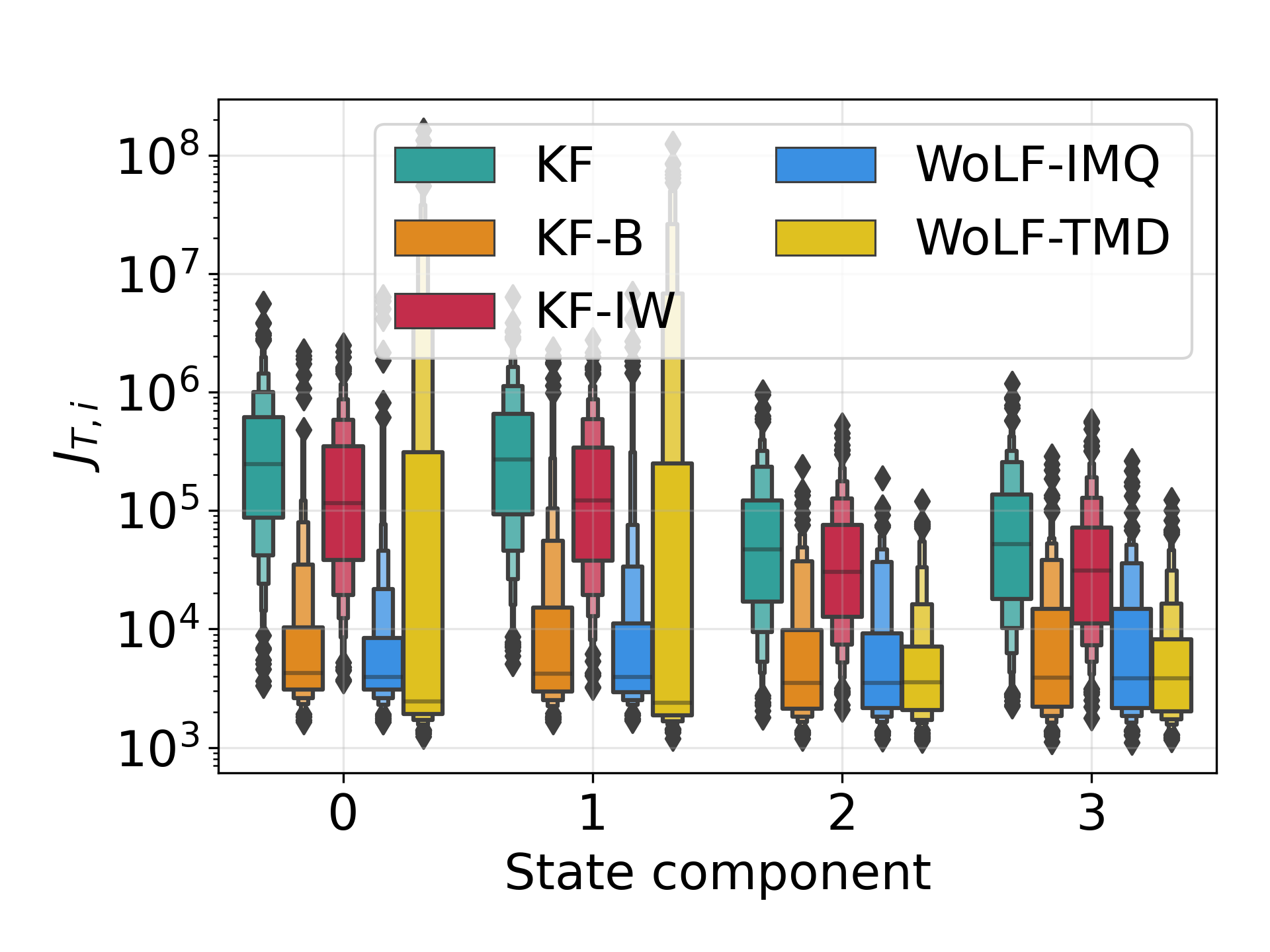}
    \removewhitespace[-6mm]
    \caption{
        Distribution of $J_{T,i}$ for all state components.
        The left panel is for the student variant \eqref{eq:noisy-2d-ssm-outlier-covariance} and
        the right panel is for the mixture variant \eqref{eq:noisy-2d-ssm-outlier-mean}.
    }
    \label{fig:2d-tracking-all}
\end{figure}
To see why, consider the trace of weights over time in
Figure \ref{fig:wlf-weight-both}.
We observe that both \mWlfImq and \mWlfMd set the weighting term close to (or equal) to zero in outlier events.
However, the \mWlfMd is more prone to have false positives 
(at a rate of about 7\% in this example),
in which it fails to update the posterior state.
This explains the elongated tails.
\begin{figure}[tbh]
    \centering
    \includegraphics[width=0.75\linewidth]{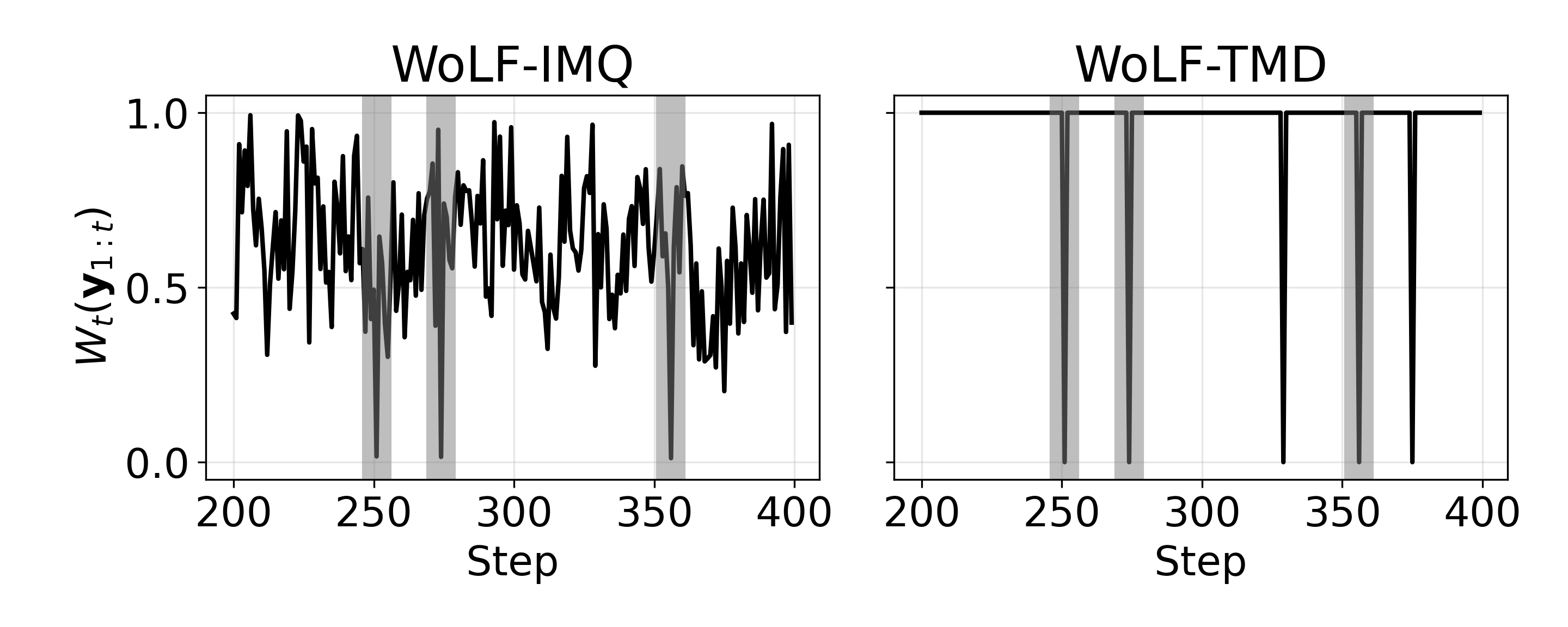}
    \removewhitespace[-6mm]
    \caption{Trace of the weighting term for WoLF-IMQ and WoLF-MD.}
    \label{fig:wlf-weight-both}
\end{figure}

\paragraph{Right panel of Figure \ref{fig:linear-ssm-sample-runs}}
\begin{figure}[htb]
    \centering
    \includegraphics[width=0.45\linewidth]{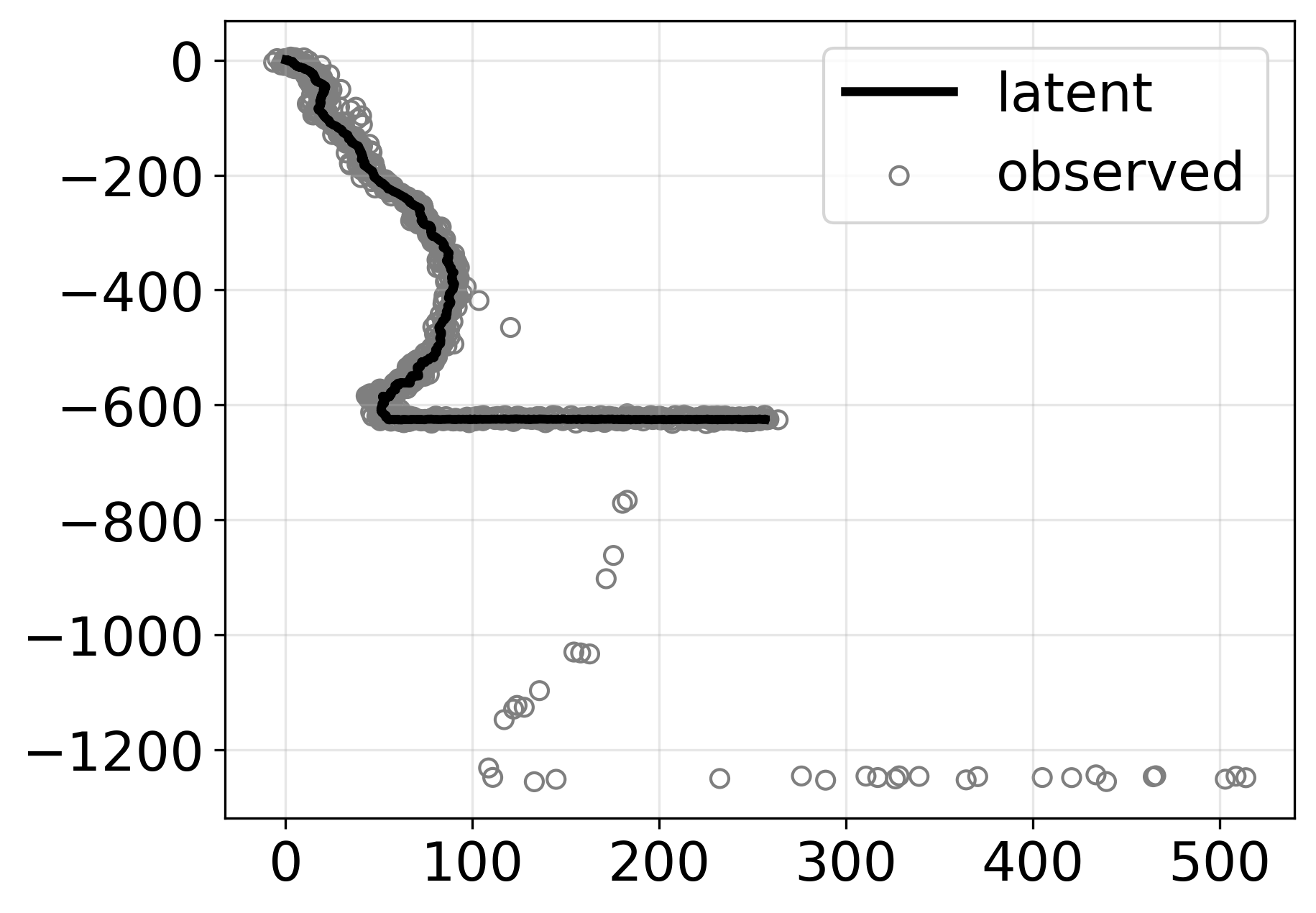}
    \caption{Sample of the top-left panel in Figure \ref{fig:linear-ssm-sample-runs}}
    \label{fig:mixture-model-uncropped}
\end{figure}

\paragraph{2d tracking: hyperparameter choice}
Here we show the performance of each method for the experiment in Section \ref{experiment:2d-tracking}
as we vary the hyperparameters. We do this for both, the Mixture case and the Student-t case.

Figure \ref{fig:hyperparam-stress}
shows the RMSE for estimating the first state component as we vary the $c$ hyperparameter for the IMQ and the TDM weighting functions;
for \mAgamenoni, we use two inner iterations and we vary the $h$ hyperparameter;
for \mWang, we fix $\beta = 1.0$, use four inner iterations and vary the value of $\alpha$.
The definition for the RMSE $J_{T,0}$ is as in Section \ref{experiment:2d-tracking}.

\begin{figure}[htb]
    \centering
    \includegraphics[width=0.45\linewidth]{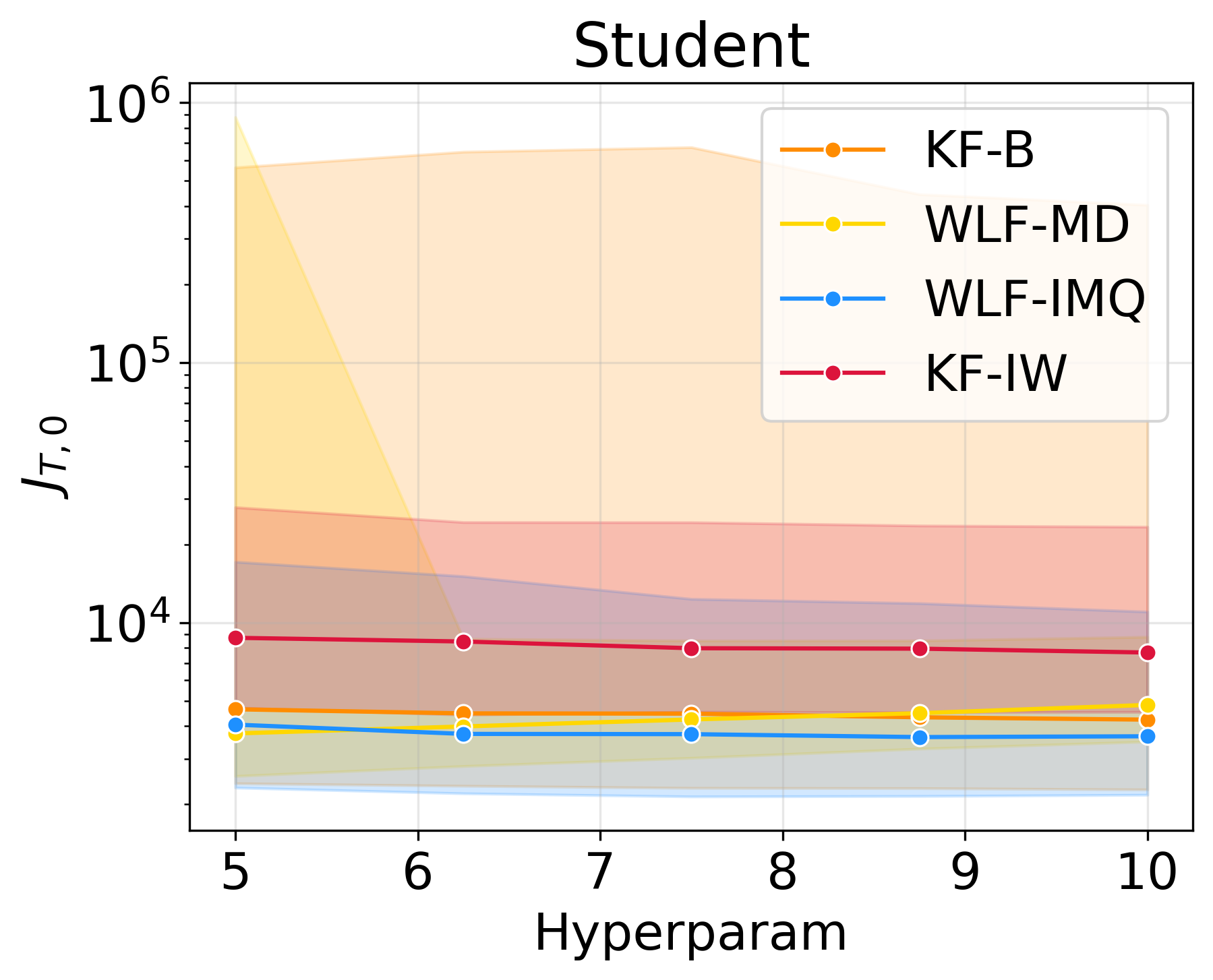}
    \includegraphics[width=0.45\linewidth]{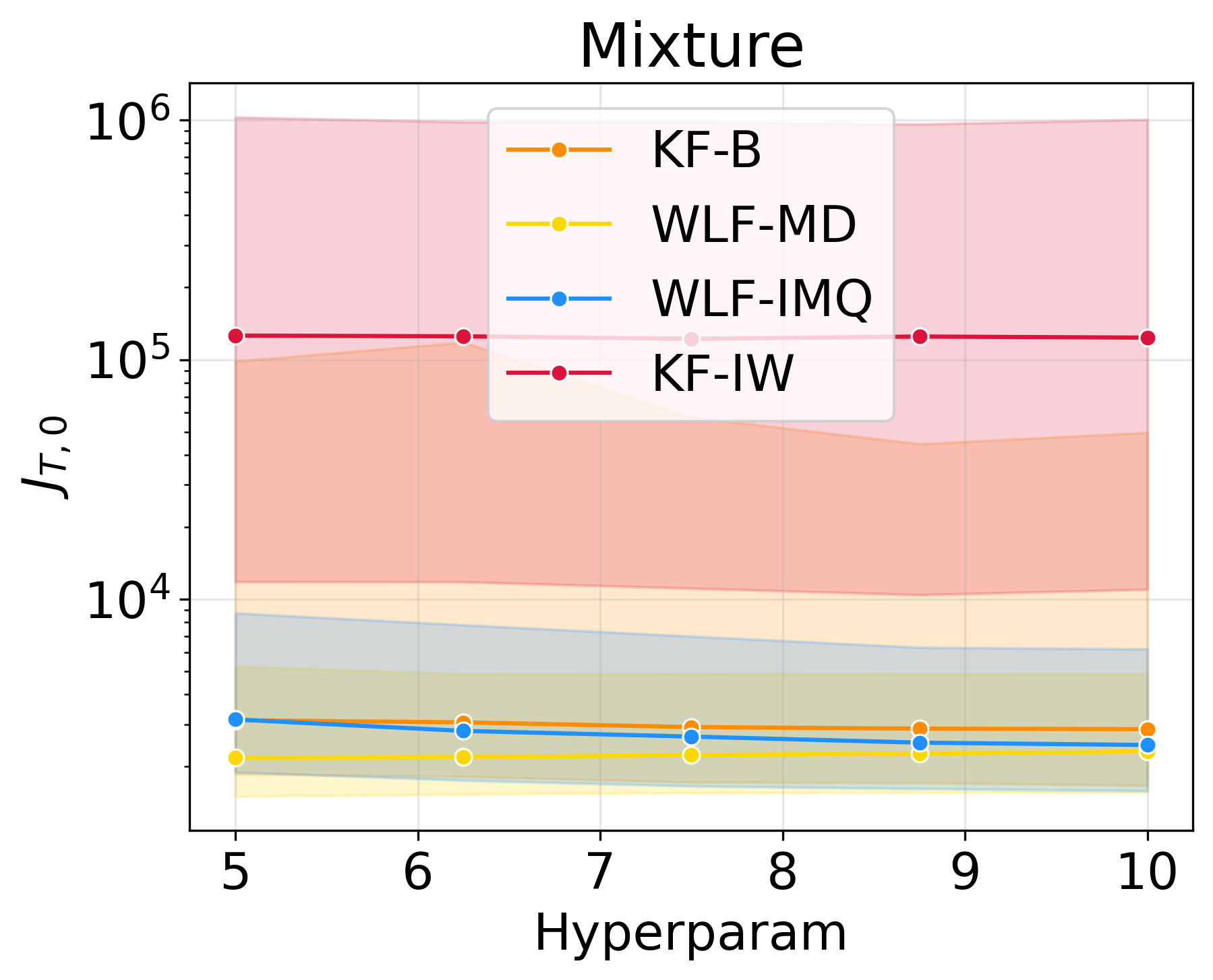}
    \caption{$J_{t,0}$ over 100 runs for the methods as a function of their hyperparameter.}
    \label{fig:hyperparam-stress}
\end{figure}

\paragraph{Empirical comparison to \cite{boustati2020generalised}}
In this section, we compare the particle-filter-based method of \cite{boustati2020generalised},
which we denote the \texttt{RBPF}.
Figure \ref{fig:2d-tracking-with-rbpf} extends Figure \ref{fig:2d-ssm-sample} to include the \texttt{RBPF}.
\begin{figure}
    \centering
    \includegraphics[width=0.45\linewidth]{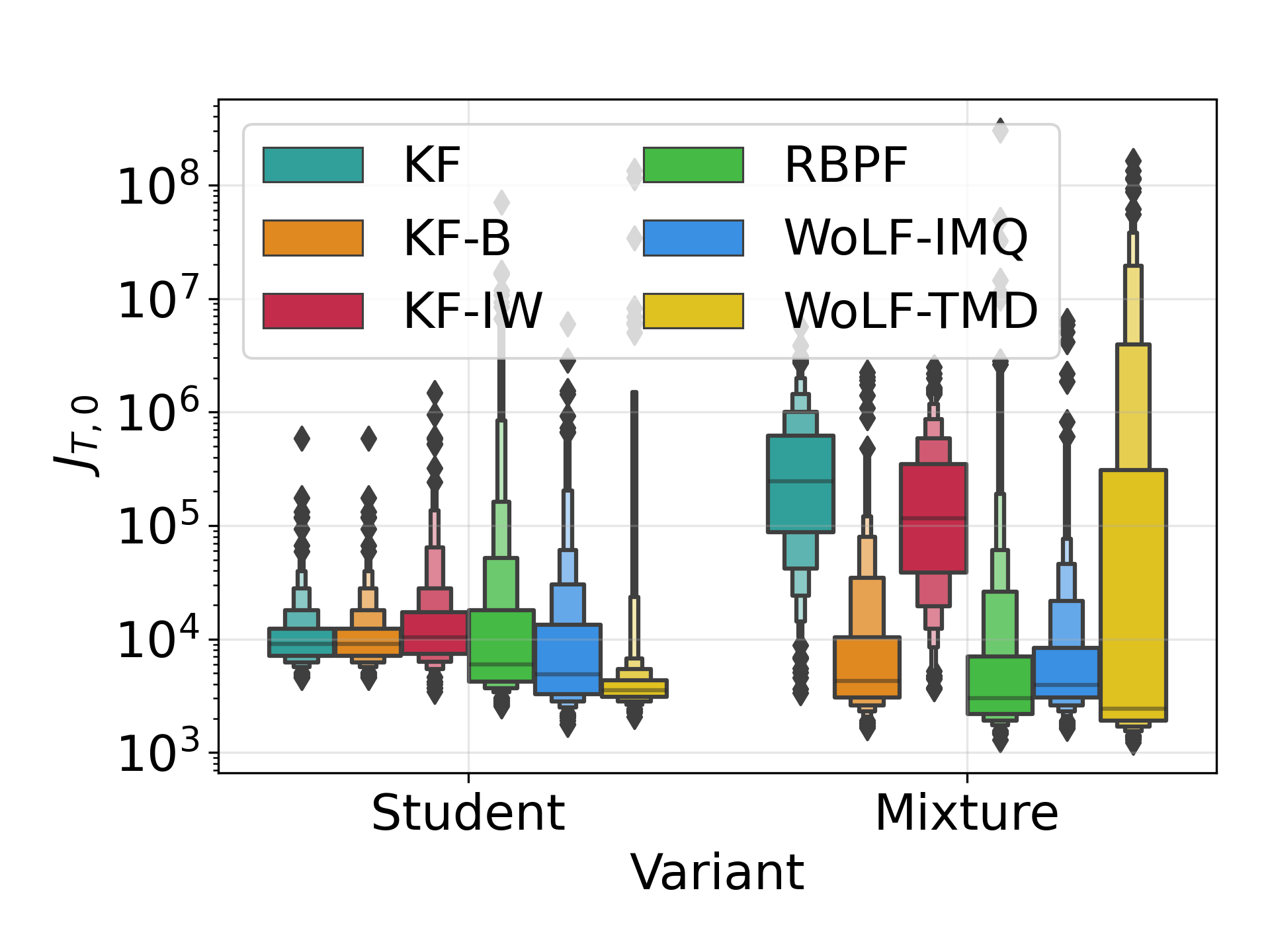}
    \includegraphics[width=0.45\linewidth]{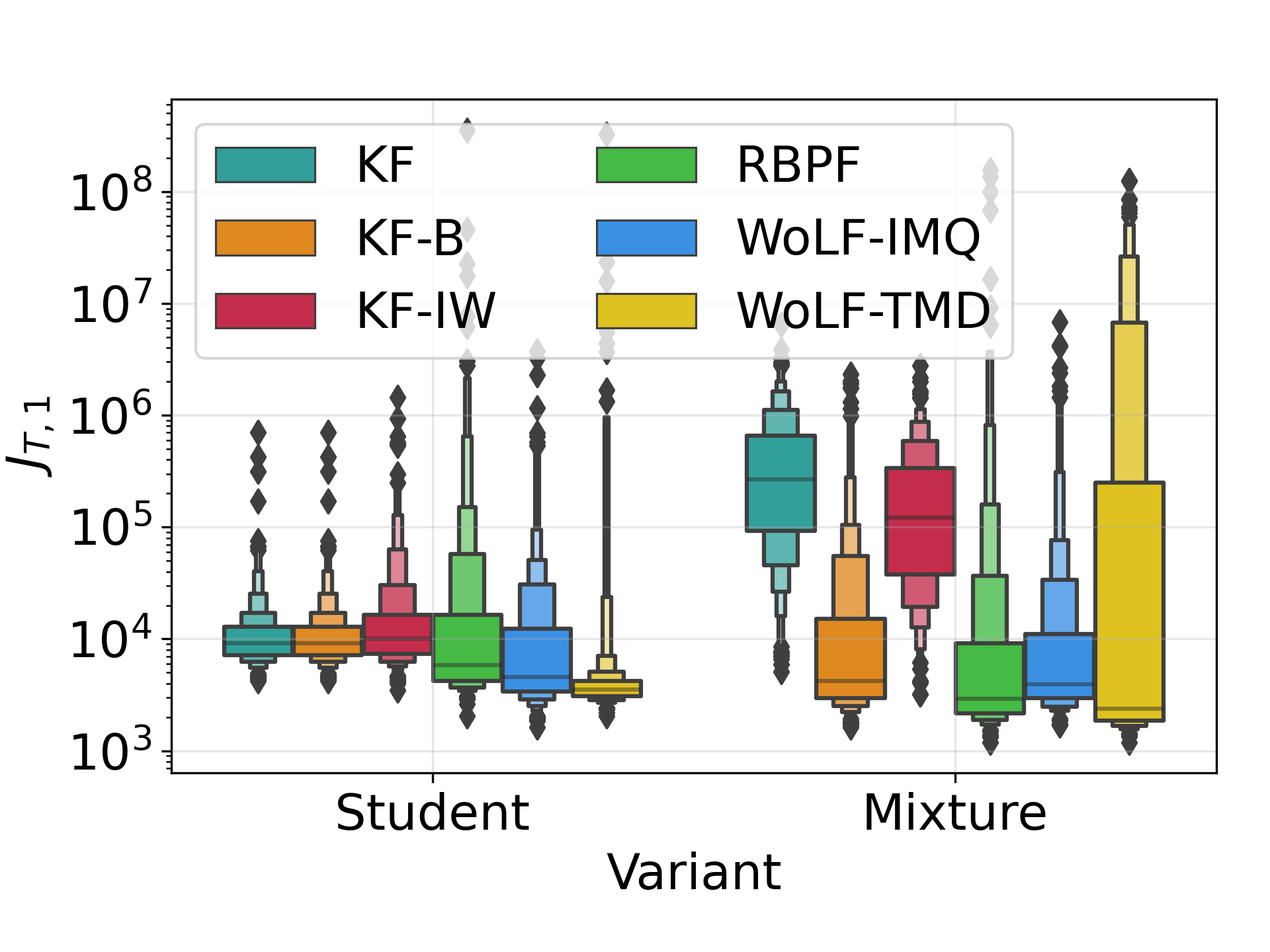}
    \caption{
        Distribution  (across 500 2d tracking trials)
        of RMSE for first component of the state vector, $J_{T,0}$.
        Left panel: Student observation model.
        Right panel: Mixture observation model.
        We extend Figure \ref{fig:2d-ssm-sample} to include the \texttt{RBPF}.
    }
    \label{fig:2d-tracking-with-rbpf}
\end{figure}
The \texttt{RBPF} performs comparably to our proposed method; however, it has much higher computational cost and does not have a closed-form solution.

\subsubsection{UCI datasets}
\label{subsec:uci-further-results}
In this section, we give more details on the MLP regression experiments in Section \ref{subsec:uci-corrupted}.
The size of the datasets and models which we use are summarised in  Table \ref{tab:uci-description}.

\begin{table}[htb]
\centering
\begin{tabular}{lrrr}
    \toprule
     & \#Examples $T$ & \#Features $\nout$ & \#Parameters $\nparams$ \\
    Dataset &  &  &  \\
    \midrule
    Boston & $ 506 $ & $ 14 $ & $ 321 $ \\
    Concrete & $ 1,030 $ & $ 9 $ & $ 221 $ \\
    Energy & $ 768 $ & $ 9 $ & $ 221 $ \\
    Kin8nm & $ 8,192 $ & $ 9 $ & $ 221 $ \\
    Naval & $ 11,934 $ & $ 18 $ & $ 401 $ \\
    Power & $ 9,568 $ & $ 5 $ & $ 141 $ \\
    Protein & $ 45,730 $ & $ 10 $ & $ 241 $ \\
    Wine & $ 1,599 $ & $ 12 $ & $ 281 $ \\
    Yacht & $ 308 $ & $ 7 $ & $ 181 $ \\
\bottomrule
\end{tabular}
\caption{Description of UCI datasets.
Number of parameters refers to the size of the one-layer
MLP.
}
\label{tab:uci-description}
\end{table}

\paragraph{Speed vs accuracy}
Figure \ref{fig:naval-time-error} (left) shows the time-step and RMedSe for the Kin8nm dataset over multiple trials.
Our  methods, the \mWlfImq and the \mWlfMd,
strike the best balance between RMedSE and running time among the competing methods.
The \mAgamenoniExtended and the \mWangExtended have comparable error rates to the WoLF methods,
but their running time is 6x and 12x slower than the \mWlfImq, respectively.
The \mkfExtended has comparable running time, but 3x higher average RMedSE than
the \mWlfImq.
Finally, the \ogd is 40\% faster, but has 2x higher average RMedSE than the \mWlfImq.

\paragraph{Sensitivity to outlier rate}
We evaluate the sensitivity of the methods to the choice of corrosion rate $p_\epsilon$ for the Kin8nm dataset.
Figure \ref{fig:naval-time-error} (right) shows the RMedSE after 100 trials of each method
as a function of $p_\epsilon$.
We set the hyperparameters using BO with $p_\epsilon = 0.1$
and evaluate the RMedSE with $p_\epsilon \in \{0.00, 0.05, \ldots, 0.45\}$.
We observe that all robust methods --- the \mWlfImq, the \mWlfMd, the \mWangExtended, and the \mAgamenoniExtended --- 
have similar RMEdSE for any choice of $p_\epsilon$ and have similar RMedSE rate of increase.
Conversely, the \mkfExtended and the \ogd are much less stable and their RMedSE error rate increases at a faster pace.

\begin{figure}[htb]
    \centering
    \includegraphics[width=0.45\linewidth]{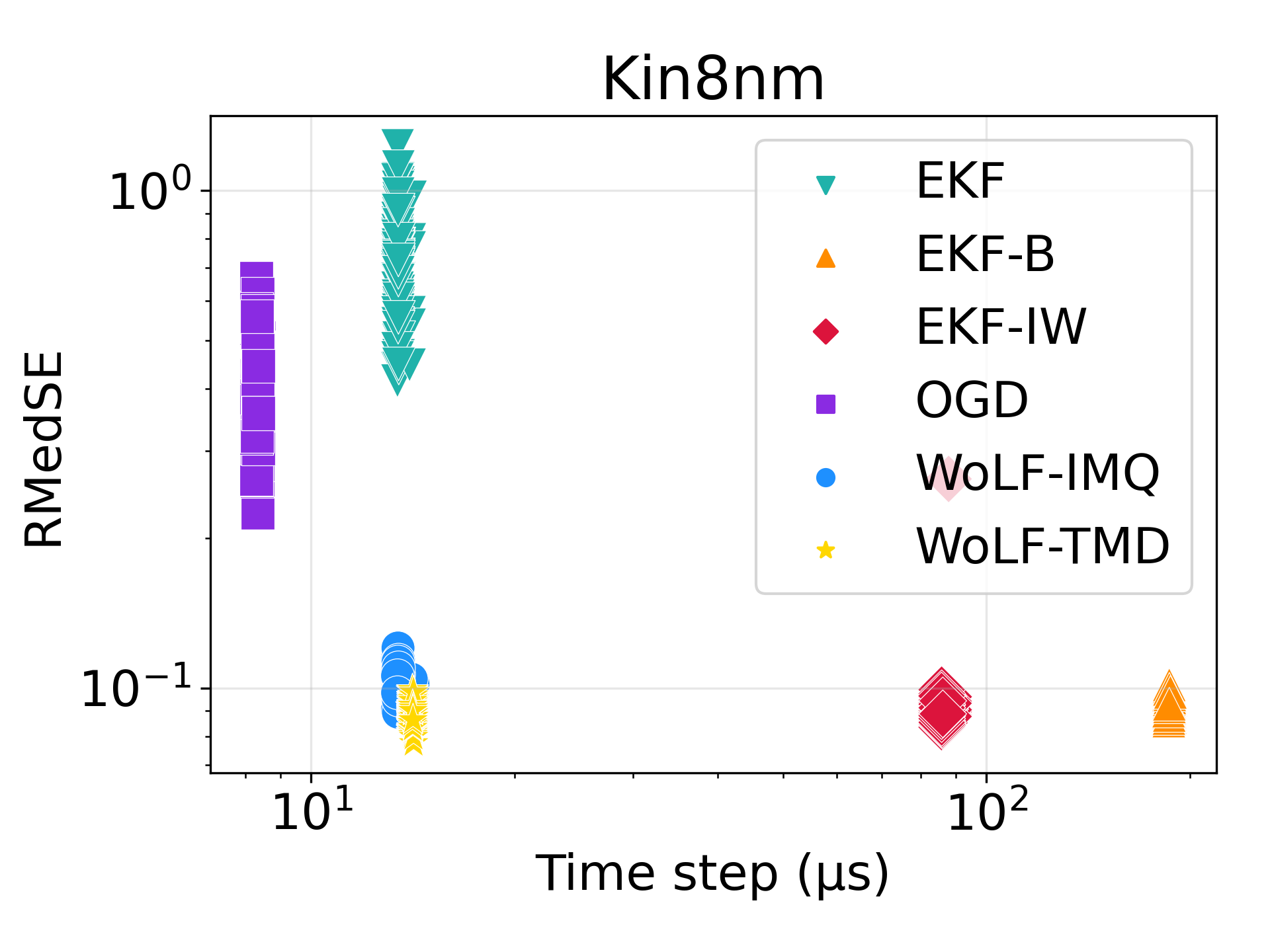}
    \includegraphics[width=0.45\linewidth]{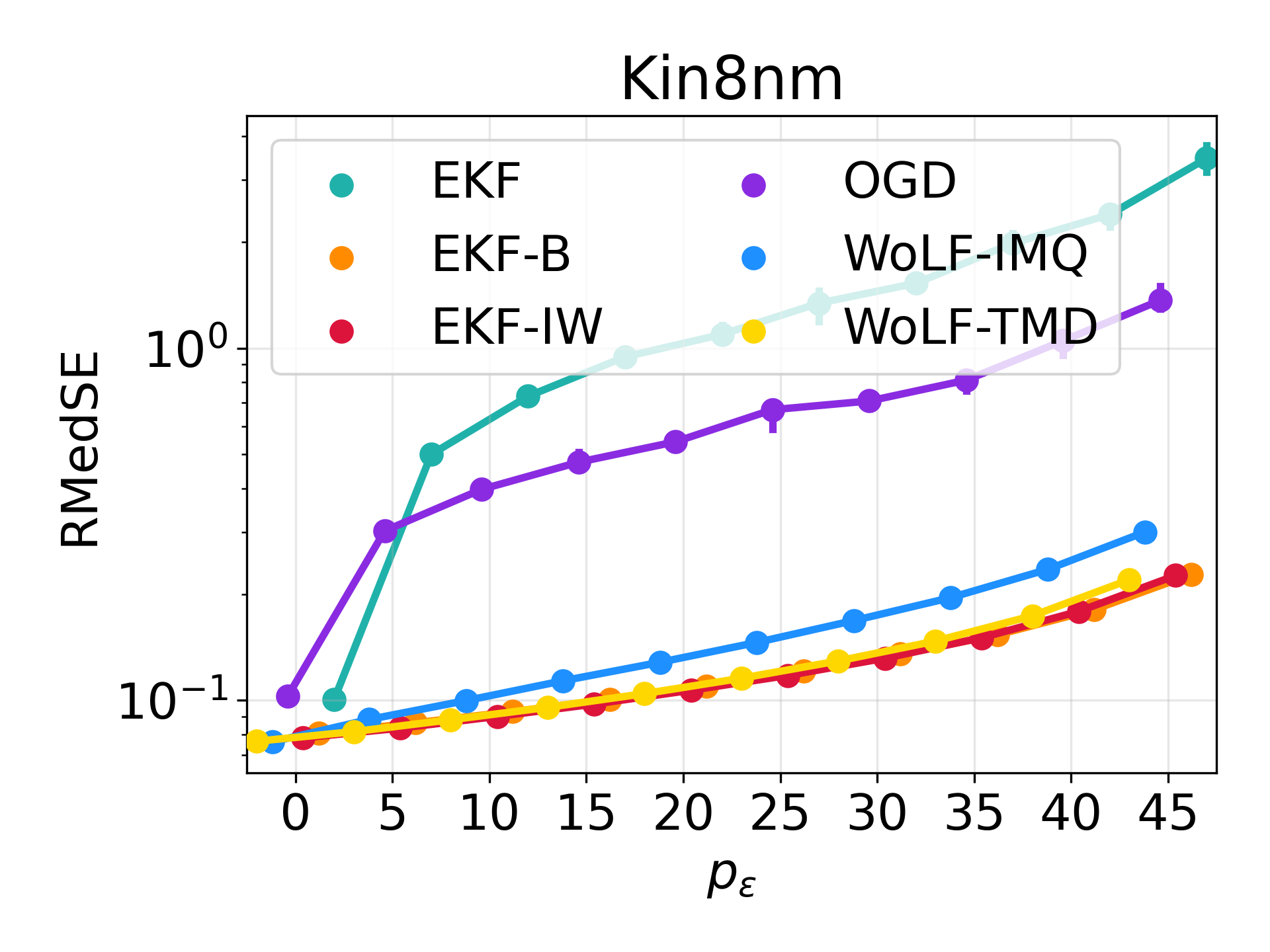}
    \removewhitespace[-6mm]
    \caption{
    \textbf{Left panel} shows the
    RMedSE (root median squared error) vs running time (per observation) to fit a neural network
    to the Kin8nm  UCI regression dataset.
    Each point corresponds to a different trial.
    \textbf{Right panel} shows  the 
     RMedSE (root median squared error) vs 
    percentage probability of outlier, $p_{\epsilon}$
    when fitting a neural network
    to the Kin8nm  UCI regression dataset.
    (We show the bootstrp average over 100 trials and 95\% confidence interval)
    Crucially, the same hyper-parameters are used
    for all experiments (and are estimated under 
    $p_\epsilon = 0.1$).
    }
    \label{fig:naval-time-error}
\end{figure}

\subsubsection{Lorenz96 model}
\label{subsec:lorenz96-further-results}

In this subsection, we evaluate the EnKF methods
in the Lorenz96 model presented in subsection \ref{experiment:lorenz96} 
when we only have $N=20$ particles but $d=100$ states.
In this scenario, the EnKF is usually modified to incorporate
the \textit{covariance inflation} proposed in \citet{anderson1999enkf-covariance-inflation}.
We present the results for a single run of the modified methods in the left panel of Figure
\ref{fig:lorenz96-methods-comparison-inflation-covariance}.
In the right panel of Figure \ref{fig:lorenz96-methods-comparison-inflation-covariance}, we perform a
sensitivity analysis to the choice of hyperparameter $c$.
\begin{figure}[htb]
    \centering
    \includegraphics[width=0.45\linewidth]{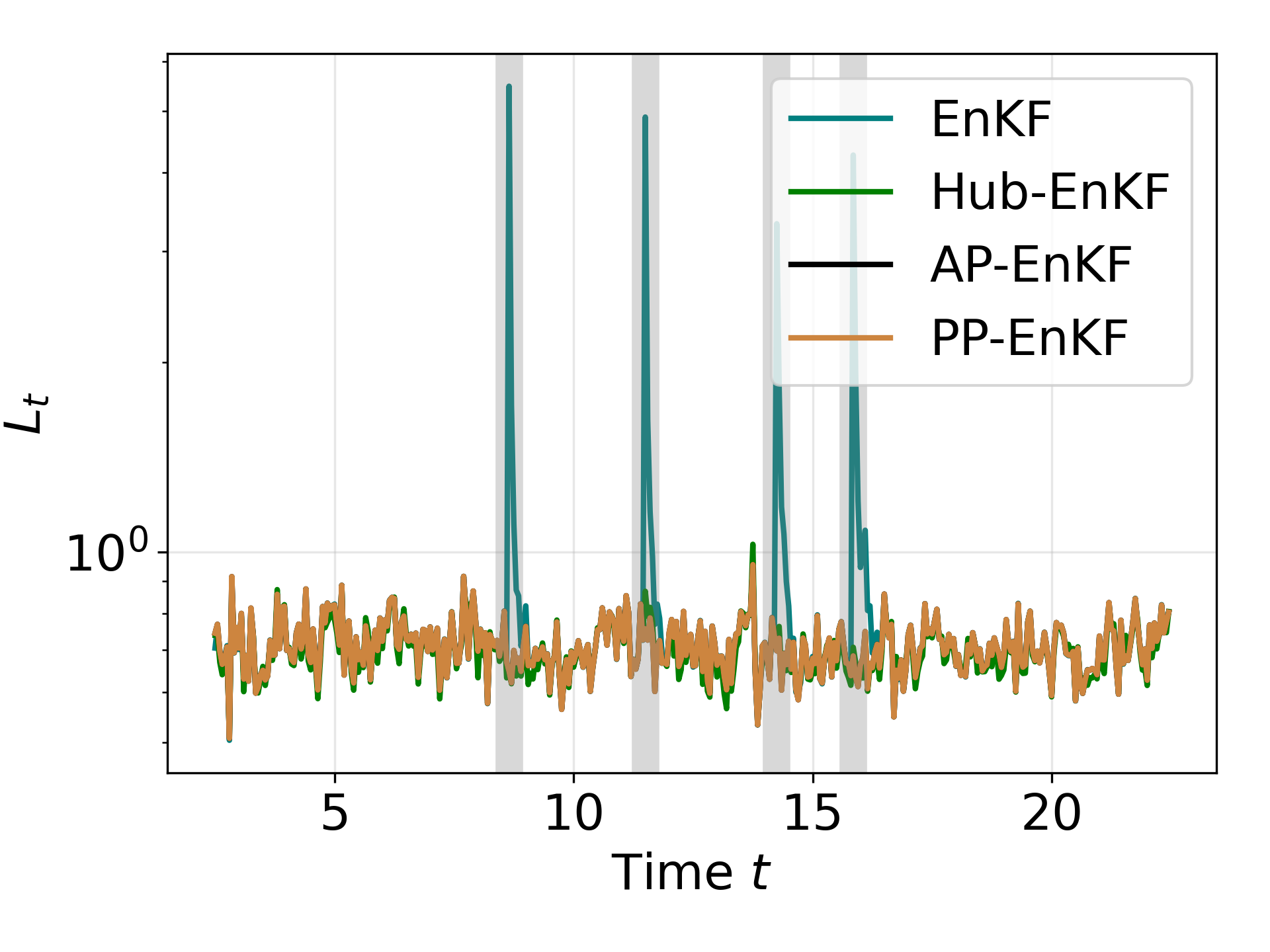}
    \includegraphics[width=0.45\linewidth]{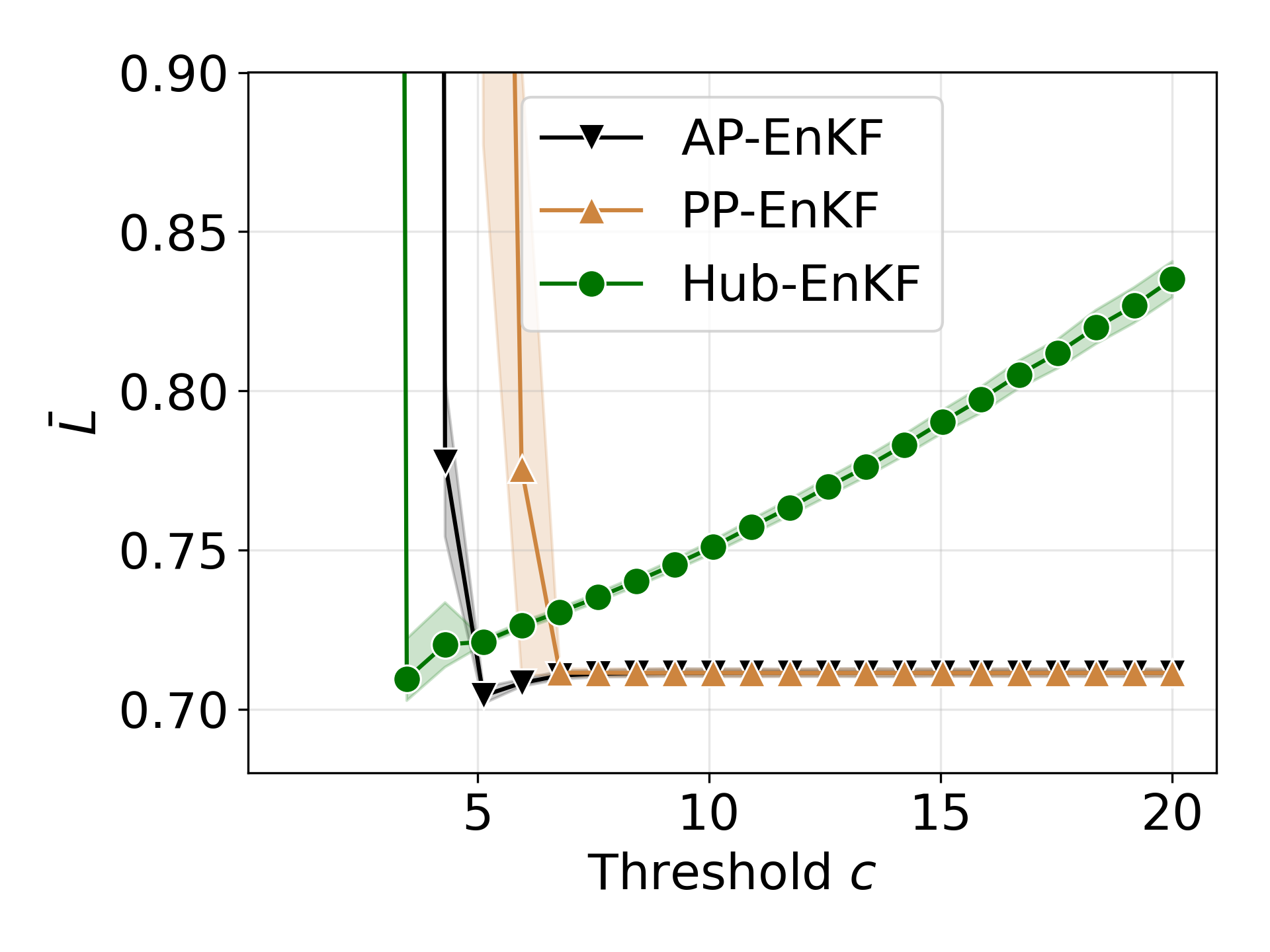}
    \removewhitespace[-6mm]
    \caption{
        The left panel shows a run of the \mEnkf, the \mWEnkfHard, the \mWEnkfSoft, and the \mHubEnkf;
        with covariance inflation;
        outlier events are shown in grey vertical bars.
        The right panel shows the bootstrap estimate of $L_T$ over 20 runs and 500 bootstrapped samples
        as a function of the $c$ hyperparameter.
    }
    \label{fig:lorenz96-methods-comparison-inflation-covariance}
\end{figure}
Similar to the results presented in the main body of text, we observe that, in the best scenario,
the \mHubEnkf and \mWEnkfHard behave similarly.
However, the \mHubEnkf is more sensitive to the choice of hyperparameters than the
\mWEnkfHard and \mWEnkfSoft.

\subsection{Robust EKF for online MLP regression (1d) --- suplementary experiment}
\label{experiment:training-neural-network}

In this section, we consider an online
nonlinear 1d regression, with the training
data coming
either from an i.i.d. source, or a correlated source.
The latter 
corresponds to a non-stationary problem
\citep[see e.g.][]{cartea2023toxic-flow, arroyo2024deep, Duran-Martin2022}.

We present a stream of observations
${\cal D}^\text{filter} = (y_1, x_1), \ldots (y_T, x_T)$ with
$y_t \in \real$ the measurements,
$x_t \in \real$ the exogenous variables, and
$T = 1500$.
The measurements and exogenous variables are sequentially sampled from the processes
\eat{
\begin{equation}\label{eq:noisy-sinusoidal}
    \vy_t = 
    \begin{cases}
        \vx_t / 5 - 10\cos(\vx_t\,\pi) + \vx_t ^ 3 + V_t & \text{w.p.}\, 1 - p_\epsilon,\\
        U_t & \text{w.p. }\, p_\epsilon,
    \end{cases}
\end{equation}
where $\vx_t\sim {\cal U}[-3, 3]$, $V_t\sim {\cal N}(0, 3)$, $U_t \sim {\cal U}[-40, 40]$, and $p_\epsilon = 0.05$.
}
\begin{equation}\label{eq:noisy-sinusoidal}
    y_t = 
    \begin{cases}
         \vtheta^*_{1} x_t  - \vtheta^*_{2}
         \cos(\vtheta^*_{3} x_t\,\pi) + \vtheta^*_{4} x_t^3 
        + V_t & \text{w.p.}\, 1 - p_\epsilon,\\
        U_t & \text{w.p. }\, p_\epsilon,
    \end{cases}
\end{equation}
where the parameters of the observation
model are $\vtheta^*=(0.2, -10, 1.0, 1.0)$,
the inputs are $x_t\sim {\cal U}[-3, 3]$, 
and the noise is
$V_t\sim {\cal N}(0, 3)$, 
$U_t \sim {\cal U}[-40, 40]$, and $p_\epsilon = 0.05$.

We consider four configurations of this experiment. In each experiment
the data is either sorted by $x_t$ value (i.e, 
the exogenous variable satisfies $x_i < x_j$ for all $i < j$,
representing a correlated source) or is
unsorted (representing
an i.i.d. source), and
the measurement function is either 
a clean version of the true data generating process
(i.e., 
\eqref{eq:noisy-sinusoidal} with $p_\epsilon = 0$ and unknown coefficients $\vtheta$), or 
a neural network with unknown parameters $\vtheta$.
Specifically, we use a  multi-layered perceptron (MLP) with two hidden layers and 10 units per layer:
\begin{equation}\label{eq:experiment-mlp}
    h(\vtheta_t, x_t)
    = \vw_t^{(3)}\phi\left(\vw_t^{(2)}\phi\left(\vw_t^{(1)}x_t + \vb_t^{(1)}\right) + \vb_t^{(2)}\right) + \vb_t^{(3)},
\end{equation}
with
activation function $\phi(u) = \max\{0, u\}$
applied elementwise.
Thus the state vector encodes the parameters:
\begin{equation*}
\begin{aligned}
    \vtheta_t = (
    \vw_t^{(1)} \in \real^{10\times 1}, \vw_t^{(2)} \in \real^{10\times 10}, \vw_t^{(3)} \in \real^{1 \times 10},
    \vb_t^{(1)}\in\real^{10}, \vb_t^{(2)} \in \real^{10},  \vb_t^{(3)} \in \real)
\end{aligned}
\end{equation*}
and has size so that $\vtheta\in\real^{141}$.
Note that in this experiment $h_t(\vtheta) = h(\vtheta, x_t)$.
We set $Q_t = 10^{-4}{\bf I}$,
which allows the parameters to slowly drift over time and provides some reguralisation.

For each method, we evaluate the
$\text{RMedSE} = \sqrt{\text{median}\{(y_t - h_t(\vmu_{t | t- 1})) ^ 2\}_{t=1}^T}$.
The \mAgamenoniExtended and the \mWangExtended methods
are taken with two inner iterations,
which implies that their computational complexity is twice that of the WoLF methods.

\paragraph{MLP measurement model}

Figure \ref{fig:online-mlp-example-run-sorted} shows results
when the data are presented in sorted order of $x_t$.
We show the  performance on 100 trials.
The left panel
shows the mean prior-predictive $h(\vmu_{t | t-1}, x_t)$ of 
each method, and the underlying true state process,
for a single trial.
The right panel shows the RMedSE after multiple trials.
\begin{figure}[htb]
    \centering
    \includegraphics[width=0.45\linewidth]{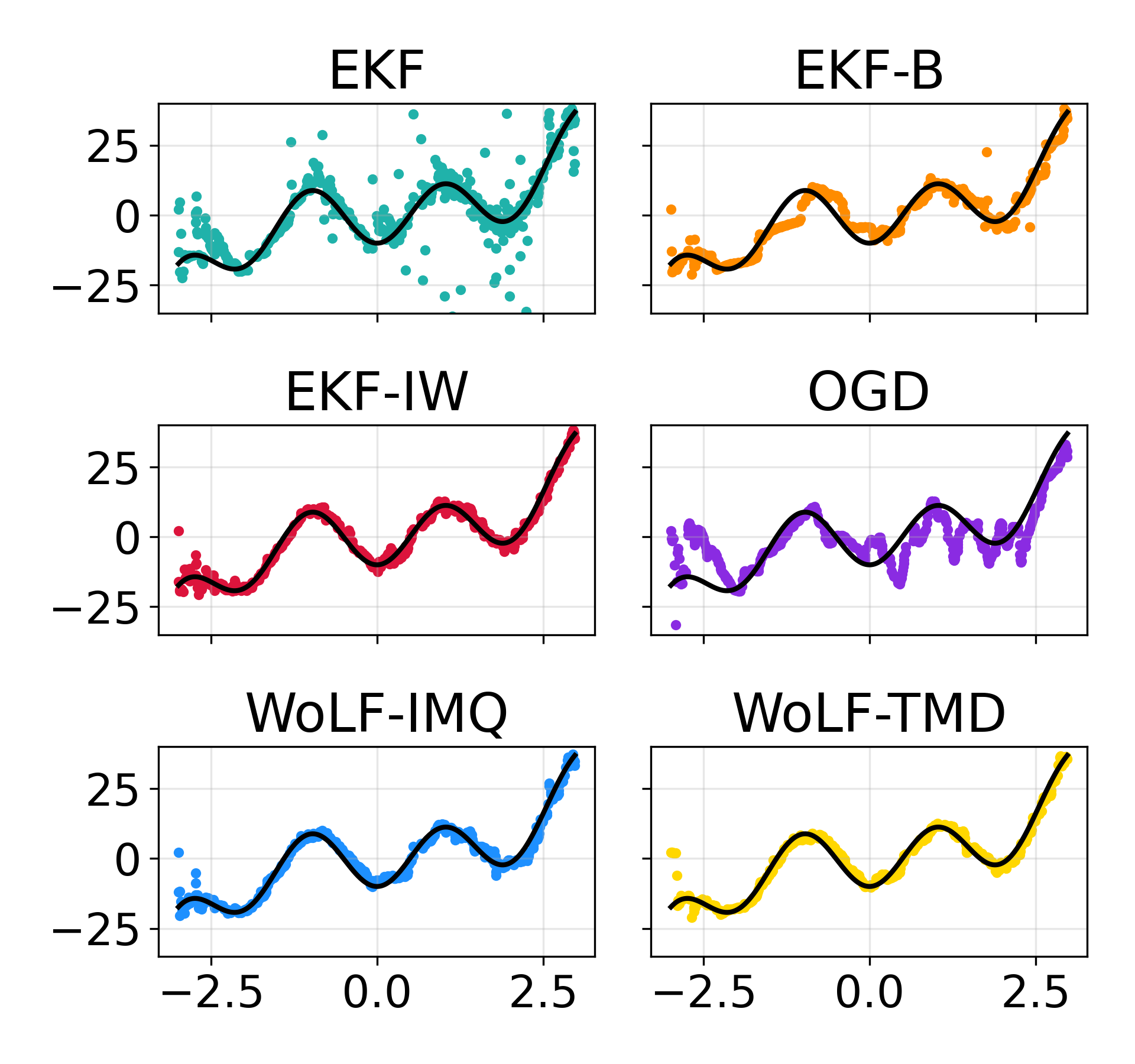}
    \includegraphics[width=0.45\linewidth]{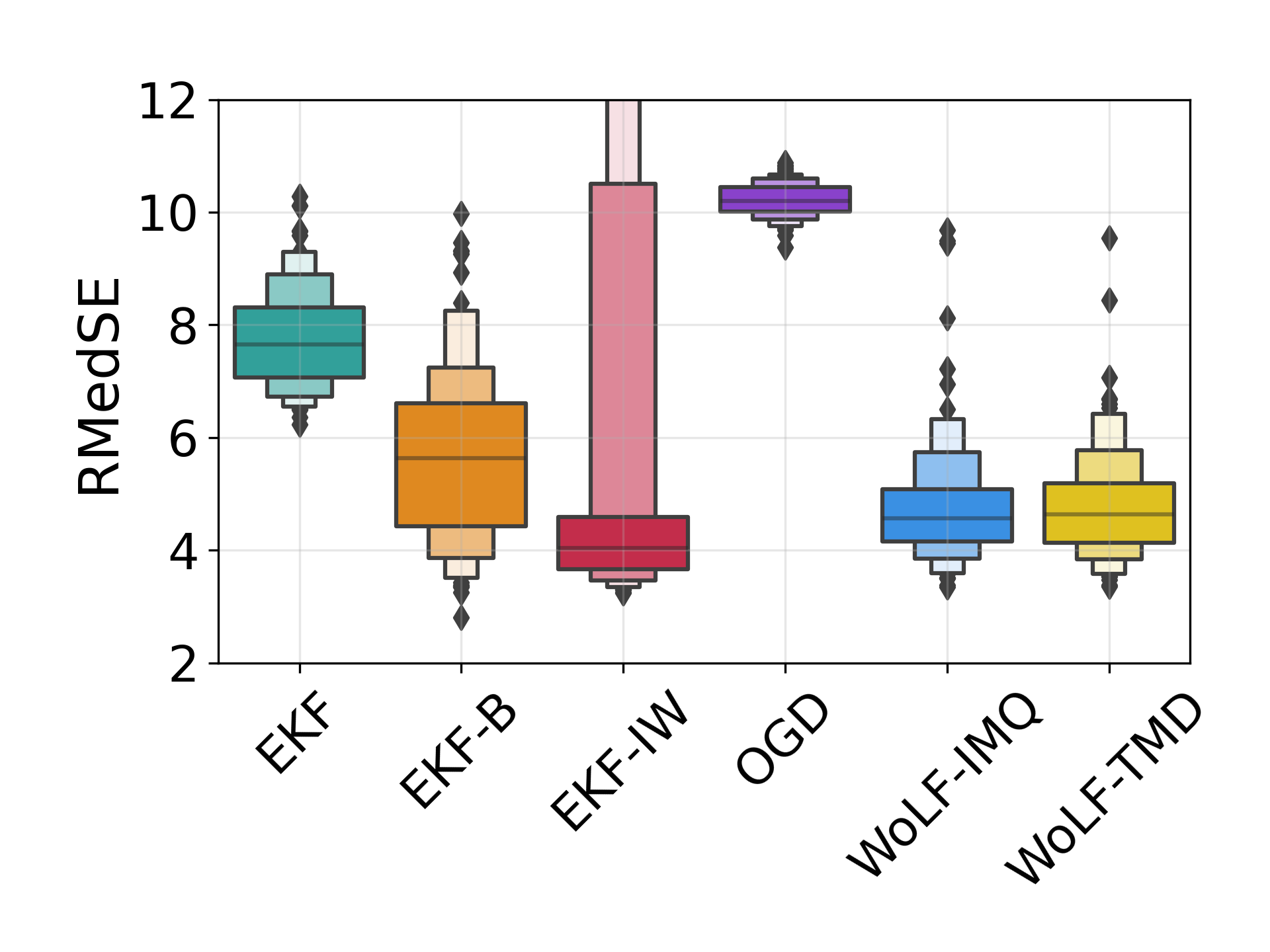}
    \removewhitespace[-6mm]
    \caption{
    Results with sorted data.
        Left panel shows a
        run of each filter on the 1d regression,
        with the true underlying
        data-generating function in solid black line
        and the next-step  predicted observation
        as dots.
        Right panel shows the
        RMedSE distribution over multiple trials.
    }
    \label{fig:online-mlp-example-run-sorted}
\end{figure}
We observe on the right panel that the \mWlfImq and the \mAgamenoniExtended have 
the lowest mean error and lowest standard deviation among the competing methods.
However, the \mAgamenoniExtended takes twice as long to run the experiment.
For all methods, the performance worse on the left-most side of the plot on the left panel,
which is a region with not enough data to determine whether a measurement is an inlier or an outlier.

Figure \ref{fig:online-mlp-example-run-unsorted} shows the results when data are presented in random order of $\vx_t$. 
\begin{figure}[htb]
    \centering
    \includegraphics[width=0.45\linewidth]{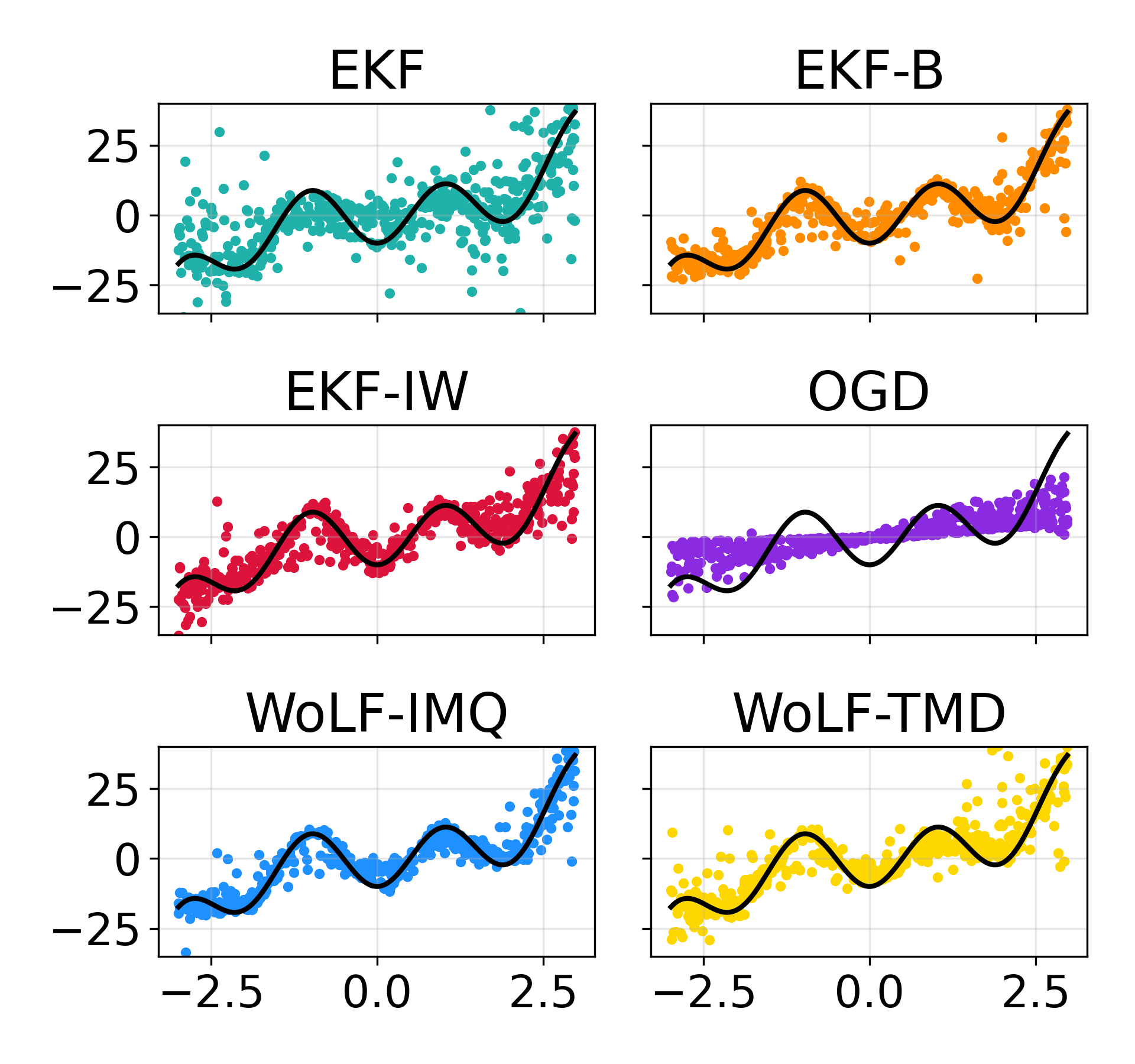}
    \includegraphics[width=0.45\linewidth]{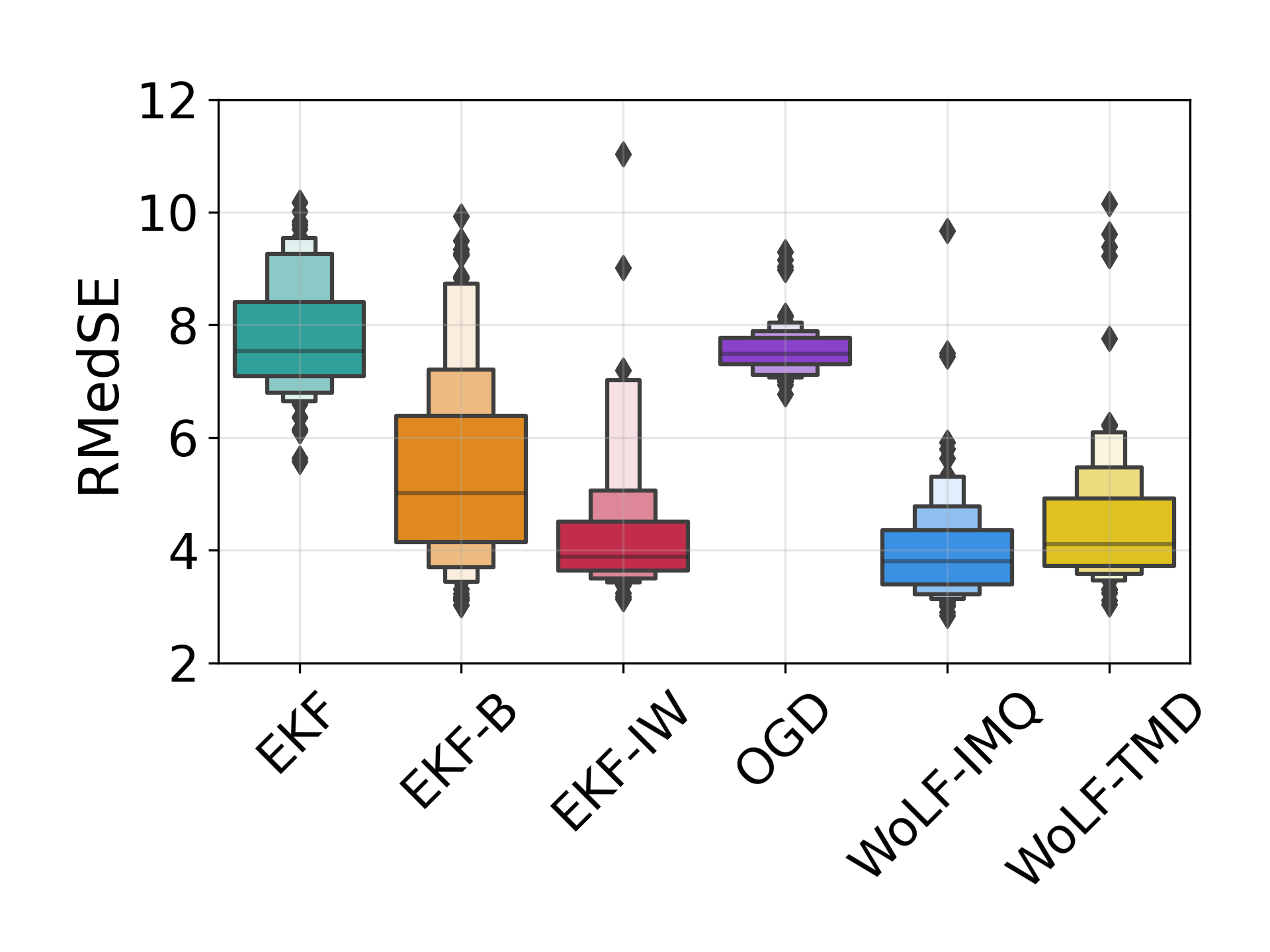}
    \removewhitespace[-6mm]
    \caption{
    Results with unsorted inputs.
        The left panel shows a run of each filter with the underlying
        data-generating function in solid black line
        and the next-step  predicted observation
        as dots.
        The right panel shows the distribution of $G_T$ for multiple runs.
        We remove all values of $G_T$ that have a value larger than 800.
    }
    \label{fig:online-mlp-example-run-unsorted}
\end{figure}
We show results for a single run on the left panel and the the RMedSE after multiple trials on the right panel.
Similar to the sorted configuration, we observe that the \mAgamenoniExtended and the \mWlfImq are the methods with lowest RMedSE.
However, the \mAgamenoniExtended has longer tails than the \mWlfImq.

\paragraph{True measurement model}

We modify the experiment above by taking the measurement function to be
$h_t(\vtheta_t) = h(\vtheta_t, x_t) = \vtheta_{t,1}x_t  - \vtheta_{t,2}\cos(\vtheta_{t,3} x_t\,\pi) + \vtheta_{t,4}x_t^3$,
with state $\vtheta_t \in \real^4$ and $\vtheta_{t,i}$ the $i$-th entry of the state vector $\vtheta_t$.
Figure \ref{fig:sorted-unsorted-clean-measurement} shows a single
run of the filtering process when the data is presented unsorted (left panel)
and sorted (right panel).
\begin{figure}[htb]
    \centering
    \includegraphics[width=0.45\linewidth]{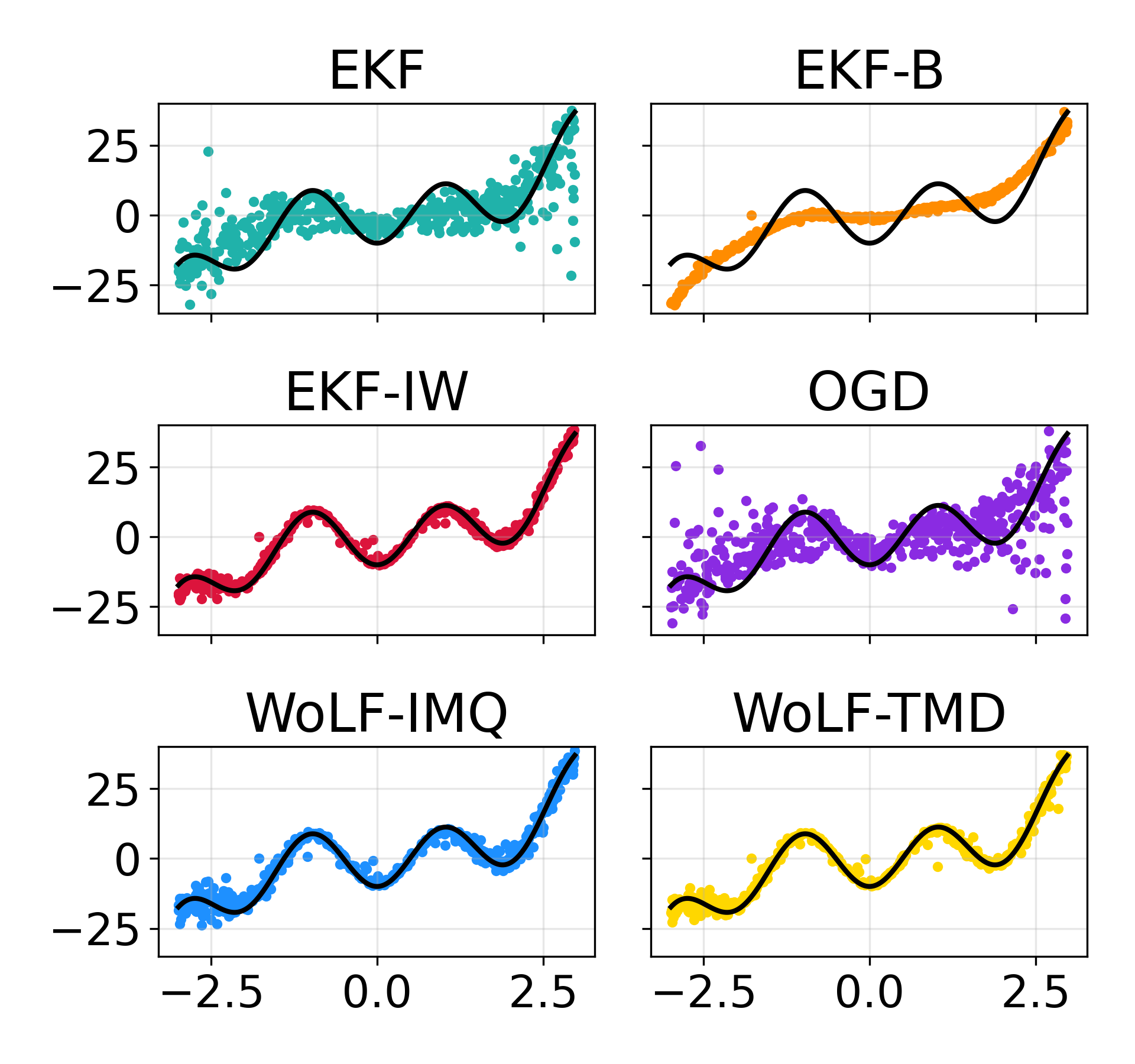}
    \includegraphics[width=0.45\linewidth]{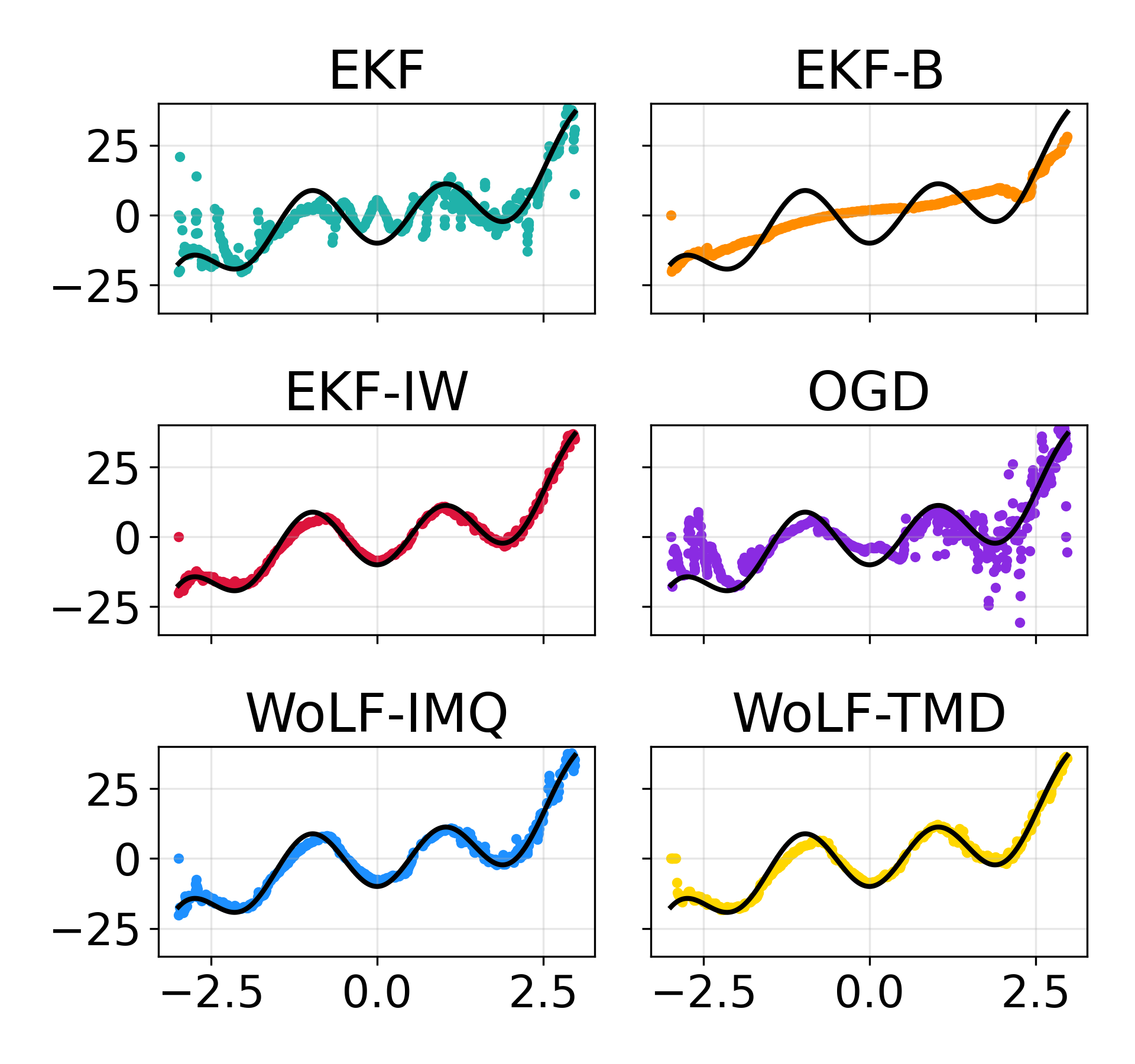}
    \removewhitespace[-6mm]
    \caption{
        The figure shows a run of each filter with the underlying
        data-generating function in solid black line
        and the evaluation of $h(\vmu_{t|t-1}, x_t)$ in points.
        The left panel shows the configuration with unsorted $x_t$ values and
        the right panel shows the configuration with sorted $x_t$ values.
    }
    \label{fig:sorted-unsorted-clean-measurement}
\end{figure}
We observe that the behaviour of the \mWlfImq, the \mWlfMd, and the \mAgamenoniExtended have similar performance.
However, the \mAgamenoniExtended takes twice the amount of time to run.
The \ogd and the \mkfExtended are not able to correctly filter out outlier measurement at the tails.
Finally, the \mWangExtended over-penalises inliers and does not capture the curvature of the measurement process.

\end{document}